
\documentclass{article}

\usepackage{microtype}
\usepackage{graphicx}
\usepackage{subcaption}
\usepackage{float}
\usepackage{booktabs} 
\usepackage[table,xcdraw]{xcolor}

\usepackage{hyperref}



\usepackage[accepted]{icml2025}

\usepackage{amsmath}
\usepackage{amssymb}
\usepackage{mathtools}
\usepackage{amsthm}

\usepackage[capitalize,noabbrev]{cleveref}

\theoremstyle{plain}
\newtheorem{theorem}{Theorem}[section]
\newtheorem{proposition}[theorem]{Proposition}
\newtheorem{lemma}[theorem]{Lemma}

\theoremstyle{definition}
\newtheorem{definition}[theorem]{Definition}

\theoremstyle{remark}

\usepackage[textsize=tiny]{todonotes}
\usepackage{multicol}
\usepackage{multirow}

\icmltitlerunning{Isotonic Normalization-Aware Multi-Class Calibration 2025}

\begin{document}

\twocolumn[
\icmltitle{Improving Multi-Class Calibration through Normalization-Aware Isotonic Techniques}



\icmlsetsymbol{equal}{*}

\begin{icmlauthorlist}
\icmlauthor{Alon Arad}{stat} 
\icmlauthor{Saharon Rosset}{stat} 
\end{icmlauthorlist}

\icmlaffiliation{stat}{Department of Statistics, Tel Aviv University}

\icmlcorrespondingauthor{Alon Arad}{alonarad1.mail.tau.ac.il}
\icmlcorrespondingauthor{Saharon Rosset}{saharon@tauex.tau.ac.il}

\icmlkeywords{Machine Learning, Calibration, Multi-class, Isotonic Regression, Uncertainty Estimation}

\vskip 0.3in
]



\printAffiliationsAndNotice{\icmlEqualContribution} 

\begin{abstract}
Accurate and reliable probability predictions are essential for multi-class supervised learning tasks, where well-calibrated models enable rational decision-making. While isotonic regression has proven effective for binary calibration, its extension to multi-class problems via one-vs-rest calibration produced suboptimal results when compared to parametric methods, limiting its practical adoption. In this work, we propose novel isotonic normalization-aware techniques for multi-class calibration, grounded in natural and intuitive assumptions expected by practitioners. Unlike prior approaches, our methods inherently account for probability normalization by either incorporating normalization directly into the optimization process (\textbf{NA-FIR}) or modeling the problem as a cumulative bivariate isotonic regression (\textbf{SCIR}). Empirical evaluation on a variety of text and image classification datasets across different model architectures reveals that our approach consistently improves negative log-likelihood (NLL) and expected calibration error (ECE) metrics.
\end{abstract}

\section{Introduction}
\label{introduction}
The rapid progress of artificial intelligence (AI) has greatly expanded both the trust in and the range of applications for AI models in real-world settings. These advancements have fundamentally reshaped fields requiring accurate predictions and informed decision-making, including healthcare, agriculture, finance, and retail. However, decision-making in these contexts often transcends simple notions of correctness, requiring a nuanced understanding of the underlying probabilities generated by the models. For practitioners, this means engaging critically with model outputs, assessing their reliability and implications within the broader decision-making framework. For instance, in healthcare, overestimating the likelihood of a disease may lead to unnecessary interventions, while underestimation risks delaying vital treatments.

Calibration is a term frequently used to describe this concept and serves a dual purpose in addressing these challenges. It is both a measure of the "reliability" of model outputs—evaluating the alignment between predicted probabilities and observed outcomes—and as a method for refining these outputs to enhance their practical utility. The concept of calibration has a long history, with foundational works \cite{Murphy1973A-new-vector-pa,Murphy1977Reliability-of-,DeGroot1981Assessing-proba} emphasizing the relationship between loss scores and calibration. Subsequent research \cite{ Platt1999Probabilistic-o, Zadrozny2001Obtaining-calib, Niculescu-Mizil2005Predicting-good} has examined the reliability of outputs from various machine learning algorithms and the application of post-hoc calibration methods, particularly focusing on methods designed for binary classification tasks. To address the challenges of multi-class calibration, it was suggested \cite{Zadrozny2002Transforming-cl} to decompose the problem into multiple binary classification tasks learnt only with class respective model output and binary label. To ensure that the predictions sum to 1 and form valid probability vector, the post-calibration predicted class values are normalized by dividing each by their sum.

Recent advancements in computational power and neural network architectures have introduced new challenges around over-confident models \cite{Guo2017On-calibration-, Kull2019Beyond-temperat}, particularly in quantifying and defining calibration in multi-class settings \cite{Vaicenavicius2019Evaluating-mode}. These works have highlighted the effectiveness of “simple” parametric approaches, such as Temperature Scaling, which naturally extend to multi-class scenarios without requiring decomposition binary sub-problems or additional normalization considerations, leaving traditional non-parametric methods somewhat sidelined.

However, later studies \cite{gupta2022top, Patel2020Multi-class-unc} have underscored the untapped potential of traditional techniques when applied with well suited assumptions that have proven useful on empirical datasets evaluation. Notably, these works focus on histogram binning (HB), despite the historical prominence of isotonic regression (IR) in binary classification tasks. Interestingly, both report state-of-the-art (SOTA) results for unnormalized predictions, meaning the predictions don't necessarily form a valid probability vector. This appears to paradoxically conflict with the fundamental goal of calibration: fostering trust in the reported probabilities.

In this work, we emphasize the importance of being \textit{Normalization-Aware} in non-parametric multi-class settings. Sections 2 and 3 provide a concise overview of the subject and prior literature. In Section 4 we introduce two IR based approaches that incorporate normalization directly into the problem formulation and are motivated by simply stated assumptions. The first approach modifies the optimization function explicitly (hence the term Normalization-Aware), while the second redefines the problem by addressing cumulative sub-problems instead of treating each class as an independent binary task. As demonstrated in Section 5, our proposed methods consistently improve both negative log-likelihood (NLL) and calibration error across diverse datasets, achieving SOTA results and reaffirming the effectiveness of IR for calibration. We believe this provides practitioners with a powerful non-parametric alternative in scenarios where parametric assumptions may be limiting.

\section{Problem Definition}

\subsection{Introduction to the Calibration Challenge}

We address supervised multi-class classification where  a classifier $\hat{p}$  learns to map inputs $X\in \mathbb{R}^d$  to finite set of classes $\{1,\dots,k\}$ using a training set $\mathcal{T}=\{(x_i,y_i)\}$ where each pair $(x_i,y_i)$ represents a realization of i.i.d random variables $(X,Y)$, with $Y$ one-hot encoded vector. The classifier outputs predictions $\hat{p}(X)$ that lie within the probability simplex $\Delta^{k-1}$ , defined as $
\Delta^{k-1} = \left\{ \mathbf{p} \in \mathbb{R}^k \;\middle|\; p_i \geq 0 \; \forall i, \; \sum_{i=1}^k p_i = 1 \right\}$.  In essence, calibration implies that if a set of samples is classified with an 80\% probability of belonging to a certain class, we would expect that, indeed, 80\% of those samples accurately fall into that class. This requirement is formalized as follows: 
\begin{definition}
A classifier $\hat{p}$ is called \textbf{calibrated} if  for any ${ q}\in\Delta^{k-1}$  it holds that $$\mathbb{E}(Y\mid\hat{p}(X)=q)=q 
$$
\end{definition}
\vspace{-0.5cm}
While this definition effectively captures the theoretical essence of calibration, its measurement is often impractical in multi-class scenarios ($k>4$) due to the extensive number of test points required for estimating it effectively. To address this complexity, a more attainable standard was proposed: 
\begin{definition}
    A classifier $\hat{p}$ is \textbf{class-wise calibrated} if \[
\mathbb{E}\big[Y_j \mid \hat{p}(X)_j\big] = \hat{p}(X)_j, \quad \forall j \in \{1, \dots, k\}.
\] \label{class-wise-calib-def}
\end{definition}
\vspace{-0.75cm}
As demonstrated by \citet{Vaicenavicius2019Evaluating-mode} being \textbf{class-wise calibrated} does not necessarily imply that the classifier is indeed \textbf{calibrated}, making it a weaker concept of calibration.
Additionally, another weaker notion of calibration proposed by \citet{Guo2017On-calibration-} has gained attention and popularity due to its strong practical motivation for addressing the question: \textit{"Can we trust the top predicted category score?"} It is defined as follows:\begin{definition}
    a classifier $\hat{p}$  is \textbf{confidence calibrated} if \[
\mathbb{E}\big[Y_{\arg\max \hat{p}(X)} \mid \max \hat{p}(X)\big] = \max \hat{p}(X)
\]
\label{confidence-calib-def}
\end{definition}
\vspace{-0.75cm}
Offering a practical and interpretable metric for trust in classification outputs.

\subsection{Error Measurement}
A commonly used method for assessing the alignment of probabilistic binary forecasts with actual outcome probabilities is using a Reliability plot (see \citealt{Murphy1977Reliability-of-,DeGroot1983The-comparison-,Niculescu-Mizil2005Predicting-good}) This approach involves dividing predictions within $[0,1]$ range into equal-width bins and comparing the model's predicted probabilities against the actual outcome statistics within each bin. While Reliability plot offers valuable visualization, single summary statistic is often preferred. In the binary case \citet{Naeini2015Obtaining-well-} define estimated calibration error (ECE) as the weighted sum of absolute differences in these bins. However, generalizing this measure is not straightforward, thus several definitions were offered. \citet{Nixon2019Measuring-Calib} suggested \textbf{cw-ECE} corresponding to the class-wise calibrated notion in \cref{class-wise-calib-def} and is estimated by averaging ECE computed independently for each class binary subproblem. Since this approach can overemphasize observations in small bins, they also propose \textbf{TECE}, which applies thresholded equal-mass binning to each class before averaging the resulting estimates. For the confidence calibrated notion in \cref{confidence-calib-def}, the \textbf{conf-ECE} estimator \cite{Guo2017On-calibration-,Kull2019Beyond-temperat} is computed by applying binning and binary labeling based on the maximum predicted probability. The corresponding Reliability plot is known as Confidence Reliability plot. Alternative methods based on kernel density estimation (KDE) have been proposed \cite{zhang2020mix, widmann2019calibration,marx2024calibration} but are less frequently used in practice. 
\paragraph{Proper Scoring Rules.} Evaluation measures for probabilistic forecasts can be attributed back to \citet{Brier1950Verification-of} and \citet{Good1952Rational-decisi} where the main concern was creating a scoring rule to encourage the forecaster to be honest (see \citealt{gneiting2007strictly} for full characterization of proper scoring rules). \begin{definition}
    A scoring rule $S:\Delta^{k-1}\times\{1,\dots,k\}\rightarrow{\mathbb{R} \cup \{\pm\infty\}
}$  is called proper if $$\arg \min _p \mathbb{E}_{Y \sim q}\big[S(p,Y)]=q$$
\end{definition}
\vspace{-0.5cm}
It is easy to verify that both NLL and Brier score are proper scoring rules. Interestingly,  as shown by \citet{Brocker2009Reliability-suf} any proper scoring rule can be decomposed into two components: The first, refinement, quantifies the variance within $Y$ given a model’s prediction, essentially capturing the model’s precision. The second, calibration/reliability, assesses how well the model’s predictions align with the actual outcomes. \label{proper-decomposition}
\begin{figure*}[tbhp]
    \centering
    \includegraphics[width=\linewidth,
                    height=0.275\textheight,
                    keepaspectratio
    ]{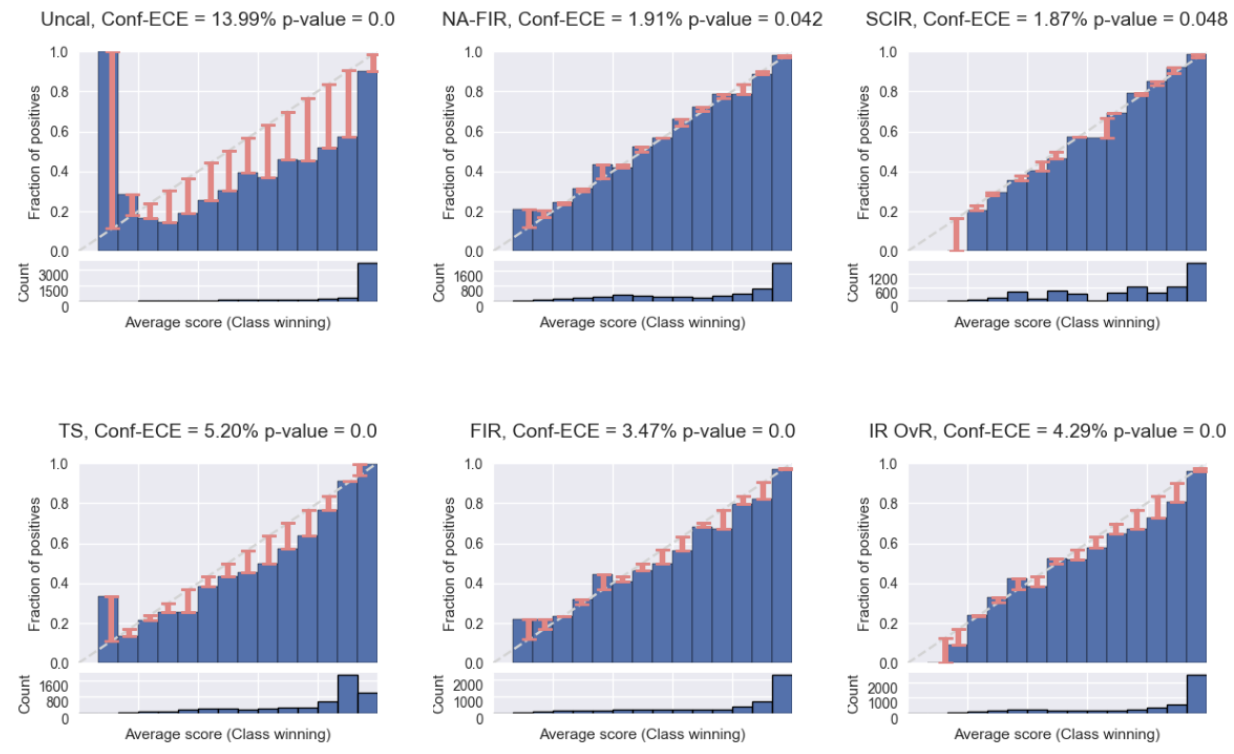}
    \caption{Confidence Reliability plots for NG20 BERT-Large-Uncased classifier, p-value is calcluated as suggested by \citet{Vaicenavicius2019Evaluating-mode} under the null of perfectly calibrated procedure. As can be seen the uncalibrated model is highly over-confident and both our suggested methods are the only ones where we get positive p-value.}
\label{fig:reliability-plots}
\end{figure*}

\section{Post-Hoc Calibration Methods} \label{post-hoc-calib-methods}
Post-hoc calibration is typically performed using either a hold-out calibration set or the validation set. 
Let $\mathcal{V} = \{(x_i,y_i)\}_{i=1}^m$ represent an independent calibration set (i.i.d) separate from the training set $\mathcal{T}$. For clarity,  let $\hat{g}$ denote the fitted post-hoc model, trained on calibration set $\mathcal{V}_{\hat{p}}=\{(\hat{p}(x_i),y_i)\mid(x_i,y_i)\in\mathcal{V}\}$  We will refer to $\hat{g}$ as the \textbf{calibration map}.
\subsection{Parametric Methods}
\citet{Platt1999Probabilistic-o}  introduced a logistic-regression-based calibration for binary predictions, motivated by the empirical fit of a sigmoid to SVM outputs. Later, \citet{Kull2017Beta-calibratio} demonstrated that the logistic relation assumption holds in the binary case when the distributions of model outputs conditioned on the outcome are Gaussian. This insight led to their suggested binary calibrator extension named Beta calibration modeling conditional distribution using Beta distribution. For multi-class neural network settings, \citet{Guo2017On-calibration-} proposed three natural extensions: Matrix Scaling (\textbf{MS}), Vector Scaling (\textbf{VS}) and Temperature Scaling (\textbf{TS}). MS essentially applies multinomial logistic regression to the logits $z(x_i)$ input prior to the final softamx layer $\sigma_{SM}$ while VS is the specific case where the transformation matrix is constrained to be diagonal. TS simplifies the calibration approach further by enforcing a uniform scaling factor $T$ across all logits and is fitted by solving the following NLL minimization optimization problem: 
\begin{equation}
\hat{T}=\arg\min_{T}\sum_{i=1}^m\sum_{l=1}^k-(y_i)_l\cdot log(\sigma_{SM}(\frac{z(x_i)}{T})_l)
\label{eq:Temperature-scaling}  
\end{equation}
The corresponding calibration map is then given by $\hat{g}(z(x_i))=\sigma_{SM}(\frac{z(x_i)}{\hat{T}})$. Despite its simplicity, TS has consistently demonstrated strong performance on conf-ECE and remains a preferred method for post-hoc calibration. It has further inspired several variations, including \citealt{Mozafari2018Attended-temper, Ji2019Bin-wise-temper, Ding2021Local-temperatu}. \citet{Kull2019Beyond-temperat} proposed Dirichlet calibration as a natural extension to Beta calibration, noting slight differences from MS. 

\subsection{Non-Parametric Methods}
The two prevalent non-parametric methods are \textbf{HB}  and \textbf{IR}. HB introduced by \citet{Zadrozny2001Reducing-multic} for binary classification. It divides  $\mathcal{V}_{\hat{p}}$  into $B$  pre-determined quantiles based on $\hat{p}(x_i)$ values, the estimate for each bin is computed as the observed proportion in the calibration set. The method is considered non-parametric since it makes no distributional assumptions. \citet{Naeini2015Obtaining-well-} suggested a further improvement to HB by employing a Bayesian approach to aggregate multiple binning choices. IR, proposed by \cite{Zadrozny2002Transforming-cl} fits a calibration map $\hat{g}$ under the sole assumption of monotonicity. Formally, IR is characterized as
\begin{equation}
\begin{aligned}    
\hat{g} &=\arg\min_g \sum_{i=1}^m\big (g(\hat{p}(x_i))-y_i \big)^2 \\
\text{s. t } & g(\hat{p}(x_i))\leq g(\hat{p}(x_j)) \text{ when } x_i\preceq x_j
\end{aligned}
\label{Isotonic_regression}
\end{equation}

In the binary 1-D case, the problem is solved by first ordering the data and then applying PAVA (Pool Adjacent Violators Algorithm \citealt{Ayer1955An-empirical-di}) fitting a piecewise constant function to the calibration map with no further assumptions or parameters to specify. Moreover, it can be shown that the fitted $\hat{g}$ is the solution for any proper scoring rule, proof  and further details are supplied in \cref{iso-reg-appendix}.

\subsection{Non-Parametric Multi-Class Extensions and Assumptions}
\textbf{OvR}. Both HB and IR can be extended to multi-class settings by decomposing the problem into $k$ one-vs-rest binary subproblems and normalizing the prediction scores by dividing them by their sum, we refer to these methods as IR-OvR and HB-OvR respectively. This induction is justified when one assumes \textbf{Category Independence}, formally expressed as $P(Y_i\mid \hat{p}(X))= P(Y_i\mid\hat{p}(X)_i)$.  

\textbf{sCW}. To address inefficiencies arising from small class priors or datasets, \citet{Patel2020Multi-class-unc} introduce Shared Class-Wise (sCW) training, which merges binary sub-problems for classes with similar priors. When all sub-problems are trained jointly we name the underlying assumption \textbf{Permutation Invariance} formally assuming $P(Y_i\mid \hat{p}(X))= P(Y_{\sigma(i)}\mid\sigma(\hat{p}(X)))$ where $\sigma$ is a permutation defined such that $\sigma(\hat{p}(X))_j=\hat{p}(X)_{\sigma^{-1}(j)}$. Applying Permutation Invariance to a calibration set in our settings involves flattening all the predictions, thus, we term Flattened Isotonic Regression (\textbf{FIR}) to be the result of applying sCW training to all classes together:
\begin{equation}    
\begin{aligned}
\tilde{g}_{FIR}  =\arg\min _{g}&-\sum_{i=1}^m\sum_{l=1}^{k}(y_i)_l\cdot log\big({g(\hat{p}(x_i)_l\big)} \\ +&\big(1-(y_i)_l)\cdot log(1-g(\hat{p}(x_i)_l)\big)
\end{aligned}
\label{eq:FIR-optimization}
\end{equation}
for monotonic function $g$. \citet{Patel2020Multi-class-unc} previously reported FIR without normalization, whereas \citet{zhang2020mix} has applied the straightforward correction for Category Independence assumption: \[
\hat{g}_{FIR}(p) = 
\left( 
\frac{\tilde{g}_{FIR}(p_1)}{\sum_{l=1}^k \tilde{g}_{FIR}(p_l)}, 
\dots, 
\frac{\tilde{g}_{FIR}(p_k)}{\sum_{l=1}^k \tilde{g}_{FIR}(p_l)} 
\right)
\] 
Notably, FIR is \textbf{Order Preserving}, meaning $ p_i\geq p_j\rightarrow{} \hat{g}(p)_i\geq \hat{g}(p)_j \text{ for all } p\in \Delta^{k-1}$.
These types of structural assumptions have also been outlined in the context of parametric models \citep{rahimi2020intra}.

Alternatively \citet{gupta2022top} proposed a M2B framework, essentially suggesting to align calibration method with calibration error objection. For example, when confidence calibrated (\cref{confidence-calib-def}) is of interest, HB should be trained using a calibration set containing only the top predicted probability and a $\{0, 1\}$ encoding to indicate whether the top predicted class matches the actual class. Notably, they report SOTA performance without normalizing the output predictions.

\subsection{Other Related Work}
It is worth noting that various calibration approaches have been proposed, including post hoc methods (see \citealt{silva2023classifier} for a more comprehensive discussion), regularization techniques, implicit calibration strategies, and uncertainty estimation methods. For a comprehensive review of these methods in the context of deep learning, see \citet{Abdar2021A-review-of-unc, gawlikowski2023survey}. 
\vspace{-0.25cm}
\begin{table}[tbhp]
\caption{Multi-Class extensions and their underlying assumptions}
\label{sample-table}
\vskip -0.5in
\begin{center}
\begin{small}
\begin{sc}
\renewcommand{\arraystretch}{1.5}
\resizebox{\columnwidth}{!}{%

\begin{tabular}{m{4cm}ccccc|cc}

\multirow{2}{*}{Calibration Method} & \multicolumn{5}{c}{Compared Methods}                & \multicolumn{2}{c}{Our Methods} \\
                                    & MS       & VS       & TS      & IR OvR   & FIR      & SCIR                 & NA-FIR             \\ \hline
Order Preserving                    & $\times$ & $\times$ & $\surd$ & $\times$ & $\surd$  & $\times$             & $\surd$            \\ \hline
Permutation Invariant& $\times$ & $\times$ & $\surd$ & $\times$ & $\surd$  & $\surd$              & $\surd$            \\ \hline
Relaxing Category Independence      & $\surd$  & $\surd$  & $\surd$ & $\times$ & $\times$ & $\surd$              & $\surd$            \\ \hline
\end{tabular}%
}
\end{sc}
\end{small}
\end{center}
\vskip -0.1in
\end{table}

\begin{table*}[t!]
\caption{Average ranking comparison between different calibration methods across different datasets and various metrics. Percentages indicating how often a method achieved the lowest score for a given metric across the tested models. The best-performing calibrator is highlighted in bold.}
\begin{Large}
\begin{center}
\begin{sc}
\label{tab:avg-ranking}
\renewcommand{\arraystretch}{1.2}
\vskip 0.15in
\resizebox{1.8\columnwidth}{!}{%
\begin{tabular}{|l|l|cccccccc|}
\hline
\multirow{2}{*}{\textbf{Metric}}   & \multirow{2}{*}{\textbf{Dataset}} & \multicolumn{8}{|c|}{\textbf{Average Ranking}}                                                                      \\ \cline{3-10} 
                          &                          & \textbf{NA-FIR}     & \textbf{SCIR}        & \textbf{FIR}        & \textbf{IR OvR}     & \textbf{TS }        & \textbf{VS }        & \textbf{MS }        & \textbf{Uncalibrated} \\ \hline
\multirow{6}{*}{NLL}      & 20NG                     & \textbf{1.4 (70\%)} & 6.0 (0\%)            & 2.8 (10\%)          & 7.3 (0\%)           & 3.3 (10\%)          & 3.2 (10\%) & 5.3 (0\%)           & 6.7 (0\%)    \\ \cline{2-10} 
                          & CIFAR10                  & \textbf{1.2 (75\%)} & 1.8 (25\%)           & 4.0 (0\%)           & 7.0 (0\%)           & 5.8 (0\%)           & 4.8 (0\%)  & 3.5 (0\%)           & 8.0 (0\%)    \\ \cline{2-10} 
                          & CIFAR100                 & \textbf{1.5 (62\%)} & 4.4 (15\%)           & 3.3 (0\%)           & 7.6 (0\%)           & 2.8 (8\%)           & 3.8 (8\%)  & 6.7 (8\%)           & 5.9 (0\%)    \\ \cline{2-10}
                          & food101                  & 2.5 (0\%)           & 5.8 (0\%)            & 4.0 (0\%)           & 7.9 (0\%)           & \textbf{1.6 (62\%)} & 2.9 (38\%) & 7.0 (0\%)           & 4.4 (0\%)    \\ \cline{2-10} 
                          & ImageNet             & \textbf{1.0 (100\%)} & 4.9 (0\%)           & 2.0 (0\%)  & 8.0 (0\%) & 3.4 (0\%)      & 5.1 (0\%)  
                                    & 5.9 (0\%)  & 5.8 (0\%)             \\ 
                          \cline{2-10}
                          & R52                      & \textbf{1.7 (50\%)} & 5.1 (0\%)            & 3.2 (10\%)          & 8.0 (0\%)           & 2.1 (30\%)          & 5.1 (10\%) & 6.7 (0\%)           & 4.1 (0\%)    \\ \cline{2-10} 
                          & YR                       & 3.2 (14\%)          & 5.3 (5\%)            & 5.2 (0\%)           & 5.2 (0\%)           & 4.9 (0\%)           & 3.2 (5\%)  & \textbf{1.5 (77\%)} & 7.6 (0\%)    \\ \hline
\multirow{6}{*}{conf-ECE} & 20NG                     & \textbf{2.4 (10\%)} & 2.6 (60\%)           & 3.1 (30\%)          & 3.9 (0\%)           & 6.9 (0\%)           & 4.4 (0\%)  & 4.9 (0\%)           & 7.8 (0\%)    \\ \cline{2-10} 
                          & CIFAR10                  & 4.8 (0\%)           & \textbf{1.0 (100\%)} & 5.8 (0\%)           & 5.8 (0\%)           & 2.8 (0\%)           & 4.0 (0\%)  & 4.0 (0\%)           & 8.0 (0\%)    \\ \cline{2-10} 
                          & CIFAR100                 & 2.5 (8\%)           & \textbf{1.2 (85\%)}  & 4.7 (0\%)           & 5.8 (0\%)           & 3.1 (0\%)           & 4.0 (8\%)  & 7.4 (0\%)           & 7.3 (0\%)    \\ \cline{2-10} 
                          & food101                  & \textbf{2.6 (25\%)} & 3.2 (25\%)           & 4.4 (0\%)           & 4.9 (12\%)          & 2.8 (38\%)          & 3.4 (0\%)  & 7.6 (0\%)           & 7.1 (0\%)    \\ \cline{2-10} 
                          & ImageNet            &\textbf{1.4 (75\%)}	&2.0 (25\%)	& 3.0 (0\%)	
                          & 5.5 (0\%)	& 4.0 (0\%)	& 6.1 (0\%) &	7.4 (0\%)	& 6.6 (0\%)       \\ 
                          \cline{2-10}
                          & R52                      & \textbf{2.6 (20\%)} & 2.9 (50\%)           & 3.6 (10\%)          & 6.3 (0\%)           & 4.8 (0\%)           & 3.8 (10\%) & 5.1 (10\%)          & 6.9 (0\%)    \\ \cline{2-10} 
                          & YR                       & \textbf{3.0 (18\%)} & 3.2 (41\%)           & 4.1 (9\%)           & 3.4 (14\%)          & 6.0 (0\%)           & 4.8 (5\%)  & 3.7 (14\%)          & 7.9 (0\%)    \\ \hline
\multirow{6}{*}{BS}       & 20NG                     & \textbf{2.1 (40\%)} & 4.3 (0\%)            & 2.8 (20\%)          & 2.7 (30\%)          & 4.9 (0\%)           & 4.9 (10\%) & 7.0 (0\%)           & 7.3 (0\%)    \\ \cline{2-10} 
                          & CIFAR10                  & 2.2 (0\%)           & \textbf{1.0 (100\%)} & 3.2 (0\%)           & 3.5 (0\%)           & 6.2 (0\%)           & 5.5 (0\%)  & 6.2 (0\%)           & 8.0 (0\%)    \\ \cline{2-10} 
                          & CIFAR100                 & \textbf{2.1 (31\%)} & 2.5 (46\%)           & 3.6 (0\%)           & 5.3 (0\%)           & 3.8 (8\%)           & 4.6 (8\%)  & 7.4 (8\%)           & 6.7 (0\%)    \\ \cline{2-10} 
                          & food101                  & 3.1 (12\%)          & 4.6 (0\%)            & 3.0 (0\%)           & 5.6 (12\%)          & \textbf{2.4 (38\%)} & 3.6 (38\%) & 7.8 (0\%)           & 5.9 (0\%)    \\ \cline{2-10} 
                          & ImageNet            &\textbf{1.2 (75\%)}	&3.4 (0\%)	&2.4 (0\%)	&6.5 (0\%)	&4.1 (12\%)	&5.8 (12\%)	&6.6 (0\%)	&6.0 (0\%) \\ 
                          \cline{2-10}
                          & R52                      & \textbf{2.4 (50\%)} & 4.9 (10\%)           & 2.9 (0\%)           & 5.7 (0\%)           & 3.9 (20\%)          & 3.9 (20\%) & 6.8 (0\%)           & 5.5 (0\%)    \\ \cline{2-10} 
                          & YR                       & 3.8 (9\%)           & 6.0 (5\%)            & 4.8 (0\%)           & 3.0 (5\%)           & 5.7 (0\%)           & 3.1 (5\%)  & \textbf{1.8 (77\%)} & 7.8 (0\%)    \\ \hline
\multirow{6}{*}{cw-ECE}   & 20NG                     & 4.0 (0\%)           & 3.8 (10\%)           & 3.7 (10\%)          & \textbf{1.6 (60\%)} & 7.0 (0\%)           & 5.0 (0\%)  & 3.1 (20\%)          & 7.8 (0\%)    \\ \cline{2-10} 
                          & CIFAR10                  & 4.2 (0\%)           & \textbf{2.0 (50\%)}  & 5.5 (0\%)           & 2.5 (25\%)          & 7.0 (0\%)           & 4.2 (0\%)  & 2.5 (25\%)          & 8.0 (0\%)    \\ \cline{2-10} 
                          & CIFAR100                 & 2.6 (23\%)          & \textbf{2.4 (46\%)}  & 4.0 (8\%)           & 4.2 (8\%)           & 3.7 (0\%)           & 4.7 (15\%) & 7.3 (0\%)           & 7.1 (0\%)    \\ \cline{2-10} 
                          & food101                  & 3.0 (0\%)           & 3.2 (38\%)           & 3.9 (0\%)           & 5.6 (0\%)           & \textbf{2.5 (25\%)} & 3.8 (38\%) & 7.4 (0\%)           & 6.6 (0\%)    \\ \cline{2-10} 
                          & ImageNet            &3.6 (0\%)	&6.1 (0\%)	&3.1 (12\%)	&4.4 (0\%)	&\textbf{2.0 (50\%)}	&4.1 (25\%)	&5.4 (12\%)	&7.2 (0\%)\\ 
                          \cline{2-10}
                          & R52                      & 3.8 (0\%)           & 5.0 (0\%)            & \textbf{2.7 (40\%)} & 4.5 (10\%)          & 4.7 (10\%)          & 2.8 (40\%) & 7.2 (0\%)           & 5.3 (0\%)    \\ \cline{2-10} 
                          & YR                       & 4.0 (18\%)          & 4.7 (0\%)            & 4.7 (0\%)           & \textbf{2.3 (14\%)} & 6.4 (0\%)           & 4.1 (0\%)  & 2.1 (68\%)          & 7.7 (0\%)    \\ \hline
\multirow{6}{*}{TECE}     & 20NG                     & 3.4 (20\%)          & 3.1 (10\%)           & 4.5 (10\%)          & \textbf{2.9 (30\%)} & 5.6 (0\%)           & 5.0 (20\%) & 3.7 (10\%)          & 7.8 (0\%)    \\ \cline{2-10} 
                          & CIFAR10                  & 2.8 (25\%)          & \textbf{1.8 (50\%)}  & 4.5 (0\%)           & 2.5 (25\%)          & 6.8 (0\%)           & 4.8 (0\%)  & 5.0 (0\%)           & 8.0 (0\%)    \\ \cline{2-10} 
                          & CIFAR100                 & 2.2 (23\%)          & \textbf{1.7 (62\%)}  & 4.3 (0\%)           & 4.5 (0\%)           & 4.5 (0\%)           & 4.6 (8\%)  & 7.6 (0\%)           & 6.6 (8\%)    \\ \cline{2-10} 
                          & food101                  & \textbf{2.5 (12\%)} & 2.6 (25\%)           & 3.0 (25\%)          & 5.8 (12\%)          & 4.0 (12\%)          & 4.8 (12\%) & 7.6 (0\%)           & 5.8 (0\%)    \\ \cline{2-10} 
                          & ImageNet  &2.9 (12\%)	&\textbf{2.1 (50\%)}	&2.5 (0\%)	&8.0 (0\%)	&5.1 (0\%)	&6.4 (0\%)	&6.1 (0\%)	&2.9 (38\%) \\
                          \cline{2-10}
                          & R52                      & 6.1 (0\%)           & \textbf{2.7 (30\%)}  & 5.3 (10\%)          & 3.6 (10\%)          & 5.1 (0\%)           & 3.6 (10\%) & 3.2 (40\%)          & 6.4 (0\%)    \\ \cline{2-10} 
                          & YR                       & 3.9 (18\%)          & 5.3 (0\%)            & 4.7 (5\%)           & 2.7 (14\%)          & 6.0 (0\%)           & 3.5 (9\%)  & \textbf{2.0 (55\%)} & 7.7 (0\%)    \\ \hline

\end{tabular}%
}
\end{sc}
\end{center}
\end{Large}
\vskip -0.1in
\end{table*}

\section{Normalization Aware IR Methods} \label{NA-IR-methods}
\paragraph{Motivation.} Examining the previously proposed non-parametric multi-class extensions and their respective assumptions reveals the key weakness we want to address -  they all are trained without taking into consideration their actual normalized outcomes. 

We believe that providing valid probability predictions is essential for building trust among practitioners, especially in decision-making frameworks. However, most existing non-parametric calibration methods for multi-class settings operate under the assumption of category independence, where each class is calibrated separately and if predictions are normalized it is only done post calibration. 

This modeling choice introduces a key limitation as it is known that such marginal (classwise) calibration is strictly weaker than full (canonical) calibration \citep{Vaicenavicius2019Evaluating-mode}. For example both prediction vectors [0.8, 0.2,0] and [0.8, 0.1, 0.1] share the same top-1 prediction but their underlying uncertainty profiles can differ substantially. 

While we aim to relax the Category Independence assumption, we justify Order Preserving assumption as a desirable property. As underlying models improve, it becomes increasingly reasonable for a calibration method to preserve the ranking of predicted probabilities from the original model. 

Furthermore, the traditional OvR approach suffers from high variance as data is sparse in critical areas, particularly for overconfident models in scenarios with limited calibration data and many classes. We find it reasonable to assume that a model's overconfidence or under-confidence exhibits similar characteristics across different classes effectively assuming Permutation Invariance. 

Thus, we suggest two alternatives that aim to leverage the empirical observations from sCW and M2B frameworks, yet are designed to incorporate normalization within the problem formulation.

\subsection{Normalized Aware Flattened Isotonic Regression}  
\label{subsec:FIR-and-NA-FIR}

\textbf{NA-FIR} is what we believe to be the most appropriate non-parametric analogue to TS. Since NLL is a proper scoring rule, minimization incentivize calibration improvement (see stated decomposition in \cref{proper-decomposition}). Thus we formulate the problem in a straightforward manner:
\paragraph{Formulation.} 
Denote the isotonic function class as $\mathcal{G}=\{g:[0,1]\rightarrow{\mathbb{R^+}} \mid  g(u)\leq g(v) \text{ if } u\leq v\}$ we define NA-FIR NLL minimization problem as follows:
\begin{equation}  
\label{eq:NA-FIR-optimization}
\resizebox{0.9\columnwidth}{!}{$
\begin{aligned}
\tilde{g}_{_\text{NA-FIR}}=\arg\min _{g\in\mathcal{G}}\sum_{i=1}^m\sum_{l=1}^{k}-(y_i)_l\cdot \log
\left(
\frac{g(\hat{p}(x_i)_l)}{\sum_{j=1}^kg(\hat{p}(x_i)_j)}
\right)
\end{aligned}
$} 
\end{equation}

and, accordingly: 
\begin{equation} 
\resizebox{0.9\columnwidth}{!}{$
\begin{aligned}
\hat{g}_{\text{NA-FIR}}(p)=
\left(
\frac{\tilde{g}_{\text{NA-FIR}}(p_1)}{\sum_{l=1}^k\tilde{g}_{\text{NA-FIR}}(p_l)},\dots,\frac{\tilde{g}_{\text{NA-FIR}}(p_k)}{\sum_{l=1}^k\tilde{g}_{\text{NA-FIR}}(p_l)}
\right)
\end{aligned}
$}
\end{equation}

Note that while TS is applied to the softmax function (\cref{eq:Temperature-scaling}), the only difference in our normalized aware isotonic formulation lies in the use of derived probabilities instead of direct logits, which is not a fundamental change as the exponent is a monotonic function.

Both FIR and NA-FIR assume Permutation Invariance and ensure Order Preservation for every prediction point. But while FIR optimization (\cref{eq:FIR-optimization}) doesn't account for the collateral effects of the fitted isotonic values, NA-FIR optimization (\cref{eq:NA-FIR-optimization})  incorporates the normalization term, thereby relaxing the limitations of the Category Independence assumption.

\textbf{Fitting Process. } It is important to highlight that this problem is no longer convex, which means that PAVA cannot be directly applied as a solution. Consequently, we explored alternative methods for searching constrained local minima. Given that PAVA when applied to the non-normalized isotonic problem, identifies the optimal bin boundaries for minimizing the NLL in the flattened binary case, it provides both a reasonable initialization point and an opportunity to reduce the effective search space. Leveraging the isotonic block structure identified by PAVA allowed us to include more thorough search techniques, as outlined in our proposed algorithm \cref{alg:mcmc-blockwise-flatten-isotonic}. The algorithm employs Markov Chain Monte Carlo (MCMC) optimization, also known as simulated annealing, achieving stable convergence while still allowing the practitioner to specify both the minimal number of bins and the granularity of fitted values. Further details can be found in \cref{appendix:alg-NA-FIR}.  

\begin{algorithm}[ht]
\caption{MCMC Blockwise Normalized Aware Flattened Isotonic Optimization}
\footnotesize
\label{alg:mcmc-blockwise-flatten-isotonic}
\begin{algorithmic}[1]
\STATE \raggedright{\textbf{Input:} Data $X, Y$, \\ hyperparameters $\epsilon_{\text{change}}, \beta, \text{num\_iterations}$, \\  $\text{min\_blocks}, \text{split\_size\_threshold}$}
\STATE Flatten $X$ and $y$: $x_{\text{flatten}} = X.\text{flatten()}$, $y_{\text{flatten}} = Y.\text{flatten()}$
\STATE Initialize block structure and fits: $[\text{blocks}, \text{block\_fits}] \gets \text{PAVA}(x_{\text{flatten}}, y_{\text{flatten}})$
\STATE Refine block structure: $[\text{blocks}, \text{block\_fits}] \gets \text{split\_blocks}(\text{blocks}, \text{min\_blocks}, \text{split\_size\_threshold})$
\FOR{$i = 1$ \textbf{to} $\text{num\_iterations}$}
    \STATE Select a random block: $\text{block} \sim \mathcal{U}(1, \text{len}(\text{blocks}) )$
    \STATE Perturb block fit: $\delta  \sim \mathcal{U}(\{-1, 1\})$, $\text{change} \gets \epsilon_{\text{change}} \cdot \delta$
    \STATE Update suggested fit: $\text{suggested\_block\_fit} \gets \text{block\_fits}[\text{block}] + \text{change}$
    \STATE Validate isotonicity of $\text{suggested\_block\_fit}$
    \IF{$\text{likelihood} > \text{current\_likelihood}$}
        \STATE Update: $\text{best\_likelihood} \gets \text{likelihood}$, $\text{best\_fit} \gets \text{suggested\_block\_fits}$
    \ENDIF
    \IF{$\text{likelihood} > \text{current\_likelihood}$ \textbf{or} $\text{rand()} < \exp(\beta \cdot (\text{likelihood} - \text{current\_likelihood}))$}
        \STATE Accept: $\text{block\_fits} \gets \text{suggested\_block\_fits}$, $\text{current\_likelihood} \gets \text{likelihood}$
    \ENDIF
\ENDFOR
\STATE \textbf{Return:} $\text{best\_block\_fits}$
\end{algorithmic}
\end{algorithm}

\begin{figure*}[ht!]
\vskip 0.2in
\begin{center}
\begin{subfigure}{0.48\textwidth}
    \centering
        \includegraphics[width=\textwidth,
                        height=0.2\textheight,
                        keepaspectratio
                        ]{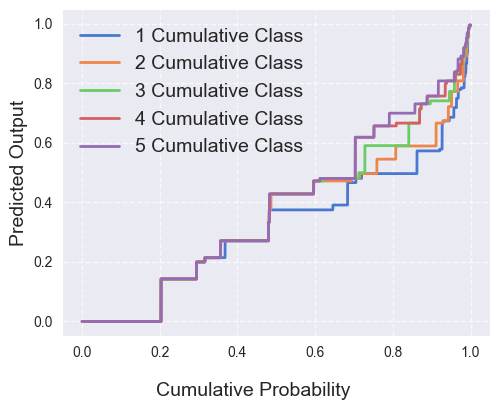} 
        \caption{BERT-large-uncased SCIR fit}
\end{subfigure}
\begin{subfigure}{0.48\textwidth}
    \centering
        \includegraphics[width=\textwidth,
                        height=0.2\textheight,
                        keepaspectratio
                        ]{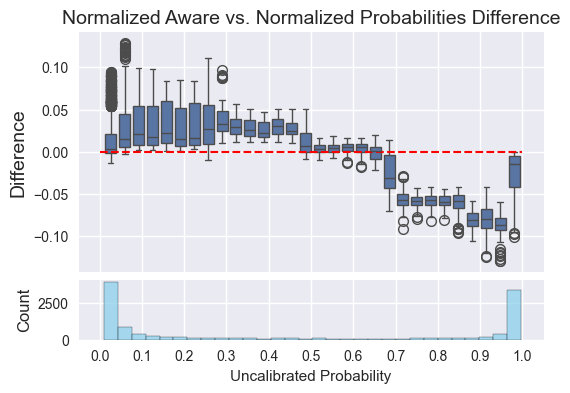} 
        \caption{BERT-large-uncased NA-FIR fits v.s FIR fits on test data}
    \end{subfigure}
\caption{Both plots are based on the NG20 dataset with trained BERT-large-uncased classifer. The left plot illustrates fitted calibration curves for the first 5 ranks cumulative trained models, where the final prediction for each rank (cumulative class) is calculated as the difference between its cumulative value and the previous one. The right plot provides insights into the effect on test predicted probabilities comparing NA-FIR to FIR predictions as function of Uncalibrated predictions that were thresholded to lie within the $[0.01, 1]$ range and subsequently binned into 30 equal-width intervals. The lower chart depicts the corresponding bin sizes.
}
\label{fig:Normalization Aware Impact}
\end{center}
\vskip -0.2in
\end{figure*}

\subsection{Sorted Cumulative Isotonic Regression}
Motivated to develop a calibration method that aligns with the practitioner’s decision-making process, particularly modeling binary problems that correspond directly to the ECE evaluated metric (e.g conf-ECE), we propose \textbf{SCIR}, which integrates ranking into the problem formulation.

\textbf{Formulation. } Let $(p,y)\in \mathcal{V}_{\hat{p}}$ be a calibration point, and let $\sigma$ be the order permutation such that $p_{\sigma^{-1}(1)} \geq \cdots \geq  p_{\sigma^{-1}(k)}$ , we define the sorted cumulative corresponding set for each prediction point as 
\[
\resizebox{\columnwidth}{!}{
$\displaystyle
cu_{\text{sorted}}(p,y)\coloneqq\left\{ 
\left( ( \sum_{\sigma(j) \leq r} p_j, r), 
\sum_{\sigma(j) \leq r} y_j 
\right) 
\; \middle| \; 
1 \leq r \leq k - 1
\right\}
$
}
\]
and for a set of points we define the aggregated set $\mathcal{CU}_{\text{agg}}\coloneqq \bigcup_{(p,y)\in\mathcal{V}_{\hat{p}}}cu_{\text{sorted}}(p,y)$.
This set contains tuples of cumulative top-$r$ scores and their associated cumulative label counts such that each $\big((q,r),\tilde{y})\in \mathcal{CU}_{\text{agg}}$ belongs to $[0,1]\times\{1,\dots,k\}\times\{0,1\}$.

Finally $\tilde{g}_\text{SCIR}$ is defined as the solution to the following minimization problem:

\begin{equation}
\begin{aligned}
\arg\min_g  -\sum_{\mathcal{CU}_{\text{agg}}}  \tilde{y}\cdot \log\big(g(q,r)\big) \\ + (1-\tilde{y})\cdot \log\big(1-g(q,r)\big)  \\
 \text{s.t } g(q,r)\leq g(s,t) \text{ if } q\leq s \land r\leq t 
\end{aligned}
\label{cu-sorted-problem}
\end{equation}

Thus imposing monotonicity both in prediction sum and ranking index. To make predictions for a new input point the probabilities are first sorted, $\hat{g}_{\text{CSIR}}$ is applied, a difference is taken and then the results are transformed back into the original order. Formally the prediction is calculated as:
\begin{align*}
\hat{g}_{\text{SCIR}}(\hat{p}(x_i))_l= & \tilde{g}_{CSIR}(\sum_{\sigma(j)\leq \sigma(l)} \hat{p}(x_i)_j\text{ , }\sigma(l)\big) \nonumber \\ - & \tilde{g}_{CSIR}(\sum_{\sigma(j)\leq \sigma(l)-1} \hat{p}(x_i)_j\text{ , }\sigma(l)-1\big)
\end{align*}
where $\hat{g}_{\text{CSIR}}(\cdot,0)=0 \text{ , } \hat{g}_{\text{CSIR}}(\cdot,k)=1$ by definition. 
\paragraph{Illustrative Example.} Let $k=4$ and consider a pair $(p,y)=\left([0.2, 0.4,0.3, 0.1], [0,0,1,0]\right)$
The corresponding sorted cumulative set is:
\[
cu_{\text{sorted}}(p,y)=\{\big([0.4, 1], 0\big),\big([0.7, 2], 1\big), \big([0.9, 3], 1\big)\}
\]
To compute the calibrated probability for the first class (which appears at the third rank in the sorted order), we apply: $\hat{g}_{\text{SCIR}}(p)_1=\tilde{g}_{\text{SCIR}}(0.9,3)-\tilde{g}_{\text{SCIR}}(0.8,2)$

We note that this formulation assumes Permutation Invariance and encodes mutual learning effectively allowing to break Category Independence assumption, but does not guarantee Order Preservation. We justify the coordinate-wise induced order which can expand the natural isotonic induced order as we find it is reasonable to assume super-additivity of the flattened isotonic calibration map and it ensures non-negative predicted values for every cumulative class at any prediction point. Finally, in practice, we add a small $\epsilon$ to all output probabilities and divide by the sum in order to prevent zero probabilities.  

\paragraph{Fitting Process.} In general, solving a multi-dimensional isotonic regression is computationally demanding, with a worst-case complexity of $O(m^4k^4)$ as demonstrated by \citet{Spouge2003Least-squares-i} for the least square loss (\cref{Isotonic_regression}) and by \citet{luss2014generalized} for the generalized convex case. However, we show that by utilizing one of the algorithms specified in \citet{Spouge2003Least-squares-i} to suit this problem, the following result holds:
\begin{proposition}
The cumulative sorted problem defined in \cref{cu-sorted-problem} can be solved in $O(m^2k^4)$ worst case time using \cref{alg:maximal-upper-set}.
\label{prop:complexity-of-csir}
\end{proposition}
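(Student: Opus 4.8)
The plan is to treat \cref{cu-sorted-problem} as a structured two-dimensional isotonic regression and then invoke the maximal-upper-set machinery of \citet{Spouge2003Least-squares-i}, tracking how the special geometry of $\mathcal{CU}_{\text{agg}}$ sharpens the generic bound. First I would bound the size of the instance: each calibration pair $(p,y)\in\mathcal{V}_{\hat p}$ contributes exactly $k-1$ tuples, so $N:=|\mathcal{CU}_{\text{agg}}|\le m(k-1)=O(mk)$, and (after merging any tuples that share an identical $(q,r)$, summing their labels) the feasible region of \cref{cu-sorted-problem} is the componentwise partial order on these $N$ nodes. The decisive structural observation is that the rank coordinate $r$ takes only the $k-1$ values $\{1,\dots,k-1\}$: the poset is \emph{layered} into $k-1$ chains, and every upper set is described by a non-increasing threshold vector $(\theta_1\ge\cdots\ge\theta_{k-1})$, i.e. a monotone staircase boundary with at most $k-1$ steps, each step placed among the $\le m$ distinct sum-values present in its layer.

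Second, I would justify that the combinatorial core is identical to the least-squares case, so that Spouge's algorithm applies essentially verbatim. The per-term loss in \cref{cu-sorted-problem} is the binary cross-entropy, which is the Bregman divergence generated by the negative binary entropy; minimizing $\sum_i[-\tilde y_i\log g-(1-\tilde y_i)\log(1-g)]$ over a constant $g$ on any block returns the arithmetic mean $(\sum_i\tilde y_i)/(\#)$, exactly as for squared error. Because the block-optimal value is loss-independent, the max-min-over-upper/lower-sets characterization of the isotonic solution yields the same level-set partition as L2 isotonic regression; this is the DAG analogue of the proper-scoring-rule invariance already recorded in \cref{iso-reg-appendix}. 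Hence it suffices to run the maximal-upper-set routine (\cref{alg:maximal-upper-set}) on this poset and read off the means, and the boundary blow-ups of the logarithm are irrelevant since every fitted value lies in $[0,1]$ and is regularized by the $\epsilon$ shift.

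Third, I would redo Spouge's complexity count with the layer parameter kept explicit rather than crudely bounded by $N$. The algorithm solves the regression through a sequence of maximal-upper-set computations, and each such computation optimizes a separable objective over staircase boundaries; exploiting the layering, this becomes a dynamic program that sweeps the $k-1$ layers, so an optimal upper set is found in time controlled by $N$ and the number of layers rather than by $N$ alone. Carrying the layer count $L=k-1$ through both the number of maximal-upper-set calls and the cost of each call, the generic $O(N^4)$ estimate refines to $O(N^2 L^2)$; substituting $N=O(mk)$ and $L=k-1$ gives $O\big((mk)^2 k^2\big)=O(m^2 k^4)$, as claimed.

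The main obstacle is precisely this last accounting: showing that the adaptation of Spouge's algorithm to the layered poset replaces two of the four factors of $N$ in the worst-case bound by factors of $k-1$. Concretely, I expect the work to lie in (i) proving that each maximal-upper-set subproblem is solved by a staircase dynamic program whose running time scales with the number of layers, and (ii) bounding the number of such subproblems the algorithm issues before the full level-set decomposition is obtained. Everything else---the size bound, the Bregman reduction, and correctness of the difference-based read-out---is routine once the layered structure is made explicit.
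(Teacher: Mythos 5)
Your proposal is correct and follows essentially the same route as the paper's proof: bound the instance size by $N = m(k-1)$, exploit the grid/layered structure of $\mathcal{CU}_{\text{agg}}$ (stable-sorted so the rank coordinate gives $k-1$ layers) to compute each maximal upper set by a dynamic program of cost $O(N\cdot k)$, and multiply by the number of recursive projection-pair splits in the framework of Spouge et al., which yields the stated $O(m^2k^4)$ bound. The one place you are more explicit than the paper is the Bregman argument that the cross-entropy block minimizer is the arithmetic mean, so the level-set partition coincides with the least-squares solution; the paper handles this only implicitly, via its proper-scoring-rule appendix and the citation of Barlow's result for convex losses on partial orders, but it is the same underlying justification.
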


This solution is obtained by recursively splitting $\mathcal{CU}_{\text{agg}}$ using the surprisingly simple dynamic programming algorithm outlined in \cref{alg:maximal-upper-set}. When the returned partition equals the input assigned value is the input corresponding mean. proof for \cref{prop:complexity-of-csir} and further details are provided in \cref{algorithmic-details-appendix}.

\begin{algorithm}[ht]
\caption{2-D Grid Sorted Cumulative Isotonic Maximal Upper Set Algorithm}
\label{alg:maximal-upper-set}
\footnotesize
\begin{algorithmic}[1]
\STATE \textbf{Input:} $\mathcal{X} = \{(k, l)_i \mid i \in [1, \dots, N]\}, \; y, w \in \mathbb{R}^n$
\STATE Initialize $M_{(N+1) \times k}$, $R_{N \times k}$ as zero matrices, and $b = \text{Avg}(\mathcal{X})$
\FOR{$i = N$ \textbf{to} $1$}
    \FOR{$j = 1$ \textbf{to} $c$}
        \IF{$j \leq l$}
            \STATE $M[i, j] = M[i+1, j] + w_i (y_i - b) ; R[i, j] = j$
        \ELSIF{$j > l$ \textbf{and} $(y_i - b) < 0$}
            \STATE $M[i, j] = M[i+1, j] ; R[i, j] = j$
        \ELSE
            \STATE $M[i, j] = M[i+1, l] + w_i (y_i - b) ; R[i, j] = l$
        \ENDIF
    \ENDFOR
\ENDFOR
\STATE Initialize $S = \{\} ; j = c$
\FOR{$i = 1$ \textbf{to} $N$}
    \STATE Set $(k, l) = x_i$
    \IF{$R(i, j) \leq l$}
        \STATE Add $i$ to $S$
    \ENDIF
    \STATE Update $j = R(i, j)$
\ENDFOR
\STATE \textbf{Return:} $S$
\end{algorithmic}
\end{algorithm}

\begin{figure*}[!ht]
\vskip 0.1in
\begin{center}
\begin{subfigure}{0.48\textwidth}
    \centering
        \includegraphics[width=\textwidth,
                       height=0.25\textheight,
                        keepaspectratio
                        ]{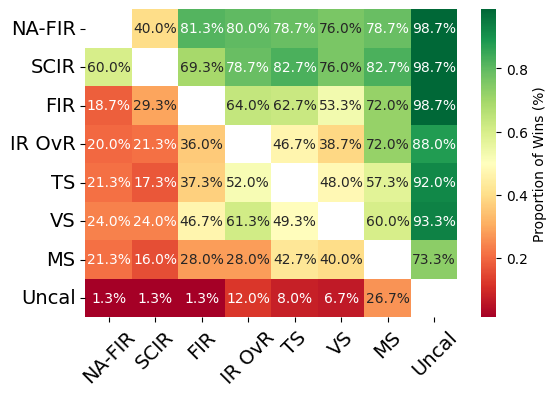} 
        \caption{conf-ECE}
\end{subfigure}
\vspace{0.1cm}
\begin{subfigure}{0.48\textwidth}
    \centering
        \includegraphics[width=\textwidth,
                        height=0.25\textheight,
                        keepaspectratio
        ]{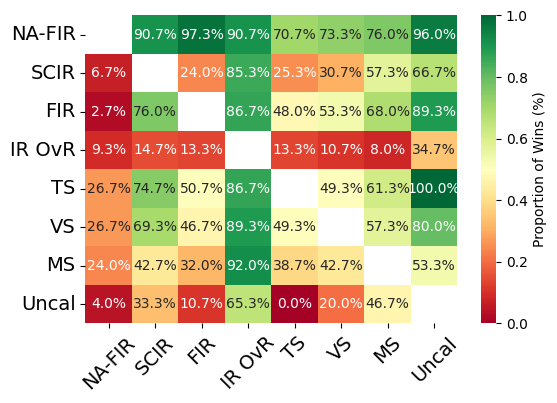}
        \caption{NLL}
    \end{subfigure}
\caption{Comparison of conf-ECE and NLL between different calibration methods. Each cell in the heatmap represent the percentage of times the calibration method on the row had achieved better score then the calibration method on the column.}
\label{fig:wins-head-to-head}
\end{center}
\end{figure*}

\section{Experiments}
\subsection{Experiment Design}
We evaluate post-hoc calibration methods on seven benchmark datasets spanning both image and text domains: CIFAR-10, CIFAR-100 \cite{Krizhevsky09learningmultiple}, Food-101 \cite{Bossard2014Food-101----Min} and ImageNet-1k \cite{imagenet15russakovsky} for image classification, and R52, NG20, and Yelp Review \cite{zhang2015character} for text classification. These datasets include varying numbers of classes and validation/test set sizes, allowing us to test calibration methods across diverse scenarios (see \cref{table:datasets} for further specification).  Experiments were conducted using various modern DNN architectures and a few classical machine learning models. To assess the practical usefulness of calibration functions, we compared several models with different architectures and training regimes so overall 75 baseline models are being evaluated. For each dataset and model, we applied seven of the calibration methods outlined in \cref{post-hoc-calib-methods} and \cref{NA-IR-methods}, omitting other mentioned methods for simplicity. Detailed information on experimental setup is provided in \cref{expermintal-details-appendix}. The performance of these methods will be evaluated using the metrics accuracy, NLL, Brier score, conf-ECE, cw-ECE and TECE. To account for practical considerations, we also report wall-clock time. TECE is evaluated with a threshold of $1/k$, and all calibration metrics are computed using 15 bins and their respective binning schemes as defined.

\subsection{Results}
\textbf{Normalization Aware Effect.}
As shown in \cref{fig:Normalization Aware Impact}, while normalization awareness, as defined in \cref{subsec:FIR-and-NA-FIR}, might appear to be a minor adjustment, it has a significant impact on predicted probabilities. Lower FIR probabilities are often inflated by NA-IR and vice versa. This phenomena effectively demonstrate that awareness has resulted in increased uncertainty in final predictions which was not accounted when fitting FIR. Looking at cumulative fits of SCIR we can notice that the first rank fit which corresponds to the suggested M2B framework is relatively "constrained" free allowing to learn conf-ECE targeted calibration map, on the other hand many overlaps and order violations seems to occur which can be problematic.  

\textbf{Aggregated Comparative Results.}
\cref{tab:avg-ranking} demonstrates that both our proposed methods, SCIR and NA-FIR, achieve state-of-the-art performance in terms of NLL loss and conf-ECE across all datasets. In terms of conf-ECE, SCIR often achieves the best results (see \cref{fig:wins-head-to-head}), while NA-FIR demonstrates greater stability, consistently ranking among the top calibration methods even when it is not the best performer. It also shows that our approach for constrained NLL optimization was indeed justified, as we can see that NA-FIR not only improves NLL but also conf-ECE when compared to FIR, see \cref{fig:reliability-plots} for example. Moreover, the results clearly show that NA-FIR outperforms linear scaling methods for most datasets except for the FOOD101 dataset where TS excels and Yelp Review where there is an abundance of examples in each class. In contrast, SCIR consistently produces lower scores for NLL - an unwanted surprising property, especially when considering the well performing ECE metrics. Interestingly, this holds even when Brier score is comparable, thus we attribute this unwanted behavior to near zero probabilities arising from overlaps in the cumulative class probabilities.

A closer analysis into the gaps, as shown in \cref{fig:boxplot-values-comparison-across-datasets} provided in appendix \ref{extended-results-appendix}, reveals that dataset characteristics such as number of classes and data set type correlate with some of the calibration methods performance. TS performed better on image classification datasets, while both MS and IR OvR were greatly affected by the number of classes and samples provided. 

With respect to wall-clock time measurements, both TS and VS are highly efficient, completing in a matter of seconds across all datasets. In contrast, SCIR and NA-FIR exhibit less favorable scaling behavior. For NA-FIR, each iteration scales linearly with the number of samples (\cref{appendix:alg-NA-FIR}). In practice, the method required approximately 15 minutes to calibrate Yelp Review and ImageNet-1k. SCIR, by contrast, is highly sensitive to the number of classes (\cref{prop:complexity-of-csir}), taking ~1 second on Yelp Review but approximately 120 minutes on ImageNet-1k. All timing results were obtained on a standard single-machine setup without GPU acceleration. Nevertheless, calibration sets are often substantially smaller than training sets and the calibration overhead remains minor relative to model training time.

Lastly, re-visiting our assumptions in this empirical evaluation provides a more concrete justification for our approach. First, the superior performance of both the cumulative approach and flattening methods compared to IR OvR indicates that leveraging permutation invariance enhances the model's generalization capabilities. Second, the results highlight that order preservation is a reasonable assumption, effectively acting as a classical "variance reduction" technique, as evidenced by the more stable results observed across and within each dataset for the respective methods. Finally, the rankings and performance differences once again emphasize the effectiveness of post-hoc calibration methods in improving model reliability and robustness. 

\textbf{Per Model Comparison.} We also examine how different network architectures influence post-hoc calibration. For instance, results (see \cref{tab:cifar100-models-comparison} in \cref{extended-results-appendix}) for CIFAR-100, show that highly overconfident architectures (DenseNet, ResNet) benefit substantially from SCIR in comparison to other methods, while ViT and Swin being less overconfident, gain modestly yet consistently. Similarly, conf-ECE result for textual models across architectures show the same trend. Simpler models, such as FastText, Swem-Concat along with classical ML algorithms (NB, SVC), exhibit greater calibration improvements (see \cref{fig:conf-ece-text-models-comparison} in \cref{extended-results-appendix}). In contrast, more advanced models (e.g, BERT-based) tend to show less variation across calibration methods. We can see how better in advance calibrated models still gain from post-hoc calibration but can be much less sensitive to the underlying mechanism. 

\section{Conclusions}
We have proposed two non-trivial extensions for non parametric learning in post-hoc calibration that achieve state-of-the-art results. These extensions rely on simple yet effective explicit assumptions and directly incorporate them within the post-hoc optimization framework, rather than resorting to traditional k-one-vs-rest decomposition. By addressing the normalization problem, our approach demonstrates improved performance and adaptability to diverse scenarios, providing a more flexible approach for learning a calibration map. While the empirical results are encouraging, several avenues for discussion and further research remain. In particular, the relationship between calibration and refinement when analyzed through user-specific metrics like conf-ECE, warrants deeper investigation. Our results highlight a clear trade-off, emphasizing yet again the complex nature of defining a "well-calibrated" model.

\section*{Impact Statement}
This paper presents work whose goal is to advance the field of Machine Learning. There are many potential societal consequences of our work, none which we feel must be specifically highlighted here.

\bibliography{refrences}

@article{zhang2015character,
	author = {Zhang, Xiang and Zhao, Junbo and LeCun, Yann},
	date-added = {2025-01-19 09:26:56 +0200},
	date-modified = {2025-01-19 09:26:56 +0200},
	journal = {Advances in neural information processing systems},
	title = {Character-level convolutional networks for text classification},
	volume = {28},
	year = {2015}}

@techreport{Krizhevsky09learningmultiple,
	author = {Alex Krizhevsky},
	date-added = {2025-01-19 09:25:14 +0200},
	date-modified = {2025-01-19 09:25:14 +0200},
	title = {Learning multiple layers of features from tiny images},
	year = {2009}}

@inproceedings{Bossard2014Food-101----Min,
	author = {Bossard, Lukas and Guillaumin, Matthieu and Van Gool, Luc},
	booktitle = {European Conference on Computer Vision},
	date-added = {2025-01-19 09:23:07 +0200},
	date-modified = {2025-01-19 09:23:07 +0200},
	title = {Food-101 -- Mining Discriminative Components with Random Forests},
	year = {2014}}

@article{luss2014generalized,
	author = {Luss, Ronny and Rosset, Saharon},
	date-added = {2024-12-08 11:36:55 +0200},
	date-modified = {2024-12-08 11:36:55 +0200},
	journal = {Journal of Computational and Graphical Statistics},
	number = {1},
	pages = {192--210},
	publisher = {Taylor \& Francis},
	title = {Generalized isotonic regression},
	volume = {23},
	year = {2014}}

@article{Li2022A-survey-on-tex,
	author = {Li, Qian and Peng, Hao and Li, Jianxin and Xia, Congying and Yang, Renyu and Sun, Lichao and Yu, Philip S and He, Lifang},
	date-added = {2024-12-06 16:26:48 +0200},
	date-modified = {2024-12-06 16:26:48 +0200},
	journal = {ACM Transactions on Intelligent Systems and Technology (TIST)},
	number = {2},
	pages = {1--41},
	publisher = {ACM New York, NY},
	title = {A survey on text classification: From traditional to deep learning},
	volume = {13},
	year = {2022}}

@article{Brummer2013The-PAV-algorit,
	author = {Brummer, Niko and Preez, Johan du},
	date-added = {2024-12-05 19:59:10 +0200},
	date-modified = {2024-12-05 19:59:10 +0200},
	journal = {arXiv preprint arXiv:1304.2331},
	title = {The PAV algorithm optimizes binary proper scoring rules},
	year = {2013}}

@article{Spouge2003Least-squares-i,
	author = {Spouge, J and Wan, H and Wilbur, WJ},
	date-added = {2024-12-01 20:44:03 +0200},
	date-modified = {2024-12-01 20:44:03 +0200},
	journal = {Journal of Optimization Theory and Applications},
	pages = {585--605},
	publisher = {Springer},
	title = {Least squares isotonic regression in two dimensions},
	volume = {117},
	year = {2003}}

@inproceedings{gupta2022top,
  title={Top-label calibration and multiclass-to-binary reductions},
  author={Gupta, Chirag and Ramdas, Aaditya},
  booktitle={International Conference on Learning Representations},
  year={2022}
}

@inproceedings{zhang2020mix,
  title={Mix-n-match: Ensemble and compositional methods for uncertainty calibration in deep learning},
  author={Zhang, Jize and Kailkhura, Bhavya and Han, T Yong-Jin},
  booktitle={International conference on machine learning},
  pages={11117--11128},
  year={2020},
  organization={PMLR}
}

@article{marx2024calibration,
  title={Calibration by distribution matching: trainable kernel calibration metrics},
  author={Marx, Charlie and Zalouk, Sofian and Ermon, Stefano},
  journal={Advances in Neural Information Processing Systems},
  volume={36},
  year={2024}
}

@article{widmann2019calibration,
  title={Calibration tests in multi-class classification: A unifying framework},
  author={Widmann, David and Lindsten, Fredrik and Zachariah, Dave},
  journal={Advances in neural information processing systems},
  volume={32},
  year={2019}
}

@article{gawlikowski2023survey,
  title={A survey of uncertainty in deep neural networks},
  author={Gawlikowski, Jakob and Tassi, Cedrique Rovile Njieutcheu and Ali, Mohsin and Lee, Jongseok and Humt, Matthias and Feng, Jianxiang and Kruspe, Anna and Triebel, Rudolph and Jung, Peter and Roscher, Ribana and others},
  journal={Artificial Intelligence Review},
  volume={56},
  number={Suppl 1},
  pages={1513--1589},
  year={2023},
  publisher={Springer}
}

@article{silva2023classifier,
  title={Classifier calibration: a survey on how to assess and improve predicted class probabilities},
  author={Silva Filho, Telmo and Song, Hao and Perello-Nieto, Miquel and Santos-Rodriguez, Raul and Kull, Meelis and Flach, Peter},
  journal={Machine Learning},
  volume={112},
  number={9},
  pages={3211--3260},
  year={2023},
  publisher={Springer}
}

@article{Patel2020Multi-class-unc,
	author = {Patel, Kanil and Beluch, William and Yang, Bin and Pfeiffer, Michael and Zhang, Dan},
	date-added = {2024-12-01 20:41:28 +0200},
	date-modified = {2024-12-01 20:41:28 +0200},
	journal = {arXiv preprint arXiv:2006.13092},
	title = {Multi-class uncertainty calibration via mutual information maximization-based binning},
	year = {2020}}

@article{Ayer1955An-empirical-di,
	author = {Ayer, Miriam and Brunk, H Daniel and Ewing, George M and Reid, William T and Silverman, Edward},
	date-added = {2024-12-01 20:41:12 +0200},
	date-modified = {2024-12-01 20:41:12 +0200},
	journal = {The annals of mathematical statistics},
	pages = {641--647},
	publisher = {JSTOR},
	title = {An empirical distribution function for sampling with incomplete information},
	year = {1955}}

@article{Kull2019Beyond-temperat,
	author = {Kull, Meelis and Perello Nieto, Miquel and K{\"a}ngsepp, Markus and Silva Filho, Telmo and Song, Hao and Flach, Peter},
	date-added = {2024-12-01 20:40:46 +0200},
	date-modified = {2024-12-01 20:40:47 +0200},
	journal = {Advances in neural information processing systems},
	title = {Beyond temperature scaling: Obtaining well-calibrated multi-class probabilities with dirichlet calibration},
	volume = {32},
	year = {2019}}

@inproceedings{Ji2019Bin-wise-temper,
	author = {Ji, Byeongmoon and Jung, Hyemin and Yoon, Jihyeun and Kim, Kyungyul and others},
	booktitle = {2019 IEEE/CVF International Conference on Computer Vision Workshop (ICCVW)},
	date-added = {2024-12-01 20:40:29 +0200},
	date-modified = {2024-12-01 20:40:29 +0200},
	organization = {IEEE},
	pages = {4190--4196},
	title = {Bin-wise temperature scaling (BTS): improvement in confidence calibration performance through simple scaling techniques},
	year = {2019}}

@inproceedings{Ding2021Local-temperatu,
	author = {Ding, Zhipeng and Han, Xu and Liu, Peirong and Niethammer, Marc},
	booktitle = {Proceedings of the IEEE/CVF International Conference on Computer Vision},
	date-added = {2024-12-01 20:40:12 +0200},
	date-modified = {2024-12-01 20:40:12 +0200},
	pages = {6889--6899},
	title = {Local temperature scaling for probability calibration},
	year = {2021}}

@article{Zadrozny2001Reducing-multic,
	author = {Zadrozny, Bianca},
	date-added = {2024-12-01 20:39:54 +0200},
	date-modified = {2024-12-01 20:39:54 +0200},
	journal = {Advances in neural information processing systems},
	title = {Reducing multiclass to binary by coupling probability estimates},
	volume = {14},
	year = {2001}}

@article{Barlow1972The-isotonic-re,
	author = {Barlow, Richard E and Brunk, Hugh D},
	date-added = {2024-12-01 20:36:56 +0200},
	date-modified = {2024-12-01 20:36:56 +0200},
	journal = {Journal of the American Statistical Association},
	number = {337},
	pages = {140--147},
	publisher = {Taylor \& Francis},
	title = {The isotonic regression problem and its dual},
	volume = {67},
	year = {1972}}

@inproceedings{Kull2017Beta-calibratio,
	author = {Kull, Meelis and Silva Filho, Telmo and Flach, Peter},
	booktitle = {Artificial intelligence and statistics},
	date-added = {2024-12-01 20:36:38 +0200},
	date-modified = {2024-12-01 20:36:38 +0200},
	organization = {PMLR},
	pages = {623--631},
	title = {Beta calibration: a well-founded and easily implemented improvement on logistic calibration for binary classifiers},
	year = {2017}}

@article{Platt1999Probabilistic-o,
	author = {Platt, John},
	date-added = {2024-12-01 20:36:18 +0200},
	date-modified = {2024-12-01 20:36:18 +0200},
	journal = {Advances in large margin classifiers},
	number = {3},
	pages = {61--74},
	publisher = {Cambridge, MA},
	title = {Probabilistic outputs for support vector machines and comparisons to regularized likelihood methods},
	volume = {10},
	year = {1999}}

@article{Mukhoti2020Calibrating-dee,
	author = {Mukhoti, Jishnu and Kulharia, Viveka and Sanyal, Amartya and Golodetz, Stuart and Torr, Philip and Dokania, Puneet},
	date-added = {2024-12-01 20:35:18 +0200},
	date-modified = {2024-12-01 20:35:18 +0200},
	journal = {Advances in Neural Information Processing Systems},
	pages = {15288--15299},
	title = {Calibrating deep neural networks using focal loss},
	volume = {33},
	year = {2020}}

@article{Schervish1989A-general-metho,
	author = {Schervish, Mark J},
	date-added = {2024-12-01 20:33:39 +0200},
	date-modified = {2024-12-01 20:33:39 +0200},
	journal = {The annals of statistics},
	number = {4},
	pages = {1856--1879},
	publisher = {Institute of Mathematical Statistics},
	title = {A general method for comparing probability assessors},
	volume = {17},
	year = {1989}}

@article{Good1952Rational-decisi,
	author = {Good, Irving John},
	date-added = {2024-12-01 20:32:36 +0200},
	date-modified = {2024-12-01 20:32:36 +0200},
	journal = {Journal of the Royal Statistical Society: Series B (Methodological)},
	number = {1},
	pages = {107--114},
	publisher = {Wiley Online Library},
	title = {Rational decisions},
	volume = {14},
	year = {1952}}

@article{Brier1950Verification-of,
	author = {Brier, Glenn W},
	date-added = {2024-12-01 20:32:15 +0200},
	date-modified = {2024-12-01 20:32:15 +0200},
	journal = {Monthly weather review},
	number = {1},
	pages = {1--3},
	title = {Verification of forecasts expressed in terms of probability},
	volume = {78},
	year = {1950}}

@article{DeGroot1981Assessing-proba,
	author = {DeGroot, Morris H and Fienberg, Stephen E},
	date-added = {2024-12-01 20:31:41 +0200},
	date-modified = {2024-12-01 20:31:41 +0200},
	title = {Assessing probability assessors: calibration and refinement},
	year = {1981}}

@article{Murphy1977Reliability-of-,
	author = {Murphy, Allan H and Winkler, Robert L},
	date-added = {2024-12-01 20:30:59 +0200},
	date-modified = {2024-12-01 20:30:59 +0200},
	journal = {Journal of the Royal Statistical Society Series C: Applied Statistics},
	number = {1},
	pages = {41--47},
	publisher = {Oxford University Press},
	title = {Reliability of subjective probability forecasts of precipitation and temperature},
	volume = {26},
	year = {1977}}

@inproceedings{Nixon2019Measuring-Calib,
	author = {Nixon, Jeremy and Dusenberry, Michael W and Zhang, Linchuan and Jerfel, Ghassen and Tran, Dustin},
	booktitle = {CVPR workshops},
	date-added = {2024-12-01 20:30:22 +0200},
	date-modified = {2024-12-01 20:30:22 +0200},
	number = {7},
	title = {Measuring Calibration in Deep Learning.},
	volume = {2},
	year = {2019}}

@inproceedings{Zadrozny2001Obtaining-calib,
	author = {Zadrozny, Bianca and Elkan, Charles},
	booktitle = {Icml},
	date-added = {2024-12-01 20:30:03 +0200},
	date-modified = {2024-12-01 20:30:03 +0200},
	pages = {609--616},
	title = {Obtaining calibrated probability estimates from decision trees and naive bayesian classifiers},
	volume = {1},
	year = {2001}}

@inproceedings{Zadrozny2002Transforming-cl,
	author = {Zadrozny, Bianca and Elkan, Charles},
	booktitle = {Proceedings of the eighth ACM SIGKDD international conference on Knowledge discovery and data mining},
	date-added = {2024-12-01 20:29:45 +0200},
	date-modified = {2024-12-01 20:29:45 +0200},
	pages = {694--699},
	title = {Transforming classifier scores into accurate multiclass probability estimates},
	year = {2002}}

@article{Brocker2009Reliability-suf,
	author = {Br{\"o}cker, Jochen},
	date-added = {2024-12-01 19:15:01 +0200},
	date-modified = {2024-12-01 19:15:01 +0200},
	journal = {Quarterly Journal of the Royal Meteorological Society: A journal of the atmospheric sciences, applied meteorology and physical oceanography},
	number = {643},
	pages = {1512--1519},
	publisher = {Wiley Online Library},
	title = {Reliability, sufficiency, and the decomposition of proper scores},
	volume = {135},
	year = {2009}}

@inproceedings{Naeini2015Obtaining-well-,
	author = {Naeini, Mahdi Pakdaman and Cooper, Gregory and Hauskrecht, Milos},
	booktitle = {Proceedings of the AAAI conference on artificial intelligence},
	date-added = {2024-12-01 19:14:40 +0200},
	date-modified = {2024-12-01 19:14:40 +0200},
	number = {1},
	title = {Obtaining well calibrated probabilities using bayesian binning},
	volume = {29},
	year = {2015}}

@article{Mozafari2018Attended-temper,
	author = {Mozafari, Azadeh Sadat and Gomes, Hugo Siqueira and Le{\~a}o, Wilson and Janny, Steeven and Gagn{\'e}, Christian},
	date-added = {2024-12-01 19:14:21 +0200},
	date-modified = {2024-12-01 19:14:21 +0200},
	journal = {arXiv preprint arXiv:1810.11586},
	title = {Attended temperature scaling: a practical approach for calibrating deep neural networks},
	year = {2018}}

@inproceedings{Vaicenavicius2019Evaluating-mode,
	author = {Vaicenavicius, Juozas and Widmann, David and Andersson, Carl and Lindsten, Fredrik and Roll, Jacob and Sch{\"o}n, Thomas},
	booktitle = {The 22nd international conference on artificial intelligence and statistics},
	date-added = {2024-12-01 19:13:53 +0200},
	date-modified = {2024-12-01 19:13:53 +0200},
	organization = {PMLR},
	pages = {3459--3467},
	title = {Evaluating model calibration in classification},
	year = {2019}}

@article{Abdar2021A-review-of-unc,
	author = {Abdar, Moloud and Pourpanah, Farhad and Hussain, Sadiq and Rezazadegan, Dana and Liu, Li and Ghavamzadeh, Mohammad and Fieguth, Paul and Cao, Xiaochun and Khosravi, Abbas and Acharya, U Rajendra and others},
	date-added = {2024-12-01 19:13:07 +0200},
	date-modified = {2024-12-01 19:13:07 +0200},
	journal = {Information fusion},
	pages = {243--297},
	publisher = {Elsevier},
	title = {A review of uncertainty quantification in deep learning: Techniques, applications and challenges},
	volume = {76},
	year = {2021}}

@inproceedings{Niculescu-Mizil2005Predicting-good,
	author = {Niculescu-Mizil, Alexandru and Caruana, Rich},
	booktitle = {Proceedings of the 22nd international conference on Machine learning},
	date-added = {2024-12-01 19:11:46 +0200},
	date-modified = {2024-12-01 19:11:46 +0200},
	pages = {625--632},
	title = {Predicting good probabilities with supervised learning},
	year = {2005}}

@article{Murphy1973A-new-vector-pa,
	author = {Murphy, Allan H},
	date-added = {2024-12-01 19:10:44 +0200},
	date-modified = {2024-12-01 19:10:44 +0200},
	journal = {Journal of Applied Meteorology and Climatology},
	number = {4},
	pages = {595--600},
	title = {A new vector partition of the probability score},
	volume = {12},
	year = {1973}}

@article{DeGroot1983The-comparison-,
	author = {DeGroot, Morris H and Fienberg, Stephen E},
	date-added = {2024-12-01 19:10:02 +0200},
	date-modified = {2024-12-01 19:10:02 +0200},
	journal = {Journal of the Royal Statistical Society: Series D (The Statistician)},
	number = {1-2},
	pages = {12--22},
	publisher = {Wiley Online Library},
	title = {The comparison and evaluation of forecasters},
	volume = {32},
	year = {1983}}

@inproceedings{Guo2017On-calibration-,
	author = {Guo, Chuan and Pleiss, Geoff and Sun, Yu and Weinberger, Kilian Q},
	booktitle = {International conference on machine learning},
	date-added = {2024-12-01 19:09:34 +0200},
	date-modified = {2024-12-01 19:09:34 +0200},
	organization = {PMLR},
	pages = {1321--1330},
	title = {On calibration of modern neural networks},
	year = {2017}}

@article{gneiting2007strictly,
	author = {Gneiting, Tilmann and Raftery, Adrian E},
	date-added = {2024-11-30 11:58:33 +0200},
	date-modified = {2024-11-30 11:58:33 +0200},
	journal = {Journal of the American statistical Association},
	number = {477},
	pages = {359--378},
	publisher = {Taylor \& Francis},
	title = {Strictly proper scoring rules, prediction, and estimation},
	volume = {102},
	year = {2007}}

@article{rahimi2020intra,
  title={Intra order-preserving functions for calibration of multi-class neural networks},
  author={Rahimi, Amir and Shaban, Amirreza and Cheng, Ching-An and Hartley, Richard and Boots, Byron},
  journal={Advances in Neural Information Processing Systems},
  volume={33},
  pages={13456--13467},
  year={2020}
}

@article{imagenet15russakovsky,
    Author = {Olga Russakovsky and Jia Deng and Hao Su and Jonathan Krause and Sanjeev Satheesh and Sean Ma and Zhiheng Huang and Andrej Karpathy and Aditya Khosla and Michael Bernstein and Alexander C. Berg and Li Fei-Fei},
    Title = { {ImageNet Large Scale Visual Recognition Challenge} },
    Year = {2015},
    journal   = {International Journal of Computer Vision (IJCV)},
    doi = {10.1007/s11263-015-0816-y},
    volume={115},
    number={3},
    pages={211-252}
}

@misc{rw2019timm,
  author = {Ross Wightman},
  title = {PyTorch Image Models},
  year = {2019},
  publisher = {GitHub},
  journal = {GitHub repository},
  doi = {10.5281/zenodo.4414861},
  howpublished = {\url{https://github.com/rwightman/pytorch-image-models}}
}
\bibliographystyle{icml2025}

\newpage
\appendix
\onecolumn
\section{Normalized Aware Methods Algorithmic Details} \label{algorithmic-details-appendix}

\subsection{SCIR}
\label{appendix:alg-CSIR}
\subsubsection{Preliminaries}

We begin by quoting several definitions and theorems from Spouge, Wan, and Wilbur \cite{Spouge2003Least-squares-i}, which will later be utilized to prove the proposed algorithm:

Let \(X = \{x_i \mid i \in I\}\) be a partially ordered set, meaning there exists a relation \(\preceq\) that is reflexive, antisymmetric, and transitive. Let \(g^*\) denote the result of isotonic regression, as defined in \cref{Isotonic_regression}, for the corresponding weights \(w_i \in \mathbb{R}\) and values \(y_i \in \mathbb{R}\).

 we denote \((g|_S)^*\) as the result of isotonic regression for \(S \subseteq X\) with the respective \(w_i, y_i\).

\begin{definition}
A subset \(U \subseteq X\) is called an \textbf{upper set} if \(x \in U\) and \(x \preceq z \implies z \in U\). A \textbf{lower set} is defined in a dual manner.
\end{definition}

We note that upper sets are closed under arbitrary intersections and unions of upper sets. Similarly, lower sets satisfy this property. Additionally, the empty set is both an upper set and a lower set.

\begin{definition}
A pair \((L, U)\), where \(U\) is an upper set and \(L = X - U\) is a lower set, is called a \textbf{projection pair} if and only if there exists a real number \(d\) such that:
\[
\forall x \in L, \; z \in U : (g|_L)^*(x) \leq d \leq (g|_U)^*(z).
\]
\end{definition}

\begin{theorem}
If \((L, U)\) is a projection pair, then:
\[
g^*(x) = 
\begin{cases} 
(g|_L)^*(x) & x \in L, \\
(g|_U)^*(x) & x \in U.
\end{cases}
\]
\label{projection-pair-theorem}
\end{theorem}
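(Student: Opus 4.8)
The plan is to produce the glued function explicitly and show it coincides with $g^*$ by a feasibility-plus-optimality argument, closed off by the uniqueness of least-squares isotonic regression. Define the candidate
\[
\hat{g}(x) = \begin{cases} (g|_L)^*(x) & x \in L, \\ (g|_U)^*(x) & x \in U. \end{cases}
\]
Writing $\Phi(g)=\sum_i w_i\big(g(x_i)-y_i\big)^2$ for the global weighted least-squares objective from \cref{Isotonic_regression}, and $\Phi_S(h)=\sum_{x_i\in S} w_i\big(h(x_i)-y_i\big)^2$ for its restriction to a subset $S$, I would prove (i) that $\hat{g}$ is isotonic on all of $X$ and (ii) that $\hat{g}$ minimizes $\Phi$ over the cone $\mathcal{K}$ of globally isotonic functions. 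Since $\Phi$ is strictly convex over the convex set $\mathcal{K}$ when the weights are positive, its minimizer is unique, so (i) and (ii) together force $\hat{g}=g^*$, which is exactly the claimed piecewise formula.

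First I would establish feasibility by checking $\hat{g}(x)\le \hat{g}(z)$ for every comparable pair $x\preceq z$, splitting into four cases according to membership in $L$ and $U$. If both lie in $L$ (resp.\ both in $U$), the inequality is immediate because $(g|_L)^*$ (resp.\ $(g|_U)^*$) is isotonic on its sub-poset. If $x\in L$ and $z\in U$, the projection-pair hypothesis supplies a real $d$ with $(g|_L)^*(x)\le d\le (g|_U)^*(z)$, giving $\hat{g}(x)\le \hat{g}(z)$. The remaining case $x\in U$, $z\in L$ with $x\preceq z$ is vacuous: since $U$ is an upper set, $x\in U$ and $x\preceq z$ force $z\in U$, contradicting $z\in L=X\setminus U$. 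This is the step where both the projection-pair constant $d$ and the upper-set structure are used, and it is the only place they enter.

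Next I would prove optimality via separability. Because $\Phi(g)=\Phi_L(g|_L)+\Phi_U(g|_U)$, and because the restriction of any globally isotonic $g$ to $L$ (resp.\ $U$) is isotonic on that sub-poset, we get $\Phi_L(g|_L)\ge \Phi_L\big((g|_L)^*\big)$ and $\Phi_U(g|_U)\ge \Phi_U\big((g|_U)^*\big)$, hence $\Phi(g)\ge \Phi_L\big((g|_L)^*\big)+\Phi_U\big((g|_U)^*\big)=\Phi(\hat{g})$ for every $g\in\mathcal{K}$. Thus $\hat{g}$ attains the lower bound over the entire isotonic cone and is a minimizer; combined with feasibility and uniqueness, this yields $g^*=\hat{g}$. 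The conceptual point worth emphasizing is the clean division of labor: the separability bound $\Phi(g)\ge \Phi_L((g|_L)^*)+\Phi_U((g|_U)^*)$ holds for \emph{any} bipartition of $X$, so it alone does not give the theorem; the genuine content of the projection-pair hypothesis is precisely that it makes the glued optimizer $\hat{g}$ feasible, so the universal lower bound is actually achieved by a function built from the two subproblems.

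I expect the main obstacle to be stating the feasibility case analysis rigorously while making transparent that isotonicity of $\hat{g}$ is equivalent to the order being respected across the cut, and in particular that the cross term $x\in L, z\in U$ is controlled by $d$ whereas the reverse cross term cannot occur by the upper-set property. Everything else reduces to the additive decomposition of the squared-error objective and the elementary observation that restricting an isotonic function to a subset preserves isotonicity, so no heavier machinery than uniqueness of the least-squares projection onto $\mathcal{K}$ is required.
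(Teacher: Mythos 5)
Your proposal is correct: gluing the two subproblem optimizers, checking isotonicity across the cut via the projection-pair constant $d$ (with the reverse cross case $x\in U$, $z\in L$ vacuous by the upper-set property), establishing the separability lower bound $\Phi(g)\ge\Phi_L\big((g|_L)^*\big)+\Phi_U\big((g|_U)^*\big)$ for all globally isotonic $g$, and invoking uniqueness of the minimizer (strict convexity under positive weights) is a complete and valid argument. The paper itself gives no proof of this statement and simply defers to Theorem 3.1 of \cite{Spouge2003Least-squares-i}; your feasibility-plus-optimality argument is essentially the standard proof behind that citation, so the two approaches coincide in substance.
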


\begin{proof}
See Theorem 3.1 in \cite{Spouge2003Least-squares-i}.
\end{proof}

Using \cref{projection-pair-theorem}, we observe that by finding a projection pair, we can split the isotonic problem into sub-problems that are easier to compute. Fortunately, we have a systematic way to find such pairs.

\begin{definition}
For \(S \subseteq X\) and \(S_I = \{i \mid x_i \in S\}\), define:
\begin{enumerate}
    \item \(\text{Avg}(S) = \frac{\sum_{i \in S_I} w_i \cdot y_i}{\sum_{i \in S_I} w_i}\),
    \item \(H_X(S) = \sum_{i \in S_I} w_i \cdot (y_i - \text{Avg}(X))\),
    \item Let \(U_X\) denote the set of all upper sets in \(X\). A \textbf{maximal upper set} \(U\) is an upper set that satisfies \(U = \arg\max_{U' \in U_X} H_X(U')\). A \textbf{minimal lower set} is defined dually.
\end{enumerate}
\end{definition}

\begin{theorem}
Every non-empty pair \((L, U)\), where \(U\) is a maximal upper set, is a projection pair. If the empty set is a maximal upper set in \(X\), then \(g^* \equiv H_X(X)\) for all \(x_i \in X\).
\end{theorem}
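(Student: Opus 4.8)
The plan is to produce a single explicit threshold $d$ witnessing the projection-pair condition, and the natural candidate is the global weighted mean $d=\text{Avg}(X)$. The first thing I would record is the additive, sign-revealing form of $H_X$: for any nonempty $S\subseteq X$ one has $H_X(S)=W_S\,(\text{Avg}(S)-\text{Avg}(X))$ with $W_S=\sum_{i\in S_I}w_i>0$, so $H_X(S)\geq 0$ iff $\text{Avg}(S)\geq\text{Avg}(X)$, while $H_X$ is trivially additive over disjoint unions. These two observations translate every statement about $H_X$ into a statement about averages, which is precisely the language the projection-pair definition is phrased in.

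With $d=\text{Avg}(X)$ fixed, I would reduce the required inequalities $(g|_L)^*(x)\leq d\leq (g|_U)^*(z)$ to the two extremal facts $\min_{z\in U}(g|_U)^*(z)\geq\text{Avg}(X)$ and $\max_{x\in L}(g|_L)^*(x)\leq\text{Avg}(X)$. The enabling structural ingredient is elementary: in any isotonic regression the lowest-valued level block is a nonempty lower set on which $g^*$ equals that block's average, and dually for the highest block. Hence it suffices to control the average of the bottom block $B_0$ of $(g|_U)^*$ and the top block $B_1$ of $(g|_L)^*$.

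The heart of the argument is then a set-surgery plus a maximality comparison. For the lower bound, $B_0$ is a lower set of $U$, so $U\setminus B_0$ is again an upper set of $X$; maximality of $H_X(U)$ together with additivity gives $H_X(U\setminus B_0)\leq H_X(U)$, hence $H_X(B_0)\geq 0$, i.e. $\text{Avg}(B_0)\geq\text{Avg}(X)$, which is exactly $\min(g|_U)^*\geq d$. Dually, $B_1$ is an upper set of $L$, so $B_1\cup U$ is an upper set of $X$, and $H_X(B_1\cup U)\leq H_X(U)$ forces $H_X(B_1)\leq 0$, i.e. $\text{Avg}(B_1)\leq\text{Avg}(X)=d$. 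Combining the two bounds shows $(L,U)$ is a projection pair, whose conclusion then follows from \cref{projection-pair-theorem}. For the degenerate case, if $\emptyset$ is a maximal upper set then $H_X(U)\leq H_X(\emptyset)=0$ for every upper set $U$, so $\text{Avg}(U)\leq\text{Avg}(X)$ for all nonempty upper sets; since the maximal value of $g^*$ is attained on its top block—a nonempty upper set whose average equals that maximum—we get $\max_x g^*(x)\leq\text{Avg}(X)$, and because isotonic regression preserves the weighted mean ($\sum_i w_i g^*(x_i)=\sum_i w_i y_i$), a function bounded above by its own mean must be constant, giving $g^*\equiv\text{Avg}(X)$ (the displayed $H_X(X)$ in the statement should read $\text{Avg}(X)$, since $H_X(X)=0$ by definition).

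The main obstacle I anticipate is not any individual inequality but the bookkeeping around induced orders: rigorously verifying that $U\setminus B_0$ and $B_1\cup U$ remain upper sets of the full poset $X$, and that the bottom and top blocks of the restricted regressions are genuinely lower and upper sets in the induced order on $U$ and $L$. Once those closure properties are secured, the additivity of $H_X$ and the maximality of $U$ make the remaining deductions essentially mechanical.
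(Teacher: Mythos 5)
Your proof is correct, but note that the paper never actually proves this theorem: its ``proof'' is a one-line citation to Theorem 3.2 and Corollaries 3.1--3.2 of \citet{Spouge2003Least-squares-i}. Your argument therefore supplies, self-contained, what the paper outsources, and the ingredients are all sound: the sign-revealing identity $H_X(S)=W_S\,(\text{Avg}(S)-\text{Avg}(X))$, the explicit threshold $d=\text{Avg}(X)$, and the two surgery steps (that $U\setminus B_0$ and $B_1\cup U$ are again upper sets of $X$, so maximality of $H_X(U)$ plus additivity forces $H_X(B_0)\geq 0$ and $H_X(B_1)\leq 0$) are exactly what is needed, and the standard facts you lean on about weighted least-squares isotonic regression --- each level block's fitted value equals that block's weighted average, the bottom/top level blocks are lower/upper sets in the induced order, and the fit preserves the global weighted mean --- are all true and routine to verify by a small-perturbation argument. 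Your treatment of the degenerate case is likewise valid, and you correctly caught that the displayed conclusion ``$g^*\equiv H_X(X)$'' is a typo inherited into the statement: $H_X(X)=0$ by definition, and the intended (and proved) conclusion is $g^*\equiv\text{Avg}(X)$. The trade-off between the two routes is the obvious one: the paper's citation keeps the appendix minimal and defers correctness to an established source, whereas your proof makes the appendix self-contained and shows that the result needs nothing beyond the block structure of the isotonic solution and additivity of $H_X$; it would be a reasonable substitute for, or supplement to, the bare reference, in the same spirit as the paper's own elementary re-proof of \cref{thm:iso-proper} later in \cref{iso-reg-appendix}.
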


\begin{proof}
See Theorem 3.2, Corollary 3.1, and Corollary 3.2 in \cite{Spouge2003Least-squares-i}.
\end{proof}
The result above provides a powerful framework for solving the isotonic regression problem. Specifically, the isotonic regression problem can be addressed iteratively by:

\begin{enumerate}
    \item Identifying a maximal upper set.
    \item Splitting \(X\) into projection pairs.
    \item Repeating the process until the maximal upper set is empty.
    \item Unifying the fitted results into an isotonic function on \(X\).
\end{enumerate}

\subsubsection{2-D Grid Dynamic programming }

We focus on \(\mathcal{X} = \{x_i \mid (x_i, y_i) \in \mathcal{CU}_{\text{agg}}\}\) and the coordinate-wise partial order relation $(p, i) \preceq (q, j)$ if $p\leq q$ and $i\leq j$. Each \(x \in [0, 1] \times \{1, \dots, k-1\}\). Since we are working with isotonic functions, we can w.l.o.g stable sort \(\mathcal{X}\) by:
\begin{itemize}
    \item The first argument (cumulative probability),
    \item Then the second argument (cumulative class count).
\end{itemize}
This sorting allows us to treat \(\mathcal{X}\) as a grid:
\[
x \in \{1, \dots, N\} \times \{1, \dots, k-1\},
\]
where all points have integer coordinates corresponding to their order index and the cumulative number of classes used. Note that \(N = m \cdot (k-1)\), where \(m\) is the size of the calibration set and \(k\) is the number of classes.

\textbf{Definitions and Recursive Setup}

\begin{definition}
For every grid point \((i, j) \in \mathbb{N} \times \mathbb{N}\), we define an upper set function \(u : \mathbb{N} \times \mathbb{N} \to \mathcal{X}\) as:
\[
u(i, j) = \{(k, l) \in \mathcal{X} \mid (i, j) \preceq (k, l)\}.
\]
\end{definition}

Note that \(u(i, j)\) is an upper set for any \(i, j\).

\begin{definition}
We also define a set of upper sets for a given point on the grid as:
\[
\tilde{u}(i, j) = \{U \mid u(i, j) \subseteq U, \forall (k, l) \in U : k \geq i, U \text{ is an upper set}\}.
\]
\end{definition}

\begin{lemma}
Let \(U_{\mathcal{X}_i}\) be the set of all upper sets in \(\mathcal{X}_i = \{x_k \mid k \geq i\}\). Then:
\[
\tilde{u}(i, c) = U_{\mathcal{X}_i}, \quad \text{and in particular, } \tilde{u}(1, c) = U_{\mathcal{X}}.
\]
\end{lemma}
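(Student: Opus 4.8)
The plan is to prove the set equality $\tilde u(i,c)=U_{\mathcal X_i}$ by double inclusion, after first pinning down the role of the top index $c$. The essential preliminary observation is that $c$ equals the number of columns $k$ of the dynamic-programming arrays, which strictly exceeds the maximal second coordinate $k-1$ occurring at any point of $\mathcal X$. Hence the forcing set $u(i,c)=\{(a,b)\in\mathcal X\mid i\le a,\ c\le b\}$ is \emph{empty}, and the membership condition ``$u(i,c)\subseteq U$'' in the definition of $\tilde u(i,c)$ holds vacuously. I would record this reduction explicitly at the outset: $\tilde u(i,c)$ is exactly the collection of all $U$ that are upper sets of $\mathcal X$ whose every element has first coordinate at least $i$, i.e. $U\subseteq\mathcal X_i$. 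This is precisely the step on which the statement hinges, because for a strictly smaller index the forcing set would be nonempty and $\tilde u(i,\cdot)$ would omit upper sets (for instance $\emptyset$, or any top-row upper set not containing the whole top row) that lie in $U_{\mathcal X_i}$.

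The second ingredient is the remark that $\mathcal X_i=\{(a,b)\in\mathcal X\mid a\ge i\}$ is itself an upper set of $\mathcal X$: increasing the first coordinate keeps a point in $\mathcal X_i$ while the second coordinate is unconstrained. This upward-closedness is what allows me to pass freely between ``upper set of $\mathcal X$ contained in $\mathcal X_i$'' and ``upper set of $\mathcal X_i$'', and it drives both inclusions.

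For $\tilde u(i,c)\subseteq U_{\mathcal X_i}$, I take $U\in\tilde u(i,c)$; by the reduction above $U\subseteq\mathcal X_i$ and $U$ is upper in $\mathcal X$. To check that $U$ is upper in $\mathcal X_i$, pick $x\in U$ and $z\in\mathcal X_i$ with $x\preceq z$; since $z\in\mathcal X_i\subseteq\mathcal X$ and $U$ is upper in $\mathcal X$, we get $z\in U$. For the reverse inclusion, I take $U\in U_{\mathcal X_i}$, so $U\subseteq\mathcal X_i$, which supplies both the first-coordinate condition and $u(i,c)=\emptyset\subseteq U$ for free; it remains to verify that $U$ is upper in the full grid $\mathcal X$. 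Given $x=(a,b)\in U$ and any $z=(a',b')\in\mathcal X$ with $x\preceq z$, the first coordinate satisfies $a'\ge a\ge i$, so $z\in\mathcal X_i$, and upward-closedness of $U$ within $\mathcal X_i$ yields $z\in U$. The ``in particular'' claim then follows by setting $i=1$, since $\mathcal X_1=\mathcal X$ and hence $U_{\mathcal X_1}=U_{\mathcal X}$.

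The only genuine obstacle is the opening step: recognizing that the lemma is true precisely because at the top index the forcing constraint $u(i,c)\subseteq U$ degenerates to the empty constraint. Everything afterward is routine order-theoretic bookkeeping with the coordinate-wise order and the fact that $\mathcal X_i$ is upward closed, so I would keep those verifications brief and spend the exposition on why the choice of $c$ makes $u(i,c)$ vanish.
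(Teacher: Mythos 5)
Your proof is correct and follows essentially the same route as the paper's: both arguments rest on the observation that $u(i,c)=\emptyset$ (so the containment constraint in $\tilde{u}(i,c)$ is vacuous) and then identify upper sets of $\mathcal{X}$ contained in $\mathcal{X}_i$ with upper sets of $\mathcal{X}_i$, using that points outside $\mathcal{X}_i$ have first coordinate smaller than $i$. If anything, you spell out the reverse inclusion (which the paper dispatches in one terse sentence) and the reason $u(i,c)$ vanishes more explicitly than the paper does, which is a welcome clarification rather than a deviation.
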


\begin{proof}
Let \(U \in U_{\mathcal{X}_i}\). We will show that \(U \in \tilde{u}(i, c)\):
First, note that for every \(i\), \(u(i, c) = \emptyset\), as the second coordinate \(c\) is, by construction, not smaller than any point in \(\mathcal{X}\). Thus, \(u(i, c) \subseteq U\) trivially.
Second, for all \((k, l) \in \mathcal{X}_i\), \(k \geq i\) by construction. This also applies to \(U \subseteq \mathcal{X}_i\).
Lastly, \(U\) is also an upper set in \(\mathcal{X}\). For any point \(x \in \mathcal{X} - \mathcal{X}_i\), the first coordinate is smaller than \(i\), so \(U \in \tilde{u}(i, c)\), implying \(U_{\mathcal{X}_i} \subseteq \tilde{u}(i, c)\).

Equality holds since \(U \in \tilde{u}(i, c)\) implies \(U\) is an upper set with \(U \subseteq \mathcal{X}_i\).
\end{proof}

\begin{lemma}
Let \(x_i = (i, l)\) and \(m \in \mathbb{N}\). Then:
\[
\tilde{u}(i, l+m) = \tilde{u}(i+1, l+m) \uplus \tilde{u}(i, l).
\]
\end{lemma}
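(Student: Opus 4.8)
The plan is to partition $\tilde{u}(i, l+m)$ according to whether a member upper set $U$ contains the single grid point $x_i = (i,l)$, and to identify the two resulting classes with $\tilde{u}(i, l)$ and $\tilde{u}(i+1, l+m)$ respectively. The crucial structural fact I would exploit is that, after the stable sort, the first coordinate is an order index, so $x_i$ is the \emph{unique} point of $\mathcal{X}$ whose first coordinate equals $i$. Consequently, for any $U \subseteq \mathcal{X}$, the condition ``every element of $U$ has first coordinate $\geq i$'' together with $x_i \notin U$ forces every element to have first coordinate $\geq i+1$.

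First I would record the elementary identity $u(i, l+m) = u(i+1, l+m)$, valid precisely because $m \geq 1$: the only point with first coordinate exactly $i$ is $x_i = (i,l)$, and its second coordinate $l$ is strictly below the threshold $l+m$, so $x_i \notin u(i, l+m)$; every remaining point of $u(i, l+m)$ has first coordinate at least $i+1$. This is where the hypothesis $m \in \mathbb{N}$ must be read as $m \geq 1$, since for $m=0$ the claim fails.

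Next, for the forward inclusion, I take $U \in \tilde{u}(i, l+m)$ and split on $x_i \in U$. If $x_i \in U$, then since $U$ is an upper set it contains all points dominating $x_i$, i.e.\ $u(i,l) \subseteq U$; the first-coordinate bound and the upper-set property are inherited verbatim, so $U \in \tilde{u}(i, l)$. If $x_i \notin U$, then by the uniqueness observation all elements of $U$ have first coordinate $\geq i+1$, and $u(i+1, l+m) = u(i, l+m) \subseteq U$, so $U \in \tilde{u}(i+1, l+m)$. For the reverse inclusions I would check that each defining condition of the right-hand sets implies membership in $\tilde{u}(i, l+m)$: for $\tilde{u}(i,l)$ use $u(i, l+m) \subseteq u(i, l) \subseteq U$ (monotonicity of $u$ in its second argument), and for $\tilde{u}(i+1, l+m)$ use the identity from the first step together with $i+1 \geq i$. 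Disjointness is immediate, since membership in $\tilde{u}(i, l)$ forces $x_i \in U$ (as $x_i \in u(i,l)$ by reflexivity) while membership in $\tilde{u}(i+1, l+m)$ forces $x_i \notin U$ (its first coordinate is $i < i+1$).

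The main obstacle is not any single hard step but careful bookkeeping: one must keep straight the two separate defining conditions of the $\tilde{u}$ sets, namely the containment $u(\cdot)\subseteq U$ and the first-coordinate lower bound, and verify that each is preserved in both directions. The whole argument hinges on the $m \geq 1$ case of the identity $u(i, l+m) = u(i+1, l+m)$ and on the uniqueness of the point at index $i$, so I would state those two facts explicitly before doing the inclusions.
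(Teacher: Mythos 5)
Your proposal is correct and follows essentially the same route as the paper's proof: both partition $\tilde{u}(i,l+m)$ by whether $x_i \in U$, use the uniqueness of the point with first coordinate $i$ to identify the $x_i \notin U$ class with $\tilde{u}(i+1,l+m)$ via $u(i,l+m)=u(i+1,l+m)$, and use the upper-set property to identify the $x_i \in U$ class with $\tilde{u}(i,l)$. If anything, you are slightly more careful than the paper on the reverse inclusion for $\tilde{u}(i,l)$, where the needed containment $u(i,l+m)\subseteq u(i,l)\subseteq U$ (monotonicity of $u$ in its second argument) is stated explicitly rather than glossed as ``the two other conditions are identical.''
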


\begin{proof}
Define:
\[
\tilde{u}(i, l+m) = u_{-x_i} \uplus  u_{x_i},
\]
where:
\begin{itemize}
    \item \(u_{-x_i} = \{U \mid U \in \tilde{u}(i, l+m), x_i \notin U\}\),
    \item \(u_{x_i} = \{U \mid U \in \tilde{u}(i, l+m), x_i \in U\}\).
\end{itemize}

1. \(u_{-x_i}=\tilde{u}(i, l+m)\): let $U\in u_{-x_{i}}$ , If \(x_i \notin U\), then \(\forall (k, l) \in U : k > i\), as there is only one point in \(\mathcal{X}\) for each coordinate, \(u(i, l+m) = u(i+1, l+m)\) thus $U\in \tilde{u}(i+1,l+m)$, leading to \(u_{-x_i} \subseteq \tilde{u}(i+1, l+m)\). The reverse follows by the same arguments, implying \(u_{-x_i} = \tilde{u}(i+1, l+m)\).

2.  \(u_{x_{i}}=\tilde{u}(i,l)\): 
let $U\in u_{x_{i}}$  since \(x_i \in U\) and \(U\) is an upper set, then \(u(i, l) \subseteq U\), meaning \(U \in \tilde{u}(i, l)\). Hence, \(u_{x_i} = \tilde{u}(i, l)\). the opposite direction follows as if $U'\in\tilde{u}(i,l)\rightarrow x_{i}\in U'$ and the two other conditions are identical for $\tilde{u}(i,l+m)$  leading to $\tilde{u}(i,l)\subseteq u_{x_{i}}$

Combining these results, we get:
\[
\tilde{u}(i, l+m) = \tilde{u}(i+1, l+m) \uplus \tilde{u}(i, l).
\]
\end{proof}

\begin{definition}
Define:
\[
M(i, j) = \max_{U \in \tilde{u}(i, j)} H_{\mathcal{X}}(U),
\]
where \(H_{\mathcal{X}}(U)\) evaluates the cumulative contribution of an upper set \(U\).
\end{definition}

\begin{lemma}
\(M(i, j)\) can be computed recursively as:
\[
M(i, j) =
\begin{cases}
M(i+1, j) + v & \text{if } j \leq l, \\
\max(M(i+1, j), M(i, l)) & \text{otherwise,}
\end{cases}
\]
where \(x_i = (i, l)\) and \(v = w_i (y_i - \text{Avg}(\mathcal{X}))\).
\end{lemma}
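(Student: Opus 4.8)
The plan is to prove the two branches separately, the only genuine content being a bijection argument for $j \le l$; the other branch will fall out of the preceding decomposition lemma. Throughout I use that $H_{\mathcal{X}}$ is additive over points, so a set containing $x_i = (i,l)$ picks up exactly $v = w_i(y_i - \text{Avg}(\mathcal{X}))$ relative to the same set with $x_i$ removed; that is, $H_{\mathcal{X}}(U) = H_{\mathcal{X}}(U \setminus \{x_i\}) + v$ whenever $x_i \in U$. The case split is dictated by whether $x_i$ is forced into the admissible sets, and since $u(i,j) = \{(k,l') \in \mathcal{X} : k \ge i,\ l' \ge j\}$, one has $x_i \in u(i,j)$ exactly when $j \le l$.

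First I would handle $j \le l$. Here $x_i \in u(i,j) \subseteq U$ for every $U \in \tilde{u}(i,j)$, so $x_i$ is common to all admissible sets. I would exhibit the map $\Phi : \tilde{u}(i,j) \to \tilde{u}(i+1,j)$, $U \mapsto U \setminus \{x_i\}$, and argue it is a bijection. In the forward direction, removing $x_i$ (the unique point with first coordinate $i$) leaves an upper set whose first coordinates are all $\ge i+1$ and which still contains $u(i+1,j) = u(i,j) \setminus \{x_i\}$, so $U \setminus \{x_i\} \in \tilde{u}(i+1,j)$. For the inverse $U' \mapsto U' \cup \{x_i\}$ I must check that adjoining $x_i$ to $U' \in \tilde{u}(i+1,j)$ again yields an upper set; this needs every point dominating $x_i$ to lie in $U'$, i.e. $u(i+1,l) \subseteq U'$, which holds because $u(i+1,l) \subseteq u(i+1,j) \subseteq U'$ precisely by $j \le l$. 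Combining the bijection with the additivity identity, $M(i,j) = \max_{U' \in \tilde{u}(i+1,j)} \big(H_{\mathcal{X}}(U') + v\big) = M(i+1,j) + v$.

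For $j > l$, $x_i \notin u(i,j)$, so $x_i$ is now optional and the admissible sets split according to whether they contain it. Writing $j = l + m$ with $m \ge 1$, the preceding lemma gives the disjoint decomposition $\tilde{u}(i,j) = \tilde{u}(i+1,j) \uplus \tilde{u}(i,l)$, where the first part collects the sets omitting $x_i$ and the second those containing it. Since a maximum over a disjoint union equals the maximum of the two partial maxima, $M(i,j) = \max\big(M(i+1,j),\, M(i,l)\big)$. Both collections are non-empty (each contains $\mathcal{X}_{i+1}$, resp. $\mathcal{X}_i$), and with the base row $M(N+1,\cdot) = H_{\mathcal{X}}(\emptyset) = 0$ the recursion is well-founded, since $M(i,l)$ in turn reduces to $M(i+1,l) + v$ by the first case.

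The main obstacle is the upper-set verification inside the bijection for $j \le l$: it is exactly here that the hypothesis $j \le l$ is used, through the chain $u(i+1,l) \subseteq u(i+1,j) \subseteq U'$ that makes $U' \cup \{x_i\}$ closed upward. Everything else is bookkeeping. Finally, substituting the first-case identity $M(i,l) = M(i+1,l) + v$ into the otherwise-branch re-expresses the recursion purely in terms of row $i+1$, matching the dynamic-programming update used in \cref{alg:maximal-upper-set}.
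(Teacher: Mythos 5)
Your proof is correct, and its skeleton is the same as the paper's: peel off $x_i$, the unique point with first coordinate $i$; for $j>l$ invoke the preceding decomposition lemma $\tilde{u}(i,l+m)=\tilde{u}(i+1,l+m)\uplus\tilde{u}(i,l)$ and take the maximum over the two parts; for $j\le l$ reduce to row $i+1$ after accounting for the summand $v$. Where you differ is in how the $j\le l$ reduction is justified, and there your version is not merely more careful --- it repairs an actual flaw in the paper's own write-up. The paper asserts $u(i,j)=u(i,l)$ and then $\tilde{u}(i,j)=\{x_i\}\cup\tilde{u}(i+1,l)$; both claims are false in general. Take $\mathcal{X}=\{x_1=(1,2),\,x_2=(2,1)\}$, $i=1$, $l=2$, $j=1$, unit weights, $y_1=1$, $y_2=0$, so $v_1=\tfrac12=-v_2$: then $u(1,1)=\{x_1,x_2\}\neq\{x_1\}=u(1,2)$, and since $\emptyset\in\tilde{u}(2,2)$ the paper's identity would admit the spurious set $\{x_1\}\notin\tilde{u}(1,1)$. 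Worse, that identity forces $M(i,j)=M(i+1,l)+v$, which contradicts the formula being proved: here $M(2,2)+v_1=\max(0,v_2)+v_1=\tfrac12$, while the true value is $M(1,1)=H_{\mathcal{X}}(\{x_1,x_2\})=0=M(2,1)+v_1$. The paper's final line nonetheless states the correct recursion $M(i+1,j)+v$; your bijection $U\mapsto U\setminus\{x_i\}$ between $\tilde{u}(i,j)$ and $\tilde{u}(i+1,j)$, with the inverse direction secured by the chain $u(i+1,l)\subseteq u(i+1,j)\subseteq U'$ (exactly where $j\le l$ enters), is the argument that actually delivers it, and it is consistent with both the lemma statement and the base row $M(N+1,\cdot)=0$.

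One caveat on your closing sentence: unrolling the otherwise-branch indeed gives $M(i,j)=\max\bigl(M(i+1,j),\,M(i+1,l)+v\bigr)$, but \cref{alg:maximal-upper-set} does not compute this maximum; it selects between $M[i+1,j]$ and $M[i+1,l]+w_i(y_i-b)$ according to the sign of $y_i-b$. The selection is justified when $y_i-b<0$ (removing $x_i$ from any $U\in\tilde{u}(i,l)$ lands in $\tilde{u}(i+1,j)$ and strictly increases $H_{\mathcal{X}}$, so the first term dominates), but the branch taken when $y_i-b\ge 0$ is not a consequence of the lemma and does not equal the maximum for arbitrary grid data. So your proof of the lemma stands, but the claim that it "matches" the printed algorithm should be softened: the algorithm's sign-based shortcut requires a separate justification that neither you nor the paper supplies.
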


\begin{proof}
If $j \leq l$, then $(i, j) \preceq (i, l) \implies x_i \in u(i, j)$. Since there is only one point in $\mathcal{X}$ for each coordinate, we have $u(i, j) = u(i, l)$, and:
\[
u(i, l) = u(i+1, l) \cup \{(i, l)\}.
\]
Thus:
\[
\tilde{u}(i, j) = \{(i, l)\} \cup \{U \mid u(i+1, l) \subseteq U, \forall (k,l)\in U: \; k\geq i, U \text{ is an upper set}\}.
\]
Since there is only one point in $\mathcal{X}$ already included, this reduces to:
\[
\tilde{u}(i, j) = \{(i, l)\} \cup \tilde{u}(i+1, l).
\]
Therefore:
\[
M(i, j) = \max_{U \in \tilde{u}(i, j)} H_{\mathcal{X}}(U) = w_i (y_i - \text{Avg}(\mathcal{X})) + M(i+1, j).
\]

If $j > l$, the result follows directly from Lemma~12:
\[
M(i, j) = \max(M(i+1, j), M(i, l)).
\]

\end{proof}

\begin{algorithm}[h!]
\caption{2-D Grid Sorted Cumulative Isotonic Maximal Upper Set Algorithm}
\label{alg:maximal-upper-set-appendix}
\begin{algorithmic}[1]
\STATE \textbf{Input:} $\mathcal{X} = \{(k, l)_i \mid i \in [1, \dots, N]\}, \; y, w \in \mathbb{R}^n$
\STATE Initialize $M_{(N+1) \times c}$, $R_{N \times c}$ as zero matrices, and $b = \text{Avg}(\mathcal{X})$
\FOR{$i = N$ \textbf{to} $1$}
    \FOR{$j = 1$ \textbf{to} $c$}
        \IF{$j \leq l$}
            \STATE $M[i, j] = M[i+1, j] + w_i (y_i - b) ; R[i, j] = j$
        \ELSIF{$j > l$ \textbf{and} $(y_i - b) < 0$}
            \STATE $M[i, j] = M[i+1, j] ; R[i, j] = j$
        \ELSE
            \STATE $M[i, j] = M[i+1, l] + w_i (y_i - b) ; R[i, j] = l$
        \ENDIF
    \ENDFOR
\ENDFOR
\STATE Initialize $S = \{\} ; j = c$
\FOR{$i = 1$ \textbf{to} $N$}
    \STATE Set $(k, l) = x_i$
    \IF{$R(i, j) \leq l$}
        \STATE Add $i$ to $S$
    \ENDIF
    \STATE Update $j = R(i, j)$
\ENDFOR
\STATE \textbf{Return:} $S$
\end{algorithmic}
\end{algorithm}

\subsection*{Complexity Analysis}
The algorithm iterates over rows in decreasing order and columns in increasing order. For each cell $(i, j)$, the computation involves constant time operations. This leads to a complexity of $O(N \cdot k) \approx O(n \cdot k^2)$ for building $M$ and $R$. Since the algorithm recursively splits $\mathcal{X}$ into two non-empty sets, the overall complexity is $O(N^2) \approx O(n^2 k^4)$ (see Lemma 4.1 in \cite{Spouge2003Least-squares-i} for details).

\subsection*{Prediction on the Grid}
After computing $g^*$ using the outlined method, we extend the framework to include fast predictions for new points. A straightforward generalization from the univariate case is:
\[
g^*(p, k) =
\begin{cases}
\min\{g^*(q, l) \mid (q, l) \in \mathcal{X}\} & \text{if } (p, k) \preceq (q, l) \; \forall (q, l) \in \mathcal{X}, \\
\max\{g^*(q, l) \mid (q, l) \in \mathcal{X}, \; (q, l) \preceq (p, k)\} & \text{otherwise}.
\end{cases}
\]

To ensure efficiency on the grid, we define a recursive function $S$ that pre-computes the maximal prediction for any grid point $(i, j)$. Initialize:
\[
S(0, \cdot) = \min\{g^*(q, l) \mid (q, l) \in \mathcal{CU}_{\text{agg}}\},
\]
and for $S(i, j)$ with $x_i = (i, l)$, use the recursive formula:
\[
S(i, j) =
\begin{cases}
S(i-1, j) & \text{if } j < l, \\
\max(g^*(x_i), S(i-1, j)) & \text{if } j \geq l.
\end{cases}
\]

This computation requires $O(N \cdot k)$ time for pre-computing $S(i, j)$ and $O(\log(N \cdot k))$ for a new prediction using binary search.

\subsection{NA-FIR} 
\label{appendix:alg-NA-FIR}
We will note that the FIR as described cannot be solved using previous publications on isotonic solutions,
we thus considered alternatives for local minima searching: 
\begin{enumerate}
    \item MCMC optimization (also known as simulated annealing), that works by probabilistic search in the isotonic region.
    \item Simple implementation of GD that works by calculating in each step $\frac{\partial f}{\partial g}|_{g(x)}$ and after the update projects onto the isotonic region by applying PAVA.
    \item Quadratic programming based methods- SLQSP, trust-constr from scipy were considered. 
    \item Use a penalty approach-code violation into objective function.     
\end{enumerate}

\begin{algorithm}[h!]
\caption{MCMC Blockwise Normalized Aware Flattened Isotonic Optimization}
\label{alg:mcmc-blockwise-flatten-isotonic-appendix}
\begin{algorithmic}[1]
\STATE \textbf{Input:} Data $X, w, y$, hyperparameters $\epsilon_{\text{change}}, \beta, \text{num\_iterations}$,  $\text{min\_blocks}, \text{block\_split\_size\_threshold}$
\STATE Flatten $X$ and $y$: $x_{\text{flatten}} = X.\text{flatten()}$, $y_{\text{flatten}} = y.\text{flatten()}$
\STATE Initialize block structure and fits: $[\text{block\_structure}, \text{block\_fits}] \gets \text{PAVA}(X, y, w)$
\STATE Refine block structure: $[\text{block\_structure}, \text{block\_fits}] \gets \text{split\_blocks}(\text{block\_structure}, \text{min\_blocks}, \text{block\_split\_size\_threshold})$
\FOR{$i = 1$ \textbf{to} $\text{num\_iterations}$}
    \STATE Select a random block: $\text{block} \sim \mathcal{U}(1, \text{len}(\text{block\_structure}) )$
    \STATE Perturb block fit: $\delta \sim \{-1, 1\}$, $\text{change} \gets \epsilon_{\text{change}} \cdot \delta$
    \STATE Update suggested fit: $\text{suggested\_block\_fit} \gets \text{block\_fits}[\text{block}] + \text{change}$
    \STATE Validate isotonicity on $\text{suggested\_block\_fit}$
    \IF{$\text{likelihood} > \text{current\_likelihood}$}
        \STATE Update: $\text{best\_likelihood} \gets \text{likelihood}$, $\text{best\_fit} \gets \text{suggested\_block\_fits}$
    \ENDIF
    \IF{$\text{likelihood} > \text{current\_likelihood}$ \textbf{or} $\text{rand()} < \exp(\beta \cdot (\text{likelihood} - \text{current\_likelihood}))$}
        \STATE Accept: $\text{block\_fits} \gets \text{suggested\_block\_fits}$
    \ENDIF
\ENDFOR
\STATE \textbf{Return:} $\text{best\_block\_fits}$
\end{algorithmic}
\end{algorithm}

where we implemented split blocks to be a partition of the current block structure to equal mass points within each block such that they are not less then $block\_size\_split\_threshold$. since largest bins are for small probabilities, to complete the bin partition smaller bins are used for the remainder. see illustrative example in \cref{fig:blocks-split}.

\begin{figure}[h!]
\vskip 0.2in
\begin{center}
\centerline{\includegraphics[width=0.5\columnwidth]{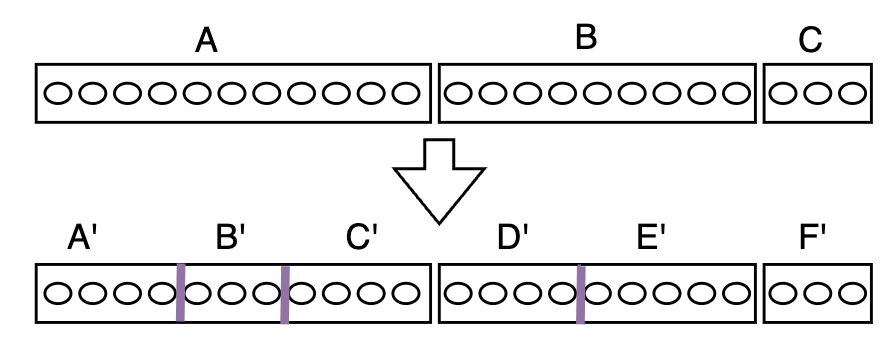}}
\caption{A simple example for block split with values $block\_size\_split\_threshold=2$, $min\_blocks=6$ and original PAVA solution of block-structure. }
\label{fig:blocks-split}
\end{center}
\vskip -0.2in
\end{figure}

\textbf{Scale. } In order to compute \cref{eq:NA-FIR-optimization} we need to calculate the likelihood in each iteration which can result in complexity of $O(m\cdot k \cdot \text{num\_iterations})$. This can lead to high number of operations for high dimensional datasets. given $B$ blocks that partition the $[0,1]$ domain to non-overlapping bins $\text{Bin}_1,\dots,\text{Bin}_B$ define: 
$$b_j\coloneqq|\{(i,l)\in[m]\times[k]\mid\hat{p}(x_i)_l\in\text{Bin}_j\ \land (y_i)_l=1\}| $$ for $j$ in [B] and let $C\in\mathbb{N}^{m\times B}$ be a matrix where 
$$c_{i,j}\coloneqq|\{l\in[k]\mid\hat{p}(x_i)_l\in\text{Bin}_j\}|$$
Let $g_j$ denote the value assigned for $\text{Bin}_j$ and define $T=C\begin{bmatrix}g_1\\\vdots\\g_B\end{bmatrix}\in\mathbb{R}^m$ 
We can rewrite the log-likelihood from \cref{eq:NA-FIR-optimization} as follows:
\begin{align*}
    L(g) &=\sum_{i=1}^m\sum_{l=1}^{k}(y_i)_l\cdot \log \left( g(\hat{p}(x_i)_l)
\right) -  \sum_{i=1}^m\log(\sum_{j=1}^kg(\hat{p}(x_i)_j)) \\ 
 &= \sum_{i=1}^B b_j\log (g_j)-\sum_{i=1}^m\log\left(T_i\right)
\end{align*}
Since each iteration in our optimization changes only one block value $g_j$, we can exploit this sparsity to efficiently compute the updated objective. We incrementally update the relevant entries via $T_{new}=T+C_j\cdot\epsilon_{\text{change}}\cdot\delta$, where $C_j$ is the $j$'th column and $\epsilon_{\text{change}}, \delta$ are defined in \cref{alg:mcmc-blockwise-flatten-isotonic}. This allows us to calculate the likelihood in  $O(m)$ operations per iteration.

In our experiments, we ran the MCMC algorithm with $beta=200$, a maximum of number of $10^5$ iterations, and an early stopping criterion triggered if the best likelihood did not improve for $10^4$ iterations. This configuration consistently yielded stable results and often outperformed alternatives, even when the number of bins matched the PAVA solution. For simplicity, we report results based on this setting.

\clearpage
\section{Additional Experimental Details} 
\label{expermintal-details-appendix}
\subsection{Dataset Descriptions}

\begin{enumerate}
    \item \textbf{CIFAR-10}: Consists of 60,000 32x32 color images in 10 classes, with 6,000 images per class.
    \item \textbf{CIFAR-100}: A refinement of CIFAR-10 with 100 classes, each containing 600 images per class.
    \item \textbf{Food-101}: Comprises 101 food categories with a total of 101,000 images. Each class contains 250 manually reviewed test images and 750 training images.
    \item \textbf{ImageNet-1k}: Contains 1.2 million training images and 50,000 validation images across 1,000 object categories. Drawn from the ImageNet Large Scale Visual Recognition Challenge (ILSVRC).
    \item \textbf{R52}: Collected from the Reuters financial newswire service in 1987. Only label categories appearing in both train and test datasets are retained.
    \item \textbf{NG20}: Contains approximately 18,000 newsgroup posts on 20 topics.
    \item \textbf{Yelp Review}: Comprises Yelp reviews and their respective star ratings. Extracted from the Yelp Dataset Challenge 2015 data.
\end{enumerate}

\begin{table}[ht]
\centering
\caption{Comparison Study Dataset Descriptions}
\label{table:datasets}
\begin{tabular}{|l|l|c|c|c|c|}
\hline
\textbf{Dataset} & \textbf{Type} & \textbf{\# Classes} & \textbf{\# Models Used} & \textbf{Validation Set Size} & \textbf{Test Set Size} \\ \hline
CIFAR-10         & Image Classification & 10 & 4  & 5k   & 10k    \\ \hline
CIFAR-100        & Image Classification & 100 & 13 & 5k / 2.5k   & 10k / 7.5k \\ \hline
Food-101         & Image Classification & 101 & 8  & 6.3k & 18.6k  \\ \hline
Imagenet-1k      & Image Classification & 1000 & 8 & 12.5k & 37.5k \\ \hline
R52              & Text Classification  & 52  & 10 & 0.6k & 1.9k   \\ \hline
NG20             & Text Classification  & 20  & 10 & 1.7k & 5.3k   \\ \hline
Yelp Review      & Text Classification  & 5   & 22 & 12.5k & 37.5k \\ \hline
\end{tabular}
\end{table}

\subsection*{Text Classifiers}

Since for text classification there were not enough datasets with many classes and pre-trained classifiers, we trained models based on \cite{Li2022A-survey-on-tex}  text classification review for all datasets.

\paragraph{R52 and NG20:} A total of 10 models were trained. All BERT-based models were trained with an evaluation set comprising 10\% of the data for 5 epochs, using a learning rate of 1.5e-05 and a batch size of 16. The best model was selected based on evaluation set accuracy. For simpler neural network models (FastText, Kim-CNN, and SWEM-Concat), the best-performing model was chosen from the following parameter grid:
\begin{itemize}
    \item Batch size (bs): [16, 32]
    \item Learning rate (lr): [1e-3, 5e-3, 1e-2]
\end{itemize}
These models used GloVe 6B with 300-dimensional embeddings. For Naive Bayes (NB) and SVM linear models 5-fold cross-validation was with Parameter grids of:
\begin{itemize}
    \item Naive Bayes (alpha): [0.01, 0.05, 0.1, 1.0, 10.0]
    \item SVM (C): [0.1, 1, 3, 10, 100]
\end{itemize}

Model architecture trained:
\begin{itemize}
    \item albert-base-v2 (\href{https://huggingface.co/albert/albert-base-v2}{link}) 
    \item bert-base-uncased (\href{https://huggingface.co/google-bert/bert-base-uncased}{link})
    \item bert-large-uncased (\href{https://huggingface.co/google-bert/bert-large-uncased}{link})
    \item distilbert-base-uncased (\href{https://huggingface.co/distilbert/distilbert-base-uncased}{link})
    \item fast-text
    \item kim-cnn
    \item nb (\href{https://scikit-learn.org/1.5/modules/generated/sklearn.naive_bayes.MultinomialNB.html#sklearn.naive_bayes.MultinomialNB}{link})
    \item SVC(kernel="linear") (\href{https://scikit-learn.org/1.6/modules/generated/sklearn.svm.SVC.html}{link})
    \item swem-concat
    \item xlnet-base-cased (\href{https://huggingface.co/xlnet/xlnet-base-cased}{link})
\end{itemize}

\paragraph{Yelp Review (YR):} A total of 22 models were trained. The same models as above were trained on this dataset, with the following adjustments:
\begin{itemize}
    \item Evaluation set size reduced to 1\% of the data.
    \item Introduced checkpointing every 500 steps.
    \item Added early stopping with a minimum 0.5\% improvement in evaluation accuracy.
\end{itemize}

Additionally, 12 pre-trained models from HuggingFace were used:
\begin{itemize}
    \item bert-base-uncased-hf (\href{https://huggingface.co/rttl-ai/bert-base-uncased-yelp-reviews}{link})
    \item bert-base-cased-hf (\href{https://huggingface.co/jenniferjane/Bert_Classifier}{link})
    \item bert-base-cased-hf-1 (\href{https://huggingface.co/Shunian/yelp_review_classification}{link})
    \item bert-base-cased-hf-2 (\href{file:///private/var/folders/sb/wywx7d757qv655b1ft0wxx_c0000gn/T/lyx_tmpdir.CRlDBBwUBeTs/lyx_tmpbuf0/%20https://huggingface.co/Maximofn/bert-base-cased_notebook_transformers_30-epochs_yelp_review_subset}{link})
    \item bert-base-cased-hf-3 (\href{https://huggingface.co/csabakecskemeti/bert-base-case-yelp5-tuned-experiment}{link})
    \item bert-base-cased-hf-4 (\href{https://huggingface.co/chanyaphas/finetuned_yelp}{link})
    \item distilbert-base-uncased-hf (\href{https://huggingface.co/Ramamurthi/distilbert-base-uncased-finetuned-yelp-reviews}{link})
    \item distilbert-base-uncased-hf-1 (\href{https://huggingface.co/prudhvip21/distilbert-base-uncased-finetuned-emotion}{link})
    \item gpt2-hf (\href{https://huggingface.co/mllm-dev/yelp_finetuned_sbatch_auto_upload_larger_train}{link})
    \item roberta-hf (\href{https://huggingface.co/Shunian/yelp_review_rating_reberta_base}{link})
    \item ZS-deberta-base (\href{https://huggingface.co/sileod/deberta-v3-base-tasksource-nli%20}{link})
    \item ZS-deberta-small (\href{https://huggingface.co/tasksource/deberta-small-long-nli}{link})
\end{itemize}

\subsection*{C.2 Image Classifiers}

\paragraph{CIFAR-10:} A total of 4 models were used, all provided by the Focal Calibration Library (\href{https://www.robots.ox.ac.uk/~viveka/focal_calibration/CIFAR10/}{link}):
\begin{itemize}
    \item Densenet121
    \item Resnet110
    \item Resnet50
    \item Wideresnet
\end{itemize}

\paragraph{CIFAR-100:} A total of 4 models were used from the Focal Calibration Library (\href{https://www.robots.ox.ac.uk/~viveka/focal_calibration/CIFAR100/}{link}), and an additional 9 models from HuggingFace:
\begin{itemize}
\item DenseNet121 (\href{https://huggingface.co/edadaltocg/densenet121_cifar100%20}{link})
    \item ResNet18 (\href{file:///private/var/folders/sb/wywx7d757qv655b1ft0wxx_c0000gn/T/lyx_tmpdir.CRlDBBwUBeTs/lyx_tmpbuf0/%20https://huggingface.co/edadaltocg/resnet18_cifar100%20}{link})
    \item ResNet34 (\href{https://huggingface.co/edadaltocg/resnet34_cifar100%20}{link})
    \item ResNet50 (\href{https://huggingface.co/edadaltocg/resnet50_cifar100%20}{link})
    \item Swin (\href{https://huggingface.co/Skafu/swin-tiny-patch4-window7-224-cifar_100f_from_10}{link})
    \item Swin-1 (\href{https://huggingface.co/MazenAmria/swin-tiny-finetuned-cifar100%20}{link})
    \item Vgg16 (\href{https://huggingface.co/edadaltocg/vgg16_bn_cifar100%20}{link})
    \item ViT (\href{https://huggingface.co/Ahmed9275/Vit-Cifar100}{link})
    \item ViT-1 (\href{https://huggingface.co/edumunozsala/vit_base-224-in21k-ft-cifar100}{link})
\end{itemize}

\paragraph{Food-101:} All 8 models were sourced from HuggingFace:
\begin{itemize}
    \item Swin (\href{https://huggingface.co/skylord/swin-finetuned-food101}{link})
    \item Swin-1 (\href{https://huggingface.co/Neruoy/swin-finetuned-food101-e3}{link})
    \item Swin-2 (\href{https://huggingface.co/Gofaone/swin-finetuned-food101%20}{link})
    \item Swin-3 (\href{}{link})
    \item ViT (\href{file:///private/var/folders/sb/wywx7d757qv655b1ft0wxx_c0000gn/T/lyx_tmpdir.CRlDBBwUBeTs/lyx_tmpbuf0/%20https://huggingface.co/nateraw/food%20}{link})
    \item ViT-1 (\href{https://huggingface.co/eslamxm/vit-base-food101%20}{link})
    \item ViT-2 (\href{https://huggingface.co/alexavsatov/vit-base-patch16-224-finetuned-eurosat%20}{link})
    \item ViT-3 (\href{https://huggingface.co/Dricz/food-classifier-224}{link})
\end{itemize}

\paragraph{Imagenet-1k:} All 8 models were sourced from HuggingFace \cite{rw2019timm}:
\begin{itemize}
    \item Beit (\href{https://huggingface.co/timm/beit_large_patch16_512.in22k_ft_in22k_in1k}{link})
    \item ConvNeXT (\href{https://huggingface.co/timm/convnextv2_huge.fcmae_ft_in22k_in1k_512}{link})
    \item Swin (\href{https://huggingface.co/timm/swinv2_large_window12to24_192to384.ms_in22k_ft_in1k}{link})
    \item ResNetRS-B (\href{https://huggingface.co/timm/resnetrs350.tf_in1k}{link})
    \item DenseNet-121 (\href{https://huggingface.co/timm/densenetblur121d.ra_in1k}{link})
    \item Eva (\href{https://huggingface.co/timm/eva02_large_patch14_448.mim_m38m_ft_in22k_in1k}{link})
    \item MobileNetV3 (\href{https://huggingface.co/timm/mobilenetv3_small_100.lamb_in1k}{link})
    \item ViT
    (\href{https://huggingface.co/google/vit-base-patch16-224}{link})
    
\end{itemize}

\clearpage
\section{Extended Results} \label{extended-results-appendix}




\subsubsection{Training Optimization}
\paragraph{SCIR}
Empirical run time estimations for NG20 dataset for both for min-cut graph partition (using Igraph package from python) and dynamic grid partition:

we can note that our proposed dynamic grid partition runs not only much faster but gains seems to be linear. this phenomena is also replicated using much bigger dataset (cifar-100 with 10K test samples) and a comparison to randomly generated data yielding the below results:

Allowing us to scale efficiently for all the datasets included. 
\begin{figure}[h!]
    \centering
    \includegraphics[width=0.5\linewidth]{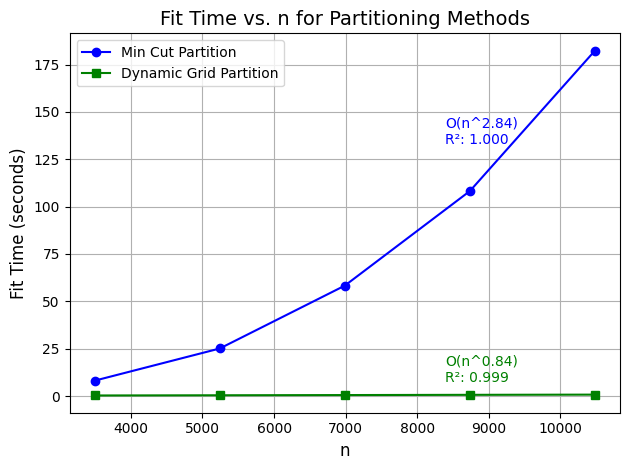}
    \caption{fit time in seconds for dynamic grid partition vs min cut partition}
    \label{fig:bivariate-isotonic-times-comparison}
\end{figure}

\begin{figure}[h!]
    \centering
    \includegraphics[width=0.5\linewidth]{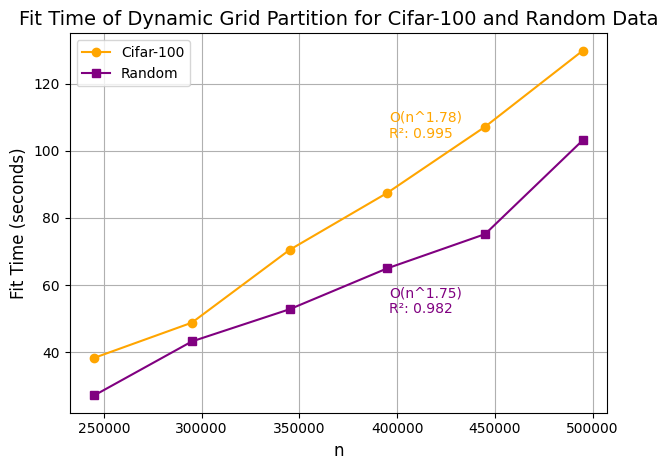}
    \caption{fit time in seconds both for Cifar-100 data - each time binning a few classes together to create less samples in the cumulative induce problem set and randomly generated data to account for inter-dependencies due to the structure of the problem.}
    \label{fig:bivariate-isotonic-times-O(N)}
\end{figure}
%
\clearpage
\subsubsection{Comparisons}
\textbf{Comparison of calibration methods for all metrics across datasets}
\begin{figure*}[th]
\vskip 0.2in
\begin{center}
\begin{subfigure}{0.48\textwidth}
    \centering
        \includegraphics[width=\textwidth]{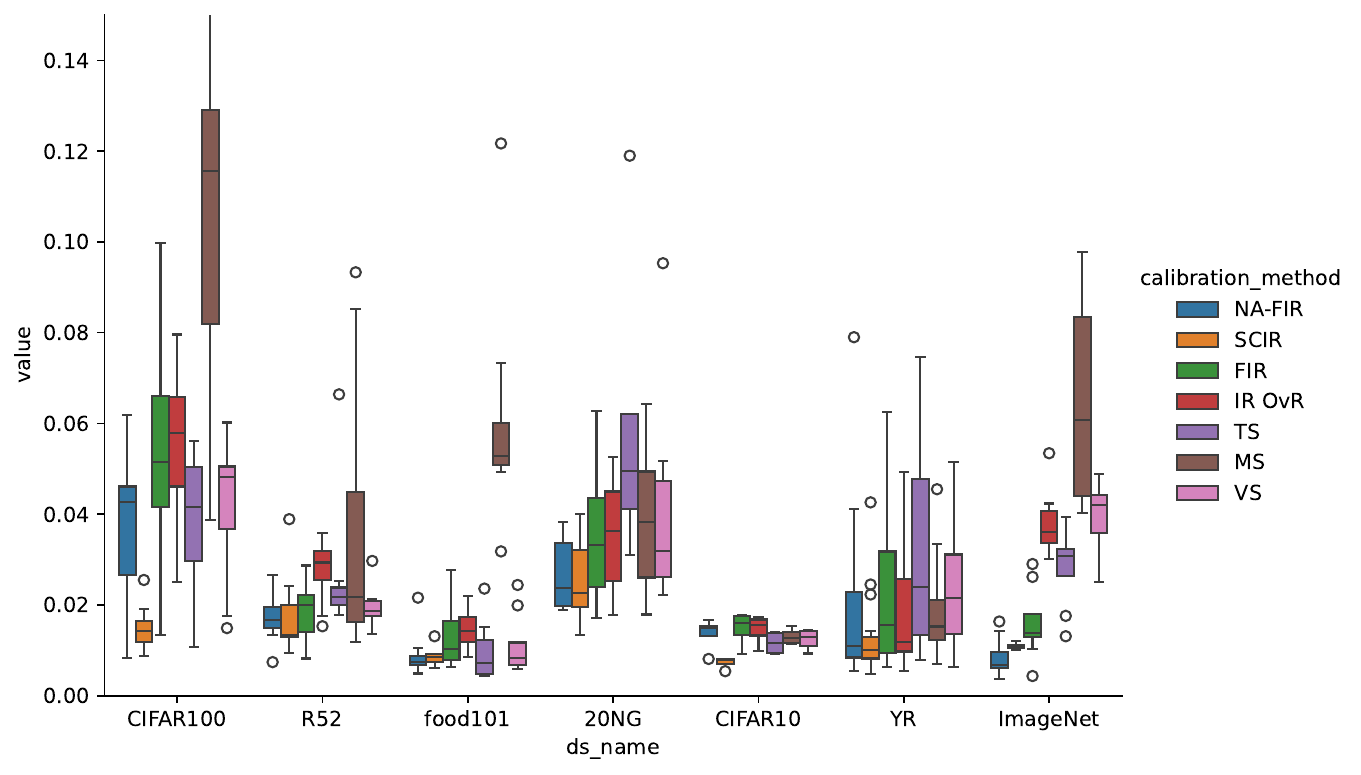} 
        \caption{conf-ECE}
\end{subfigure}
\vspace{0.5cm}
\begin{subfigure}{0.48\textwidth}
    \centering
        \includegraphics[width=\textwidth]{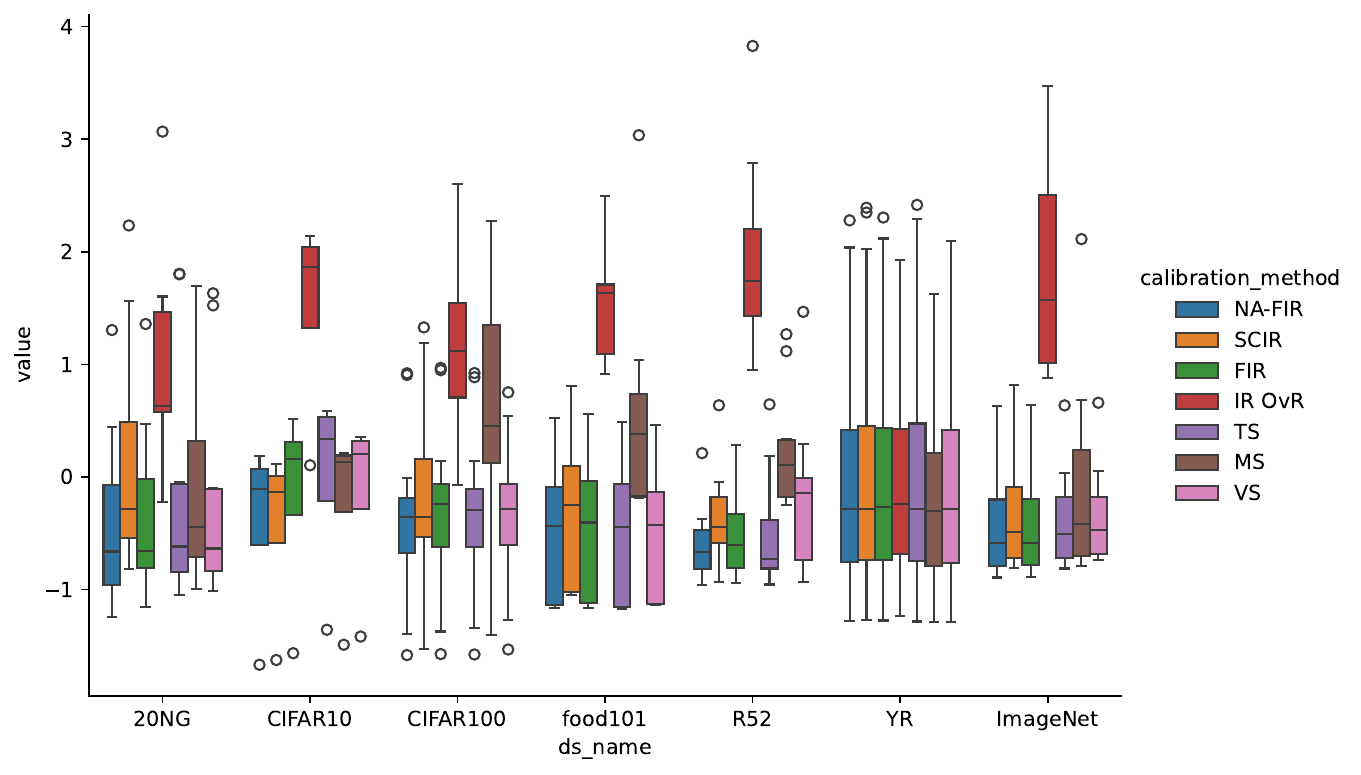} 
        \caption{NLL}
    \end{subfigure}
\caption{Comparison of conf-ECE and NLL values across datasets. NLL values were normalized within each dataset by subtracting the mean and dividing by the standard deviation.}
\label{fig:boxplot-values-comparison-across-datasets}
\end{center}
\end{figure*}

\begin{figure*}[h]
\vskip 0.2in
\begin{center}
\begin{subfigure}{0.48\textwidth}
    \centering
        \includegraphics[width=\textwidth]{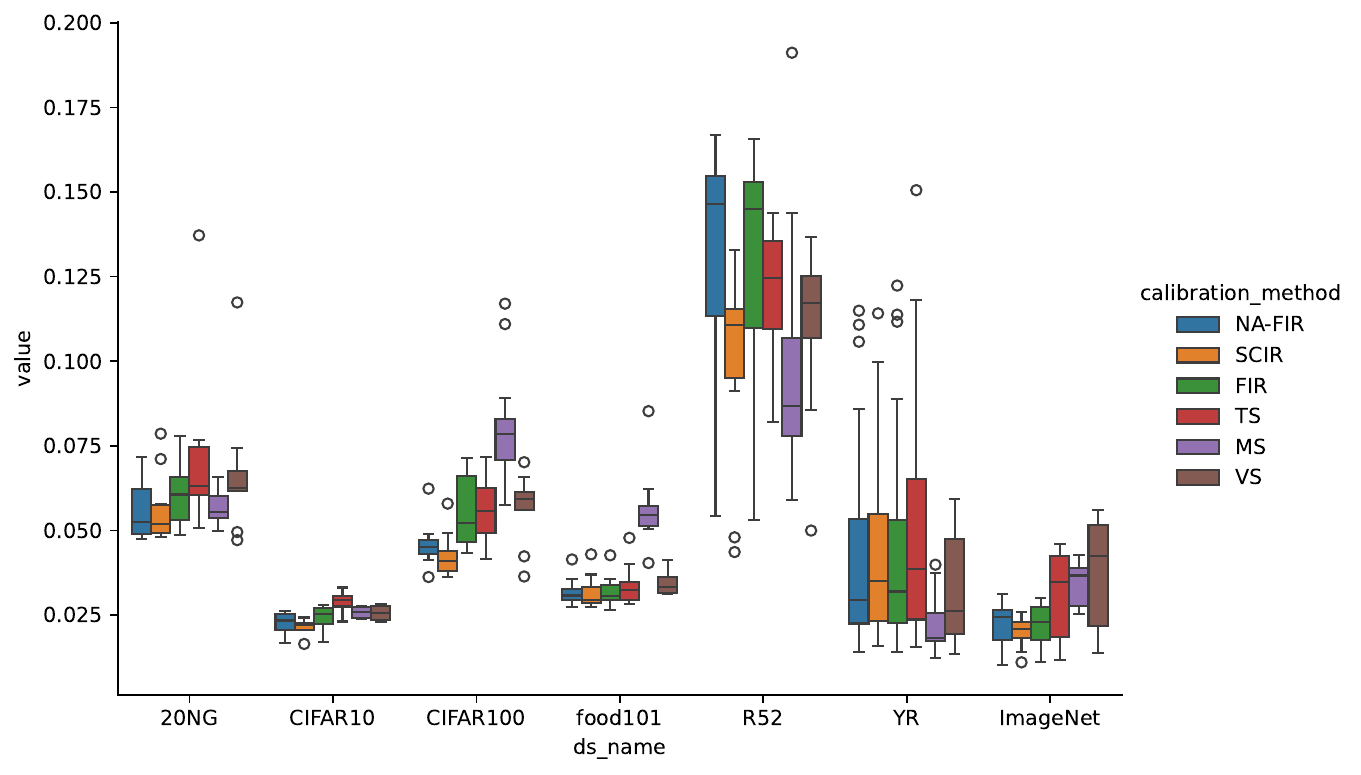} 
        \caption{TECE}
\end{subfigure}
\vspace{0.5cm}
\begin{subfigure}{0.48\textwidth}
    \centering
        \includegraphics[width=\textwidth]{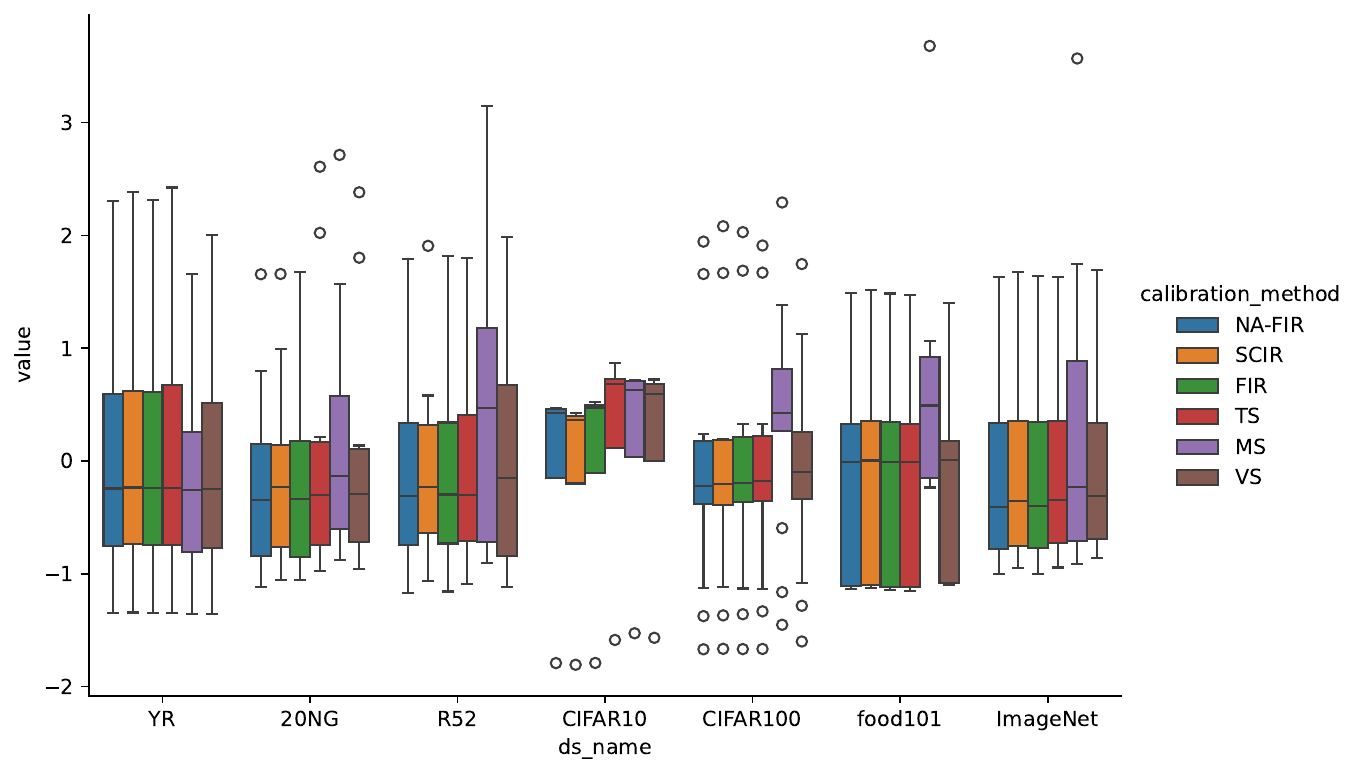} 
        \caption{Brier Score}
    \end{subfigure}
\caption{Comparison of TECE and Brier score values across datasets. Brier score values were normalized within each dataset by subtracting the mean and dividing by the standard deviation.}
\label{fig:tece-bs-boxplot-values-comparison-across-datasets}
\end{center}
\end{figure*}

\begin{figure*}[htbp]
\vskip 0.2in
\begin{center}
\begin{subfigure}{0.48\textwidth}
    \centering
        \includegraphics[width=\textwidth,
        height=0.4\textheight, keepaspectratio
        ]{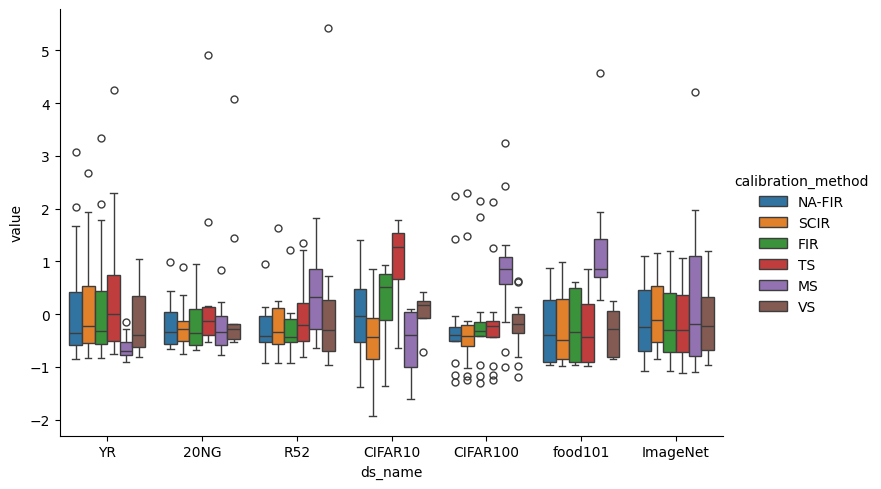} 
        \caption{CECE non normalized}
\end{subfigure}
\vspace{0.5cm}
\begin{subfigure}{0.48\textwidth}
    \centering
        \includegraphics[width=\textwidth,
        height=0.4\textheight, keepaspectratio]{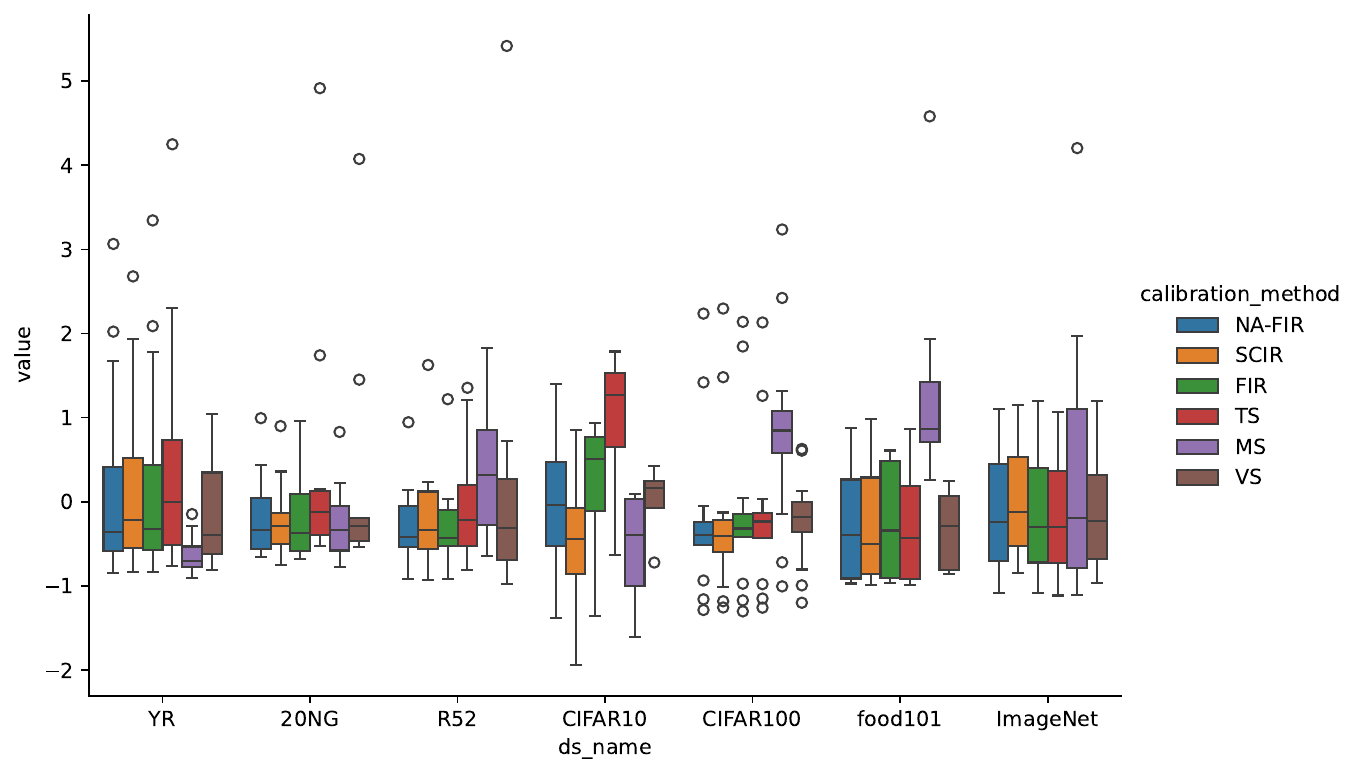} 
        \caption{CECE normalized}
    \end{subfigure}
\caption{Comparison of CECE across datasets. Both normalized and non normalized plots are presented as differences are small.}
\label{fig:cece-boxplot-values-comparison-across-datasets}
\end{center}
\end{figure*}

\clearpage
\textbf{Win Ratios for all metrics}
Each cell in the heatmap represent the percentage of times the calibration method on the row had achieved better score then the calibration method on the column.

\begin{figure*}[h]
\vskip 0.2in
\begin{center}
\begin{subfigure}{0.48\textwidth}
    \centering
        \includegraphics[width=\textwidth]{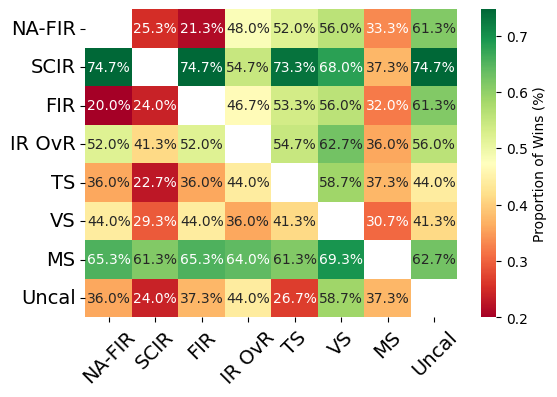} 
        \caption{Accuracy}
\end{subfigure}
\vspace{0.5cm}
\begin{subfigure}{0.48\textwidth}
    \centering
        \includegraphics[width=\textwidth]{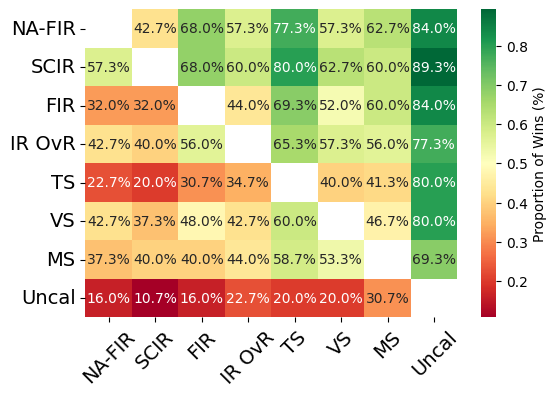} 
        \caption{TECE}
    \end{subfigure}
\end{center}
\vspace{0.5cm}
\begin{subfigure}{0.48\textwidth}
    \centering
        \includegraphics[width=\textwidth]{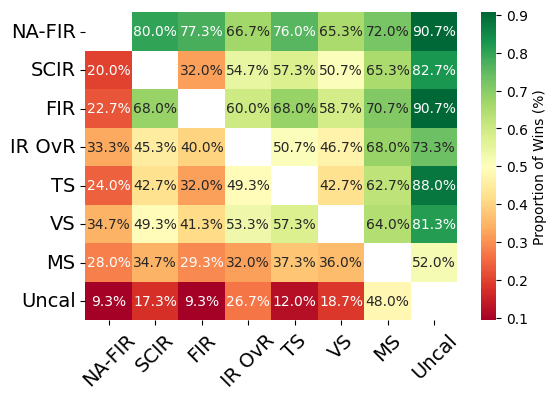} 
        \caption{BS}
    \end{subfigure}
\vspace{0.5cm}
\begin{subfigure}{0.48\textwidth}
    \centering
        \includegraphics[width=\textwidth]{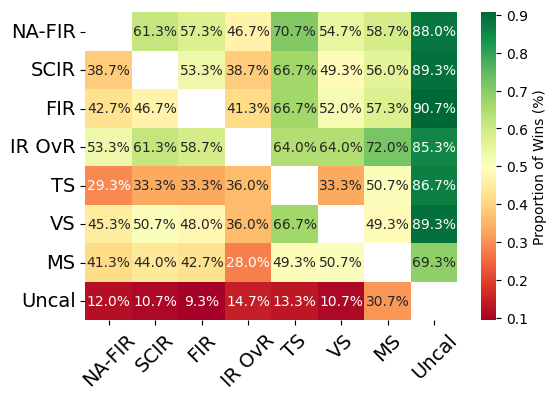} 
        \caption{CECE}
    \end{subfigure}
\label{fig:heatmap-comparison-other-metrics}
\end{figure*}

\clearpage
\textbf{Model architectures results comparison}
A comparison between older and newer architectures across given relevant datasets. Note that number of samples per model architecture is low. 

\textbf{text models}

\begin{figure}[h!]
    \centering
    \includegraphics[width=\linewidth]{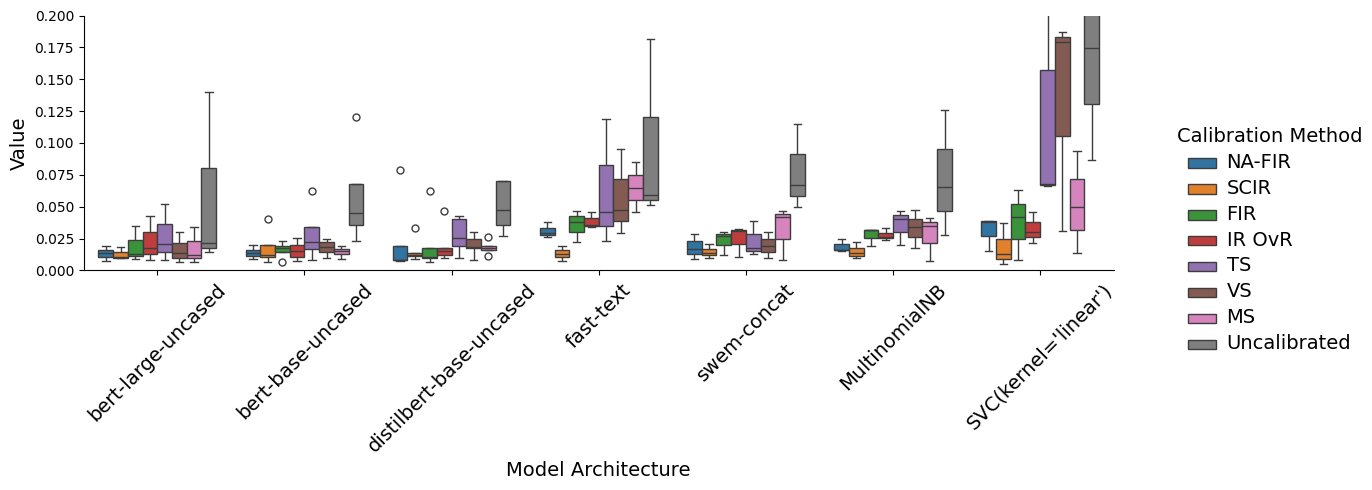}
    \caption{conf-ECE plotted for same model architecture and aggregated across datasets}
    \label{fig:conf-ece-text-models-comparison}
\end{figure}

\begin{figure}[h!]
    \centering
    \includegraphics[width=\linewidth]{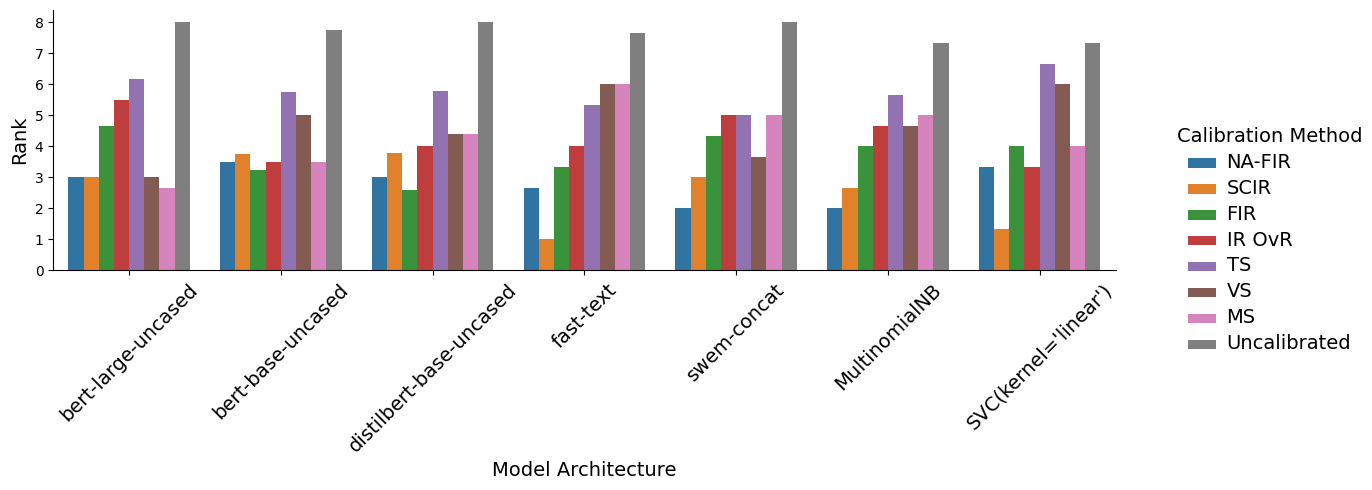}
    \caption{conf-ECE rank mean plotted for same model architecture and aggregated across datasets}
    \label{fig:rank-conf-ece-text-models-comparison}
\end{figure}

\begin{figure}[h!]
    \centering
    \includegraphics[width=\linewidth]{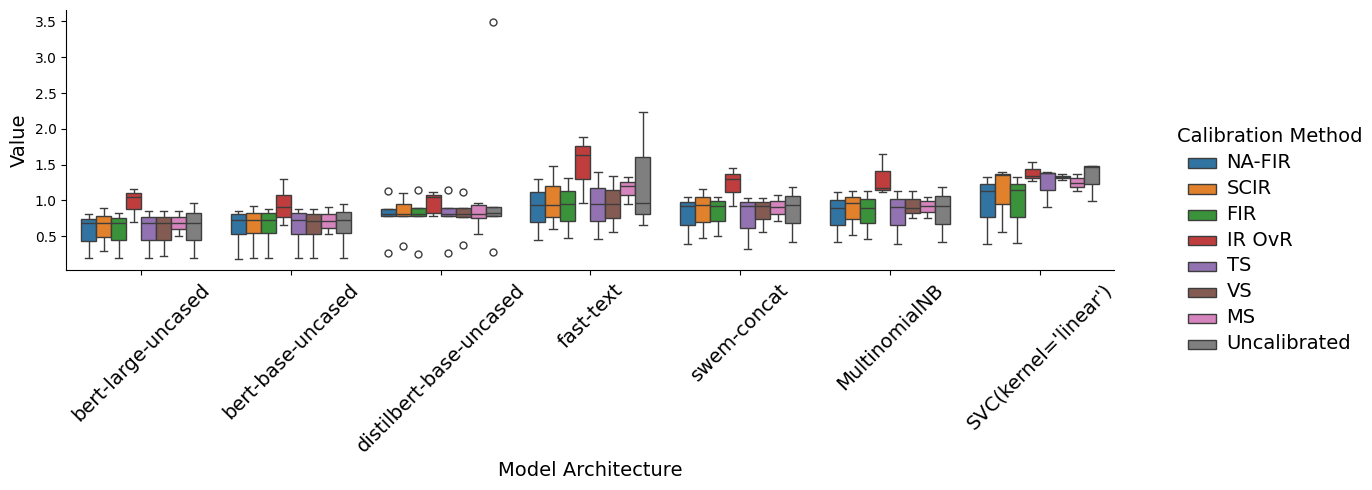}
    \caption{NLL plotted for same model architecture and aggregated across datasets}
    \label{fig:NLL-text-models-comparison}
\end{figure}

\begin{figure}[h!]
    \centering
    \includegraphics[width=\linewidth]{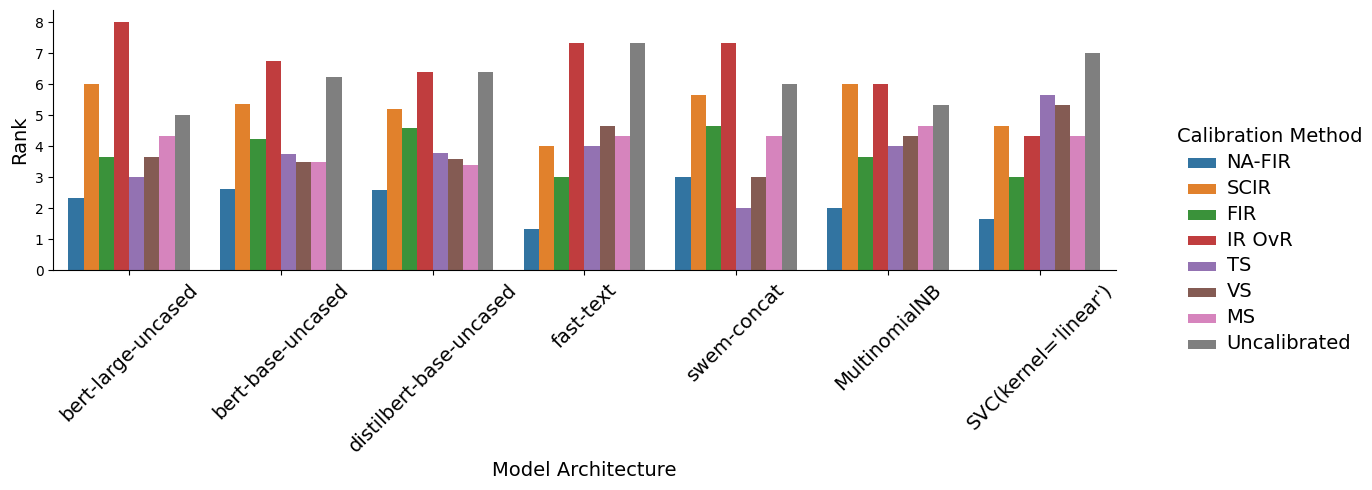}
    \caption{NLL rank mean plotted for same model architecture and aggregated across datasets}
    \label{fig:NLL-rank-conf-ece-text-models-comparison}
\end{figure}

\clearpage
\textbf{Per Model tables}
\begin{table*}[h!]
\caption{CIFAR 100 models comparison}
\begin{Large}
\begin{center}
\begin{sc}
\label{tab:cifar100-models-comparison}
\renewcommand{\arraystretch}{1.2}
\vskip 0.15in
\resizebox{\columnwidth}{!}{%
\begin{tabular}{|l|rrrrrr|rrrrrr|}
\hline
\rowcolor[HTML]{C0C0C0} 
{\color[HTML]{333333} \textbf{Model Archeticture}}                   & \multicolumn{6}{c|}{\cellcolor[HTML]{C0C0C0}{\color[HTML]{333333} \textbf{DenseNet121}}}                                                                                                                                                                                                                                                                                                                                                        & \multicolumn{6}{c|}{\cellcolor[HTML]{C0C0C0}{\color[HTML]{333333} \textbf{ResNet110}}}                                                                                                                                                                                                                                                                                                                                                         \\ \hline
\rowcolor[HTML]{C0C0C0} 
\cellcolor[HTML]{C0C0C0}                                             & \multicolumn{1}{c|}{\cellcolor[HTML]{C0C0C0}}                           & \multicolumn{1}{c|}{\cellcolor[HTML]{C0C0C0}}                           & \multicolumn{1}{c|}{\cellcolor[HTML]{C0C0C0}}                      & \multicolumn{1}{c|}{\cellcolor[HTML]{C0C0C0}}                         & \multicolumn{1}{c|}{\cellcolor[HTML]{C0C0C0}}                     & \multicolumn{1}{c|}{\cellcolor[HTML]{C0C0C0}}                       & \multicolumn{1}{c|}{\cellcolor[HTML]{C0C0C0}}                           & \multicolumn{1}{c|}{\cellcolor[HTML]{C0C0C0}}                           & \multicolumn{1}{c|}{\cellcolor[HTML]{C0C0C0}}                      & \multicolumn{1}{c|}{\cellcolor[HTML]{C0C0C0}}                         & \multicolumn{1}{c|}{\cellcolor[HTML]{C0C0C0}}                     & \multicolumn{1}{c|}{\cellcolor[HTML]{C0C0C0}}                       \\
\rowcolor[HTML]{C0C0C0} 
\multirow{-2}{*}{\cellcolor[HTML]{C0C0C0}\textbf{Calibration model}} & \multicolumn{1}{c|}{\multirow{-2}{*}{\cellcolor[HTML]{C0C0C0}Accuracy}} & \multicolumn{1}{c|}{\multirow{-2}{*}{\cellcolor[HTML]{C0C0C0}conf-ECE}} & \multicolumn{1}{c|}{\multirow{-2}{*}{\cellcolor[HTML]{C0C0C0}NLL}} & \multicolumn{1}{c|}{\multirow{-2}{*}{\cellcolor[HTML]{C0C0C0}cw-ECE}} & \multicolumn{1}{c|}{\multirow{-2}{*}{\cellcolor[HTML]{C0C0C0}BS}} & \multicolumn{1}{c|}{\multirow{-2}{*}{\cellcolor[HTML]{C0C0C0}TECE}} & \multicolumn{1}{c|}{\multirow{-2}{*}{\cellcolor[HTML]{C0C0C0}Accuracy}} & \multicolumn{1}{c|}{\multirow{-2}{*}{\cellcolor[HTML]{C0C0C0}conf-ECE}} & \multicolumn{1}{c|}{\multirow{-2}{*}{\cellcolor[HTML]{C0C0C0}NLL}} & \multicolumn{1}{c|}{\multirow{-2}{*}{\cellcolor[HTML]{C0C0C0}cw-ECE}} & \multicolumn{1}{c|}{\multirow{-2}{*}{\cellcolor[HTML]{C0C0C0}BS}} & \multicolumn{1}{c|}{\multirow{-2}{*}{\cellcolor[HTML]{C0C0C0}TECE}} \\ \hline
Uncalibrated                                                         & 75.48\%                                                                 & 20.98\%                                                                 & 2.056                                                              & 0.450\%                                                               & 0.00446                                                           & 17.08\%                                                             & 77.27\%                                                                 & 19.05\%                                                                 & 1.792                                                              & 0.41\%                                                                & 0.004071                                                          & 16.61\%                                                             \\ \hline
\rowcolor[HTML]{EFEFEF} 
NA-FIR                                                               & 75.51\%                                                                 & 4.91\%                                                                  & \textbf{1.112}                                                     & 0.229\%                                                               & 0.00359                                                           & 4.12\%                                                              & 77.35\%                                                                 & 5.38\%                                                                  & \textbf{0.977}                                                     & 0.21\%                                                                & 0.003297                                                          & 4.48\%                                                              \\ \cline{1-1}
SCIR                                                                 & 75.20\%                                                                 & \textbf{1.57\%}                                                         & 1.204                                                              & 0.221\%                                                               & \textbf{0.00353}                                                  & \textbf{3.62\%}                                                     & 77.00\%                                                                 & \textbf{1.27\%}                                                         & 0.995                                                              & \textbf{0.20\%}                                                       & \textbf{0.003248}                                                 & \textbf{4.00\%}                                                     \\ \cline{1-1}
\rowcolor[HTML]{EFEFEF} 
FIR                                                                  & 75.51\%                                                                 & 7.56\%                                                                  & 1.194                                                              & 0.238\%                                                               & 0.00370                                                           & 7.06\%                                                              & 77.35\%                                                                 & 7.08\%                                                                  & 1.047                                                              & 0.22\%                                                                & 0.003377                                                          & 7.14\%                                                              \\ \cline{1-1}
IR OvR                                                               & 75.28\%                                                                 & 7.96\%                                                                  & 1.718                                                              & \textbf{0.218\%}                                                      & 0.00370                                                           & 5.27\%                                                              & 77.11\%                                                                 & 7.36\%                                                                  & 1.539                                                              & 0.21\%                                                                & 0.003394                                                          & 5.87\%                                                              \\ \cline{1-1}
\rowcolor[HTML]{EFEFEF} 
TS                                                                   & 75.48\%                                                                 & 5.06\%                                                                  & 1.190                                                              & 0.236\%                                                               & 0.00371                                                           & 7.14\%                                                              & 77.27\%                                                                 & 5.04\%                                                                  & 1.045                                                              & 0.21\%                                                                & 0.003383                                                          & 7.17\%                                                              \\ \cline{1-1}
VS                                                                   & 75.84\%                                                                 & 4.83\%                                                                  & 1.169                                                              & 0.219\%                                                               & 0.00361                                                           & 6.07\%                                                              & 77.28\%                                                                 & 5.36\%                                                                  & 1.048                                                              & 0.22\%                                                                & 0.003387                                                          & 7.02\%                                                              \\ \cline{1-1}
\rowcolor[HTML]{EFEFEF} 
MS                                                                   & 71.79\%                                                                 & 13.87\%                                                                 & 1.860                                                              & 0.361\%                                                               & 0.00433                                                           & 8.09\%                                                              & 73.88\%                                                                 & 12.91\%                                                                 & 1.843                                                              & 0.34\%                                                                & 0.004118                                                          & 7.95\%                                                              \\ \hline
\end{tabular}%
}
\end{sc}
\end{center}
\end{Large}
Both models were used in \cite{Mukhoti2020Calibrating-dee}
\begin{Large}
\begin{center}
\begin{sc}
\renewcommand{\arraystretch}{1.2}
\resizebox{\columnwidth}{!}{%
\begin{tabular}{|l|rrrrrr|rrrrrr|}
\hline
\rowcolor[HTML]{C0C0C0} 
{\color[HTML]{333333} \textbf{Model Archeticture}}                   & \multicolumn{6}{c|}{\cellcolor[HTML]{C0C0C0}{\color[HTML]{333333} \textbf{Swin}}}                                                                                                                                                                                                                                                                                                                                                        & \multicolumn{6}{c|}{\cellcolor[HTML]{C0C0C0}{\color[HTML]{333333} \textbf{ViT}}}                                                                                                                                                                                                                                                                                                                                                         \\ \hline
\rowcolor[HTML]{C0C0C0} 
\cellcolor[HTML]{C0C0C0}                                             & \multicolumn{1}{c|}{\cellcolor[HTML]{C0C0C0}}                           & \multicolumn{1}{c|}{\cellcolor[HTML]{C0C0C0}}                           & \multicolumn{1}{c|}{\cellcolor[HTML]{C0C0C0}}                      & \multicolumn{1}{c|}{\cellcolor[HTML]{C0C0C0}}                         & \multicolumn{1}{c|}{\cellcolor[HTML]{C0C0C0}}                     & \multicolumn{1}{c|}{\cellcolor[HTML]{C0C0C0}}                       & \multicolumn{1}{c|}{\cellcolor[HTML]{C0C0C0}}                           & \multicolumn{1}{c|}{\cellcolor[HTML]{C0C0C0}}                           & \multicolumn{1}{c|}{\cellcolor[HTML]{C0C0C0}}                      & \multicolumn{1}{c|}{\cellcolor[HTML]{C0C0C0}}                         & \multicolumn{1}{c|}{\cellcolor[HTML]{C0C0C0}}                     & \multicolumn{1}{c|}{\cellcolor[HTML]{C0C0C0}}                       \\
\rowcolor[HTML]{C0C0C0} 
\multirow{-2}{*}{\cellcolor[HTML]{C0C0C0}\textbf{Calibration model}} & \multicolumn{1}{c|}{\multirow{-2}{*}{\cellcolor[HTML]{C0C0C0}Accuracy}} & \multicolumn{1}{c|}{\multirow{-2}{*}{\cellcolor[HTML]{C0C0C0}conf-ECE}} & \multicolumn{1}{c|}{\multirow{-2}{*}{\cellcolor[HTML]{C0C0C0}NLL}} & \multicolumn{1}{c|}{\multirow{-2}{*}{\cellcolor[HTML]{C0C0C0}cw-ECE}} & \multicolumn{1}{c|}{\multirow{-2}{*}{\cellcolor[HTML]{C0C0C0}BS}} & \multicolumn{1}{c|}{\multirow{-2}{*}{\cellcolor[HTML]{C0C0C0}TECE}} & \multicolumn{1}{c|}{\multirow{-2}{*}{\cellcolor[HTML]{C0C0C0}Accuracy}} & \multicolumn{1}{c|}{\multirow{-2}{*}{\cellcolor[HTML]{C0C0C0}conf-ECE}} & \multicolumn{1}{c|}{\multirow{-2}{*}{\cellcolor[HTML]{C0C0C0}NLL}} & \multicolumn{1}{c|}{\multirow{-2}{*}{\cellcolor[HTML]{C0C0C0}cw-ECE}} & \multicolumn{1}{c|}{\multirow{-2}{*}{\cellcolor[HTML]{C0C0C0}BS}} & \multicolumn{1}{c|}{\multirow{-2}{*}{\cellcolor[HTML]{C0C0C0}TECE}} \\ \hline
Uncalibrated                                                         & 87.19\%                                                                 & 3.28\%                                                                  & 0.4208                                                             & 0.15\%                                                                & 0.00185                                                           & 5.67\%                                                              & 89.63\%                                                                 & 6.36\%                                                                  & 0.448                                                              & 0.160\%                                                               & 0.001679                                                          & 7.14\%                                                              \\ \hline
\rowcolor[HTML]{EFEFEF} 
NA-FIR                                                               & 87.19\%                                                                 & \textbf{0.84\%}                                                         & \textbf{0.4116}                                                    & 0.14\%                                                                & \textbf{0.00183}                                                  & \textbf{4.61\%}                                                     & 89.57\%                                                                 & 1.68\%                                                                  & \textbf{0.368}                                                     & 0.121\%                                                               & \textbf{0.001517}                                                 & 4.52\%                                                              \\ \cline{1-1}
SCIR                                                                 & 87.20\%                                                                 & 0.88\%                                                                  & 0.4698                                                             & \textbf{0.13\%}                                                       & 0.00185                                                           & 4.91\%                                                              & 89.43\%                                                                 & \textbf{1.09\%}                                                         & 0.430                                                              & \textbf{0.118\%}                                                      & 0.001525                                                          & \textbf{4.39\%}                                                     \\ \cline{1-1}
\rowcolor[HTML]{EFEFEF} 
FIR                                                                  & 87.19\%                                                                 & 1.33\%                                                                  & 0.4121                                                             & 0.14\%                                                                & \textbf{0.00183}                                                  & 5.22\%                                                              & 89.56\%                                                                 & 2.84\%                                                                  & 0.379                                                              & 0.119\%                                                               & 0.001538                                                          & 4.80\%                                                              \\ \cline{1-1}
IR OvR                                                               & 86.49\%                                                                 & 3.30\%                                                                  & 1.3629                                                             & 0.16\%                                                                & 0.00197                                                           & 6.54\%                                                              & 89.55\%                                                                 & 3.86\%                                                                  & 1.234                                                              & 0.134\%                                                               & 0.001583                                                          & 5.00\%                                                              \\ \cline{1-1}
\rowcolor[HTML]{EFEFEF} 
TS                                                                   & 87.19\%                                                                 & 1.08\%                                                                  & 0.4112                                                             & 0.14\%                                                                & 0.00183                                                           & 4.71\%                                                              & 89.63\%                                                                 & 2.96\%                                                                  & 0.396                                                              & 0.122\%                                                               & 0.001571                                                          & 5.89\%                                                              \\ \cline{1-1}
VS                                                                   & 86.89\%                                                                 & 1.75\%                                                                  & 0.4347                                                             & 0.16\%                                                                & 0.00190                                                           & 5.59\%                                                              & 89.59\%                                                                 & 3.67\%                                                                  & 0.439                                                              & 0.137\%                                                               & 0.001634                                                          & 6.57\%                                                              \\ \cline{1-1}
\rowcolor[HTML]{EFEFEF} 
MS                                                                   & 83.08\%                                                                 & 7.10\%                                                                  & 0.6514                                                             & 0.22\%                                                                & 0.00252                                                           & 7.86\%                                                              & 88.75\%                                                                 & 5.36\%                                                                  & 0.537                                                              & 0.163\%                                                               & 0.001790                                                          & 6.62\%                                                              \\ \hline
\end{tabular}%
}
\end{sc}
\end{center}
\end{Large}
Both models are publicly available on HuggingFace.
\vskip -0.1in
\end{table*}

\begin{table}[h!]
\caption{BERT and XLNet calibration comparison}
\label{tab:bert-xlnet-comparison-NG20-R52}
\renewcommand{\arraystretch}{1}
\resizebox{\columnwidth}{!}{%
\begin{tabular}{|l|c|c|c|c|c|c|c|c|c|c|c|c|}
\hline
\rowcolor[HTML]{C0C0C0} 
{\color[HTML]{333333} \textbf{Model Architecture}}                                         & \multicolumn{6}{c|}{\cellcolor[HTML]{C0C0C0}{\color[HTML]{333333} \textbf{BERT-Base-Uncased}}}                                                                                                                                                                                                                                                                                                                      & \multicolumn{6}{c|}{\cellcolor[HTML]{C0C0C0}{\color[HTML]{333333} \textbf{XLNet-Base-Cased}}}                                                                                                                                                                                                                                                                                                                       \\ \hline
\rowcolor[HTML]{C0C0C0} 
\multicolumn{1}{|c|}{\cellcolor[HTML]{C0C0C0}}                                             & \multicolumn{1}{c|}{\cellcolor[HTML]{C0C0C0}}                           & \multicolumn{1}{c|}{\cellcolor[HTML]{C0C0C0}}                           & \multicolumn{1}{c|}{\cellcolor[HTML]{C0C0C0}}                      & \multicolumn{1}{c|}{\cellcolor[HTML]{C0C0C0}}                         & \multicolumn{1}{c|}{\cellcolor[HTML]{C0C0C0}}                     & \cellcolor[HTML]{C0C0C0}                       & \multicolumn{1}{c|}{\cellcolor[HTML]{C0C0C0}}                           & \multicolumn{1}{c|}{\cellcolor[HTML]{C0C0C0}}                           & \multicolumn{1}{c|}{\cellcolor[HTML]{C0C0C0}}                      & \multicolumn{1}{c|}{\cellcolor[HTML]{C0C0C0}}                         & \multicolumn{1}{c|}{\cellcolor[HTML]{C0C0C0}}                     & \cellcolor[HTML]{C0C0C0}                       \\
\rowcolor[HTML]{C0C0C0} 
\multicolumn{1}{|c|}{\multirow{-2}{*}{\cellcolor[HTML]{C0C0C0}\textbf{Calibration model}}} & \multicolumn{1}{c|}{\multirow{-2}{*}{\cellcolor[HTML]{C0C0C0}Accuracy}} & \multicolumn{1}{c|}{\multirow{-2}{*}{\cellcolor[HTML]{C0C0C0}conf-ECE}} & \multicolumn{1}{c|}{\multirow{-2}{*}{\cellcolor[HTML]{C0C0C0}NLL}} & \multicolumn{1}{c|}{\multirow{-2}{*}{\cellcolor[HTML]{C0C0C0}cw-ECE}} & \multicolumn{1}{c|}{\multirow{-2}{*}{\cellcolor[HTML]{C0C0C0}BS}} & \multirow{-2}{*}{\cellcolor[HTML]{C0C0C0}TECE} & \multicolumn{1}{c|}{\multirow{-2}{*}{\cellcolor[HTML]{C0C0C0}Accuracy}} & \multicolumn{1}{c|}{\multirow{-2}{*}{\cellcolor[HTML]{C0C0C0}conf-ECE}} & \multicolumn{1}{c|}{\multirow{-2}{*}{\cellcolor[HTML]{C0C0C0}NLL}} & \multicolumn{1}{c|}{\multirow{-2}{*}{\cellcolor[HTML]{C0C0C0}cw-ECE}} & \multicolumn{1}{c|}{\multirow{-2}{*}{\cellcolor[HTML]{C0C0C0}BS}} & \multirow{-2}{*}{\cellcolor[HTML]{C0C0C0}TECE} \\ \hline
Uncalibrated                                                                               & 95.90\%                                                                 & 2.34\%                                                                  & 0.1927                                                             & 0.169\%                                                               & 0.001341                                                          & 13.75\%                                        & 94.40\%                                                                 & 2.38\%                                                                  & 0.2626                                                             & 0.200\%                                                               & 0.001654                                                          & 13.40\%                                        \\ \hline
\rowcolor[HTML]{EFEFEF} 
NA-FIR                                                                                     & 95.80\%                                                                 & 1.46\%                                                                  & \textbf{0.1873}                                                    & 0.163\%                                                               & \textbf{0.001272}                                                 & 14.54\%                                        & 94.55\%                                                                 & 1.77\%                                                                  & \textbf{0.2513}                                                    & 0.206\%                                                               & \textbf{0.001627}                                                 & 15.88\%                                        \\ \cline{1-1}
SCIR                                                                                       & 95.90\%                                                                 & \textbf{1.34\%}                                                         & 0.1992                                                             & 0.162\%                                                               & 0.001324                                                          & 10.90\%                                        & 94.29\%                                                                 & 2.41\%                                                                  & 0.3746                                                             & 0.198\%                                                               & 0.001785                                                          & 13.28\%                                        \\ \cline{1-1}
\rowcolor[HTML]{EFEFEF} 
FIR                                                                                        & 95.90\%                                                                 & 1.75\%                                                                  & 0.1987                                                             & 0.163\%                                                               & 0.001279                                                          & 14.20\%                                        & 94.55\%                                                                 & 2.21\%                                                                  & 0.2562                                                             & 0.204\%                                                               & 0.001649                                                          & 15.55\%                                        \\ \cline{1-1}
IR OvR                                                                                     & 94.61\%                                                                 & 2.51\%                                                                  & 1.2967                                                             & 0.189\%                                                               & 0.001583                                                          & \textbf{7.65\%}                                & 94.29\%                                                                 & 2.65\%                                                                  & 1.2616                                                             & 0.210\%                                                               & 0.001782                                                          & 10.82\%                                        \\ \cline{1-1}
\rowcolor[HTML]{EFEFEF} 
TS                                                                                         & 95.90\%                                                                 & 2.00\%                                                                  & 0.1909                                                             & 0.175\%                                                               & 0.001339                                                          & 13.62\%                                        & 94.40\%                                                                 & 2.34\%                                                                  & 0.2551                                                             & 0.206\%                                                               & 0.001644                                                          & 12.51\%                                        \\ \cline{1-1}
VS                                                                                         & 95.95\%                                                                 & 1.59\%                                                                  & 0.1995                                                             & \textbf{0.160\%}                                                      & 0.001304                                                          & 12.56\%                                        & 94.81\%                                                                 & 2.12\%                                                                  & 0.2588                                                             & \textbf{0.197\%}                                                      & 0.001634                                                          & 13.58\%                                        \\ \cline{1-1}
\rowcolor[HTML]{EFEFEF} 
MS                                                                                         & 94.71\%                                                                 & 1.47\%                                                                  & 0.5322                                                             & 0.194\%                                                               & 0.001537                                                          & 7.92\%                                         & 93.41\%                                                                 & \textbf{1.52\%}                                                         & 0.5712                                                             & 0.232\%                                                               & 0.001887                                                          & \textbf{8.05\%}                                \\ \hline
\end{tabular}%
}
\vskip 0.1cm
R52 models - BERT base and XLNet base cased , both models were trained for this thesis. 
\vskip 0.1cm
\resizebox{\columnwidth}{!}{%
\begin{tabular}{|l|c|c|c|c|c|c|c|c|c|c|c|c|}
\hline
\rowcolor[HTML]{C0C0C0} 
{\color[HTML]{333333} \textbf{Model Archeticture}}                   & \multicolumn{6}{c|}{\cellcolor[HTML]{C0C0C0}{\color[HTML]{333333} \textbf{BERT-Base-Uncased}}}                                                                                                                                                                                                                                                                                                                      & \multicolumn{6}{c|}{\cellcolor[HTML]{C0C0C0}{\color[HTML]{333333} \textbf{XLNet-Base-Cased}}}                                                                                                                                                                                                                                                                                                                       \\ \hline
\rowcolor[HTML]{C0C0C0} 
\cellcolor[HTML]{C0C0C0}                                             & \multicolumn{1}{c|}{\cellcolor[HTML]{C0C0C0}}                           & \multicolumn{1}{c|}{\cellcolor[HTML]{C0C0C0}}                           & \multicolumn{1}{c|}{\cellcolor[HTML]{C0C0C0}}                      & \multicolumn{1}{c|}{\cellcolor[HTML]{C0C0C0}}                         & \multicolumn{1}{c|}{\cellcolor[HTML]{C0C0C0}}                     & \cellcolor[HTML]{C0C0C0}                       & \multicolumn{1}{c|}{\cellcolor[HTML]{C0C0C0}}                           & \multicolumn{1}{c|}{\cellcolor[HTML]{C0C0C0}}                           & \multicolumn{1}{c|}{\cellcolor[HTML]{C0C0C0}}                      & \multicolumn{1}{c|}{\cellcolor[HTML]{C0C0C0}}                         & \multicolumn{1}{c|}{\cellcolor[HTML]{C0C0C0}}                     & \cellcolor[HTML]{C0C0C0}                       \\
\rowcolor[HTML]{C0C0C0} 
\multirow{-2}{*}{\cellcolor[HTML]{C0C0C0}\textbf{Calibration model}} & \multicolumn{1}{c|}{\multirow{-2}{*}{\cellcolor[HTML]{C0C0C0}Accuracy}} & \multicolumn{1}{c|}{\multirow{-2}{*}{\cellcolor[HTML]{C0C0C0}conf-ECE}} & \multicolumn{1}{c|}{\multirow{-2}{*}{\cellcolor[HTML]{C0C0C0}NLL}} & \multicolumn{1}{c|}{\multirow{-2}{*}{\cellcolor[HTML]{C0C0C0}cw-ECE}} & \multicolumn{1}{c|}{\multirow{-2}{*}{\cellcolor[HTML]{C0C0C0}BS}} & \multirow{-2}{*}{\cellcolor[HTML]{C0C0C0}TECE} & \multicolumn{1}{c|}{\multirow{-2}{*}{\cellcolor[HTML]{C0C0C0}Accuracy}} & \multicolumn{1}{c|}{\multirow{-2}{*}{\cellcolor[HTML]{C0C0C0}conf-ECE}} & \multicolumn{1}{c|}{\multirow{-2}{*}{\cellcolor[HTML]{C0C0C0}NLL}} & \multicolumn{1}{c|}{\multirow{-2}{*}{\cellcolor[HTML]{C0C0C0}cw-ECE}} & \multicolumn{1}{c|}{\multirow{-2}{*}{\cellcolor[HTML]{C0C0C0}BS}} & \multirow{-2}{*}{\cellcolor[HTML]{C0C0C0}TECE} \\ \hline
Uncalibrated                                                         & 75.13\%                                                                 & 12.01\%                                                                 & 0.9437                                                             & 1.309\%                                                               & 0.01886                                                           & 11.00\%                                        & 75.36\%                                                                 & 12.80\%                                                                 & 0.9561                                                             & 1.386\%                                                               & 0.01881                                                           & 10.83\%                                        \\ \hline
\rowcolor[HTML]{EFEFEF} 
NA-FIR                                                               & 75.17\%                                                                 & 2.02\%                                                                  & \textbf{0.8583}                                                    & 0.720\%                                                               & 0.01733                                                           & \textbf{4.87\%}                                & 75.25\%                                                                 & \textbf{1.89\%}                                                         & 0.8468                                                             & 0.687\%                                                               & \textbf{0.01716}                                                  & 4.99\%                                         \\ \cline{1-1}
SCIR                                                                 & 75.11\%                                                                 & 4.01\%                                                                  & 0.9158                                                             & 0.805\%                                                               & 0.01756                                                           & 4.93\%                                         & 74.81\%                                                                 & 2.15\%                                                                  & 0.9400                                                             & \textbf{0.652\%}                                                      & 0.01748                                                           & \textbf{4.96\%}                                \\ \cline{1-1}
\rowcolor[HTML]{EFEFEF} 
FIR                                                                  & 75.17\%                                                                 & \textbf{1.76\%}                                                         & 0.8743                                                             & 0.712\%                                                               & \textbf{0.01731}                                                  & 4.88\%                                         & 75.27\%                                                                 & 3.47\%                                                                  & 0.8914                                                             & 0.680\%                                                               & 0.01731                                                           & 5.56\%                                         \\ \cline{1-1}
IR OvR                                                               & 75.19\%                                                                 & 1.86\%                                                                  & 1.0050                                                             & 0.695\%                                                               & 0.01751                                                           & 5.46\%                                         & 75.34\%                                                                 & 4.19\%                                                                  & 1.1708                                                             & 0.692\%                                                               & 0.01745                                                           & 5.59\%                                         \\ \cline{1-1}
\rowcolor[HTML]{EFEFEF} 
TS                                                                   & 75.13\%                                                                 & 6.20\%                                                                  & 0.8844                                                             & 0.852\%                                                               & 0.01769                                                           & 7.64\%                                         & 75.36\%                                                                 & 4.69\%                                                                  & \textbf{0.8467}                                                    & 0.735\%                                                               & 0.01739                                                           & 6.50\%                                         \\ \cline{1-1}
VS                                                                   & 75.23\%                                                                 & 2.43\%                                                                  & 0.8847                                                             & 0.814\%                                                               & 0.01775                                                           & 6.80\%                                         & 75.13\%                                                                 & 3.35\%                                                                  & 0.8536                                                             & 0.731\%                                                               & 0.01751                                                           & 6.20\%                                         \\ \cline{1-1}
\rowcolor[HTML]{EFEFEF} 
MS                                                                   & 74.98\%                                                                 & 1.95\%                                                                  & 0.9063                                                             & \textbf{0.643\%}                                                      & 0.01808                                                           & 5.55\%                                         & 74.45\%                                                                 & 4.88\%                                                                  & 0.9021                                                             & 0.802\%                                                               & 0.01807                                                           & 6.44\%                                         \\ \hline
\end{tabular}%
}

NG20 models - BERT base and XLNet base cased , both models were trained for this thesis.

\end{table}


\clearpage

\section{Isotonic Regression Properties} \label{iso-reg-appendix}

\begin{theorem}
    Let $S:[0,1]\times\{0,1\}\rightarrow(-\infty,\infty)$ be a proper scoring rule and $\mathcal{D}=\{(z_{1},y_{1}),...,(z_{n},y_{n})\}$. The isotonic optimization problem that is induced by this scoring rule and total order $\leq$:
\[
\min_{g} \sum_{i=1}^{n} S(g(z_{i}),y_{i})\cdot w_{i}
\quad \text{s.t.} \quad g(z_{i}) \leq g(z_{j}) \text{ when } z_{i}\leq z_{j}
\]

is solved by:
\[
\hat{y_{i}} = \min_{l}\max_{u}\frac{\sum_{j=u}^{l}w_{j}y_{j}}{\sum_{j=u}^{l}w_{j}}
\quad \text{s.t.} \quad u\le i\le l
\]
\label{thm:iso-proper}
\end{theorem}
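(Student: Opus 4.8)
The plan is to reduce the general proper-scoring-rule problem to the well-understood squared-error isotonic regression and then exploit two structural facts about the latter's solution. Throughout I assume the data are sorted so that $z_1\le\cdots\le z_n$, so the constraint reads $g_1\le\cdots\le g_n$. First I would recall that the min--max expression in the statement is exactly the solution of the $L_2$ problem \cref{Isotonic_regression} produced by PAVA \citep{Ayer1955An-empirical-di}; call it $\hat y$. Two standard properties of $\hat y$ are all I will use: (i) its level sets form consecutive blocks $B_1,\dots,B_K$ on which $\hat y$ equals the weighted mean $v_k=\big(\sum_{i\in B_k}w_iy_i\big)/\big(\sum_{i\in B_k}w_i\big)$; and (ii) Brunk's cumulative conditions $R_j:=\sum_{i=1}^{j}w_i(y_i-\hat y_i)\ge 0$ for all $j$, with $R_j=0$ at every block boundary and at $j=n$ (equivalently, the cumulative-sum diagram lies above its greatest convex minorant, whose slopes are the $\hat y_i$).

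Next I would convert the loss into a Bregman divergence. By the Savage representation of proper scoring rules \citep{gneiting2007strictly}, the Bayes loss $G(q):=\min_p\big[qS(p,1)+(1-q)S(p,0)\big]$ is concave; set $\psi:=-G$, which is convex. Since the affine map $q\mapsto qS(g,1)+(1-q)S(g,0)$ is tangent to $G$ at $q=g$, a short computation gives, for $y\in\{0,1\}$, $S(g,y)=D_\psi(y,g)+G(y)$, where $D_\psi(a,b):=\psi(a)-\psi(b)-\psi'(b)(a-b)\ge 0$ is the Bregman divergence generated by $\psi$. As $G(y_i)$ is independent of $g$, minimizing $\sum_i w_iS(g_i,y_i)$ is equivalent to minimizing $\sum_i w_iD_\psi(y_i,g_i)$, so it suffices to prove $\sum_i w_i[D_\psi(y_i,g_i)-D_\psi(y_i,\hat y_i)]\ge 0$ for every isotonic $g$.

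For the core inequality I would invoke the Bregman three-point identity $D_\psi(a,b)-D_\psi(a,c)=D_\psi(c,b)+(\psi'(c)-\psi'(b))(a-c)$ with $a=y_i$, $b=g_i$, $c=\hat y_i$, and sum with weights:
\[
\sum_i w_i\big[D_\psi(y_i,g_i)-D_\psi(y_i,\hat y_i)\big]=\sum_i w_iD_\psi(\hat y_i,g_i)+\sum_i w_i\big(\psi'(\hat y_i)-\psi'(g_i)\big)(y_i-\hat y_i).
\]
The first sum is nonnegative termwise, so it remains to show the cross term $T$ is nonnegative. By property (i), on each block $\psi'(\hat y_i)=\psi'(v_k)$ is constant while $\sum_{i\in B_k}w_i(y_i-v_k)=0$, so the $\psi'(\hat y_i)$ contribution vanishes and $T=-\sum_i w_i\psi'(g_i)(y_i-\hat y_i)$. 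Writing $h_i:=\psi'(g_i)$ and $r_i:=w_i(y_i-\hat y_i)$, Abel summation yields $\sum_i h_ir_i=\sum_{i=1}^{n-1}(h_i-h_{i+1})R_i$ (boundary terms drop since $R_0=R_n=0$). Because $\psi$ is convex and $g$ is isotonic, $h_i$ is non-decreasing, so $h_i-h_{i+1}\le 0$; with $R_i\ge 0$ from property (ii) every summand is $\le 0$, whence $T\ge 0$ and the claim follows.

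The main obstacle I anticipate is that the objective is \emph{not} convex in $g$ for a general proper scoring rule (the maps $p\mapsto S(p,0),S(p,1)$ need not be convex), so the clean ``projection onto a convex cone'' argument available for squared error does not apply; global optimality must be certified directly, which is precisely what the three-point-identity-plus-Abel computation accomplishes. A secondary technicality is differentiability: $G$, hence $\psi$, is only guaranteed concave/convex, so $\psi'$ should be read as a monotone selection of (sub)gradients. This costs nothing, since all I use is monotonicity of $\psi'$ and the inequality $D_\psi\ge 0$, both valid for subgradients; properness still forces the per-block minimizer to be the mean, because $\sum_{i\in B_k}w_iS(c,y_i)=\big(\sum_{i\in B_k}w_i\big)\,\big[v_kS(c,1)+(1-v_k)S(c,0)\big]$ is minimized at $c=v_k$. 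I would close by noting that the argument simultaneously reproves the classical fact that the PAVA output is optimal for every Bregman loss, with NLL and Brier as the special cases of interest here.
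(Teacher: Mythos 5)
Your proof is correct, but it takes a genuinely different route from the paper's. You reduce everything to classical $L_2$ isotonic theory — taking as known that the min--max formula is the PAVA/least-squares solution, its block-mean structure, and Brunk's cumulative inequalities $R_j=\sum_{i\le j}w_i(y_i-\hat y_i)\ge 0$ with $R_n=0$ — and then certify global optimality for an arbitrary proper score in one stroke via the Savage/Bregman representation $S(g,y)=D_\psi(y,g)+G(y)$, the three-point identity, and Abel summation; the sign bookkeeping (cross term vanishing blockwise, $(h_i-h_{i+1})R_i\le 0$ termwise) checks out, and your subgradient remark correctly handles nondifferentiable $\psi$, since the selection $\psi'(g)=S(g,0)-S(g,1)$ is automatically a monotone subgradient selection. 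The paper instead gives a self-contained decomposition argument: it proves isotonicity and the fixed-on-partition property of the min--max formula directly, establishes a unimodality (``convexity'') lemma for the expected score $S^*(q,p)$ from the Schervish representation, proves by induction a bounded isotonic-regression lemma for blocks whose maximal-average upper set is the whole block, and then glues the per-block clipped averages into the global solution. What your route buys: brevity, a direct global-optimality certificate in place of an induction, and it transparently recovers the classical fact that the PAVA output is optimal for every Bregman loss — indeed this is essentially the lineage the paper itself acknowledges (\citet{Barlow1972The-isotonic-re}; \citet{Brummer2013The-PAV-algorit}) as prior art that it deliberately reproves in a more elementary way. What the paper's route buys: self-containedness — it never invokes the GCM/Brunk machinery, only elementary manipulations of weighted averages — plus a bounded per-block lemma of some independent interest. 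Correspondingly, your argument is complete only modulo the cited $L_2$ facts (the min--max/PAVA identity, block means, and $R_j\ge 0$), which are standard and citable but are precisely the structural inputs the paper chose to derive from scratch.
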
 

Before presenting the proof, it is worth noting that \citet{Barlow1972The-isotonic-re}  had proved a functional relationship of the isotonic squared error minimization result to any other convex functions even for partial , demonstrating that the same solution holds for NLL and Brier score.  In addition, the above stated theorem was previously established by Brummer and Perez \cite{Brummer2013The-PAV-algorit}, a result that we were unaware of during the writing process. Nevertheless, we provide a much simpler and more accessible proof. 

\paragraph{Notations.}
Assume w.l.o.g. that $x_{i}\lesssim x_{j}$ whenever $i<j$. Define:

\begin{itemize}
    \item $\tilde{u}(i)=\{U\mid i\in U\}$: the set of possible upper sets for $i$.
    \item $\tilde{l}(i)=\{L\mid i\in L\}$: the set of possible lower sets for $i$.
    \item $\text{Avg}(S)=\frac{\sum_{k\in S}w_{k}y_{k}}{\sum_{k\in S}w_{k}}, S\subseteq\{1,\dots, n\}$: The weighted average of $S$.
    \item $S_{i}^{j}=i\rightarrow j=\{k\mid i\le k\leq j\}$: The set of indices from $i$ to $j$.
    \item $U(S)=\{U\mid U\subseteq S\land U\text{ is upper set in }S\}$: The set of all upper sets within a subset $S$.
\end{itemize}

two key properties of the described solution:

\begin{enumerate}
    \item \textbf{The solution is isotonic:} Since for $i < j$, $\tilde{l}(j) \subset \tilde{l}(i)$ and $\tilde{u}(i) \subset \tilde{u}(j)$, we have:
    \[
    \hat{g}(x_{i}) = \min_{\tilde{l}(i)} \max_{\tilde{u}(i)} \text{Avg}(\tilde{u}(i) \cap \tilde{l}(i))
    \leq \min_{\tilde{l}(j)} \max_{\tilde{u}(j)} \text{Avg}(\tilde{u}(i) \cap \tilde{l}(j))
    \leq \min_{\tilde{l}(j)} \max_{\tilde{u}(j)} \text{Avg}(\tilde{u}(j) \cap \tilde{l}(j)) = \hat{g}(x_{j}).
    \]

    \item \textbf{Fixed on Partition:} The described solution creates a partition $I$ of $\{1, 2, \ldots, n\}$ such that for every $A \in I \subset \{1, 2, \ldots, n\}$, it holds that $\hat{g}(x_{i}) = \hat{g}(x_{j})$ for all $i, j \in A$. Let $u_{i}^{*}$ and $l_{i}^{*}$ represent the respective maximizing upper and lower sets for the min-max solution, with $A = u_{i}^{*} \cap l_{i}^{*}$.  For any $j \in A$, if $j < i$, we have $\tilde{u}(j) \subset \tilde{u}(i)$. Thus, since $j \in A$, $u_{j}^{*} = u_{i}^{*}$, leading to $\hat{g}(x_{i}) = \hat{g}(x_{j})$.  If $i < j$, we know that $\text{Avg}(A) \leq \text{Avg}(S_{\min(u_{i}^{*})}^{j})$ since we minimize over $\tilde{l}(i)$. Thus, $\text{Avg}(S_{j}^{\max(u_{i}^{*})}) \leq \text{Avg}(A)$, leading to $\text{Avg}(S_{\min(u_{i}^{*})}^{n}) \geq \text{Avg}(S_{j}^{n})$ for all $j > i$. Therefore, $u_{j}^{*} = u_{i}^{*}$, which leads to $\hat{g}(x_{i}) = \hat{g}(x_{j})$.
\end{enumerate}

\begin{lemma}
    Convexity lemma for binary proper scoring rules. For a binary proper scoring rule $S^{*}$, the minimizer over the interval $[\underline{p}, \overline{p}]$ is given by:
\[
\arg\min_{\underline{p} \leq q \leq \overline{p}} S^{*}(q, p) = \max(\underline{p}, \min(\overline{p}, p)).
\]
\end{lemma}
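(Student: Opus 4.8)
The plan is to reduce the constrained minimization to a unimodality property of the expected score and then read off the constrained optimizer as a projection. Writing $S^{*}(q,p) := \mathbb{E}_{Y\sim \mathrm{Bernoulli}(p)}[S(q,Y)] = p\,S(q,1) + (1-p)\,S(q,0)$, the one structural fact I would exploit is that \emph{for each fixed forecast $q$ the map $p \mapsto S^{*}(q,p)$ is affine}, while properness (the binary specialization of the proper scoring rule definition) guarantees $\arg\min_{q} S^{*}(q,p) = p$ for every $p$, i.e.\ $S^{*}(p,p) \le S^{*}(q,p)$ for all $q$.

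The core step is to show that, for fixed truth $p$, the map $q \mapsto S^{*}(q,p)$ is non-increasing on $\{q \le p\}$ and non-decreasing on $\{q \ge p\}$ (quasi-convex with minimizer $p$). To establish it I would fix $q_{1} < q_{2}$ and consider $\phi(p) := S^{*}(q_{2},p) - S^{*}(q_{1},p)$, which is affine in $p$ by the observation above. Properness at $p = q_{1}$ gives $S^{*}(q_{1},q_{1}) \le S^{*}(q_{2},q_{1})$, i.e.\ $\phi(q_{1}) \ge 0$; properness at $p = q_{2}$ gives $\phi(q_{2}) \le 0$. An affine function that is non-negative at the smaller argument $q_{1}$ and non-positive at the larger argument $q_{2}$ has non-positive slope, so $\phi$ is non-increasing. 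Hence $\phi(p) \le \phi(q_{2}) \le 0$ for $p \ge q_{2}$, giving $S^{*}(q_{2},p) \le S^{*}(q_{1},p)$ whenever $q_{1} < q_{2} \le p$; and $\phi(p) \ge \phi(q_{1}) \ge 0$ for $p \le q_{1}$, giving $S^{*}(q_{2},p) \ge S^{*}(q_{1},p)$ whenever $p \le q_{1} < q_{2}$. These are precisely the two claimed monotonicities.

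Given unimodality, I would conclude with a three-way case split on the position of $p$ relative to $[\underline{p},\overline{p}]$. If $\underline{p} \le p \le \overline{p}$, the unconstrained minimizer $p$ is feasible and therefore constrained-optimal, matching $\max(\underline{p},\min(\overline{p},p)) = p$. If $p < \underline{p}$, then every $q \in [\underline{p},\overline{p}]$ satisfies $q \ge p$, so $q \mapsto S^{*}(q,p)$ is non-decreasing on the interval and the minimizer is the left endpoint $\underline{p} = \max(\underline{p},\min(\overline{p},p))$; symmetrically, if $p > \overline{p}$ the map is non-increasing on the interval and the minimizer is $\overline{p}$. In all cases the minimizer equals the projection $\max(\underline{p},\min(\overline{p},p))$ of $p$ onto $[\underline{p},\overline{p}]$.

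The only delicate point is the core step, and the trick that keeps it elementary is precisely the affinity of $p \mapsto S^{*}(q,p)$: it turns the two properness inequalities at $q_{1}$ and $q_{2}$ directly into a sign constraint on the slope of $\phi$, so no differentiability of $S$, Savage representation, or concavity of the generalized entropy is needed. I would only remark that non-strict properness yields the projection as \emph{a} minimizer while strict properness makes it unique, which is all that is required for the pooled block values in the isotonic argument of \cref{thm:iso-proper} to be constrained optima of exactly this form.
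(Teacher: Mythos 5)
Your proof is correct, but it takes a genuinely different route from the paper's. The paper invokes the Schervish/Savage representation of a binary proper scoring rule, writing $S(q,1)=e(q)+(1-q)e'(q)$ and $S(q,0)=e(q)-qe'(q)$ for a concave $e$, so that $\frac{\partial S^{*}(q,p)}{\partial q}=(p-q)e''(q)$; the sign of $e''$ then gives unimodality of $q\mapsto S^{*}(q,p)$ with minimum at $q=p$, and the projection conclusion follows. Your argument reaches the same unimodality without any representation theorem or differentiability: you exploit only the affinity of $p\mapsto S^{*}(q,p)$ together with the two properness inequalities at $p=q_{1}$ and $p=q_{2}$, turning them into a sign constraint on the slope of the affine difference $\phi(p)=S^{*}(q_{2},p)-S^{*}(q_{1},p)$, which yields monotonicity of $q\mapsto S^{*}(q,p)$ on either side of $p$; the three-way case split then gives the projection exactly as in the paper. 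What your approach buys is generality and self-containedness: the paper's displayed computation implicitly assumes $e$ is twice differentiable (a regularity hypothesis not stated in the lemma and not needed for, e.g., piecewise-linear scoring rules), whereas yours uses nothing beyond the definition of properness, and it localizes cleanly where strictness matters (non-strict properness gives the projection as \emph{a} minimizer, strict properness gives uniqueness), which is all that the pooling argument in \cref{thm:iso-proper} requires. What the paper's route buys is brevity and a direct link to the standard entropy-based structure theory of proper scoring rules, at the cost of the extra smoothness assumption.
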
 \begin{proof}
    By definition:
\[
S^{*}(q, p) = pS(q, 1) + (1-p)S(q, 0),
\]
and using the properties of proper scoring rules (see \citet{Schervish1989A-general-metho}), we have:
\[
S(q, 1) = e(q) + (1-q)e'(q), \quad S(q, 0) = e(q) - qe'(q),
\]
where $e$ is a concave function. Thus:
\[
S^{*}(q, p) = e(q) + (p-q)e'(q),
\]
and the derivative of $S^{*}$ with respect to $q$ is:
\[
\frac{\partial S^{*}(q, p)}{\partial q} = e'(q) + (p-q)e''(q) - e'(q) = (p-q)e''(q).
\]
Since $e$ is concave, $e''$ is non-positive. Therefore, $S(q, p)$ is monotonic decreasing for $q \leq p$ and monotonic increasing for $q > p$. Thus, the minimum will be obtained at:
\[
\max(\underline{p}, \min(\overline{p}, p)).
\]

\end{proof} \begin{lemma}
    Let $y_{1}, y_{2}, \dots, y_{n}$ be a series where $y_{i} \in \{0,1\}$, with respective weights $w_{1}, w_{2}, \dots, w_{n}$. Suppose the following property holds:

\[
(*) \quad \arg\max_{U} \text{Avg}(U) = \{1,2,\dots,n\} = [n]
\]

Then the bounded isotonic regression:

\[
\max_{g} \sum_{i=1}^{n} S(g(i), y_{i}) w_{i}
\quad \text{s.t.} \quad g(i) \leq g(j) \text{ for } i \leq j; \quad \underline{p} \leq g \leq \overline{p}
\]

is solved by:

\[
\hat{g} = \max(\underline{p}, \min(\overline{p}, \text{Avg}([n]))).
\]
\end{lemma}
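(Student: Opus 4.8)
The plan is to reduce the program to optimizing over constant functions, where the preceding convexity lemma applies directly, and then to use the hypothesis $(*)$ to show the global optimum is in fact constant. Write $m := \text{Avg}([n])$ and $c := \max(\underline p, \min(\overline p, m))$. The first thing I would record is the aggregation identity: for any block of indices $A$ with total weight $W_A = \sum_{i \in A} w_i$,
\[
\sum_{i \in A} w_i\, S(v, y_i) = W_A\big(\text{Avg}(A)\,S(v,1) + (1-\text{Avg}(A))\,S(v,0)\big) = W_A\, S^{*}(v, \text{Avg}(A)),
\]
since each $y_i \in \{0,1\}$. Taking $A = [n]$ shows that over constant functions $g \equiv v$ the objective equals $W\, S^{*}(v,m)$, so by the convexity lemma its minimizer over $[\underline p, \overline p]$ is exactly $c$. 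It therefore suffices to prove that some optimal $\hat g$ (which exists, as the feasible set is compact and the objective continuous on it) is constant.

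Next I would translate $(*)$ into two averaging inequalities. In the total order the upper sets are exactly the suffixes $\{k, \dots, n\}$, so $(*)$ states that $\text{Avg}(\{k,\dots,n\}) \le \text{Avg}([n]) = m$ for every $k$. Because $m$ is a convex combination of the average of any prefix $\{1,\dots,k\}$ and the average of its complementary suffix, the suffix bound immediately forces $\text{Avg}(\{1,\dots,k\}) \ge m$ for every prefix. Thus every suffix average is $\le m$ and every prefix average is $\ge m$; since $\text{clamp}(x) := \max(\underline p,\min(\overline p,x))$ is nondecreasing, a suffix block clamps to a value $\le c$ and a prefix block clamps to a value $\ge c$.

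To finish, I would decompose $\hat g$ into its maximal constant blocks and show that each block value equals the clamp of that block's average. Fixing all other blocks, the value $v_A$ of a block $A$ ranges over an interval bounded below and above by the neighbouring block values and by the box, and optimality of $\hat g$ forces $v_A$ to minimize $\sum_{i\in A} w_i S(v, y_i) = W_A S^{*}(v, \text{Avg}(A))$ over that interval; by the convexity lemma $v_A$ is the clamp of $\text{Avg}(A)$ to it. Since the blocks are maximal their values are strictly increasing, so the neighbour constraints are inactive and $v_A = \text{clamp}_{[\underline p, \overline p]}(\text{Avg}(A))$. Now the topmost block is a suffix, hence its value is $\le c$, while the bottom block is a prefix, hence its value is $\ge c$; monotonicity gives $(\text{bottom value}) \le (\text{top value})$, and the sandwich $c \le (\text{bottom value}) \le (\text{top value}) \le c$ collapses all block values to $c$. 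Hence $\hat g \equiv c = \max(\underline p, \min(\overline p, \text{Avg}([n])))$, as claimed.

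The main obstacle is the step asserting that each block's value is the clamp of its block average without circularly invoking the PAVA/min-max optimality that this lemma is meant to help establish. The care needed is twofold: one must deduce blockwise optimality from global optimality and then apply the convexity lemma on the correct one-dimensional interval, and one must use maximality of the blocks to rule out active neighbour constraints so that the relevant interval is the full box $[\underline p, \overline p]$. It is also worth emphasizing that the individual terms $f_i(v) = w_i S(v, y_i)$ need not be convex — only unimodal with minimizer controlled as in the convexity lemma — so the argument must rest on that lemma rather than on global convexity of the objective.
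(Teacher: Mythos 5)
Your proof is correct, but it takes a genuinely different route from the paper's. The paper proves this lemma by induction on $n$: it locates the largest prefix satisfying $(*)$, shows via a fairly intricate contradiction argument (a chain of average inequalities) that both that prefix and the complementary suffix again satisfy $(*)$, applies the induction hypothesis to each half, and then uses the convexity lemma to force the two resulting constants to coincide, finishing with the same aggregation identity $\sum_i w_i S(q,y_i) = W\, S^{*}(q,\text{Avg}([n]))$ that you record at the outset. You instead argue variationally on a global optimizer: existence by compactness, decomposition into maximal constant blocks, blockwise optimality plus the convexity lemma to identify each block value as the clamp of its block average to the box, and then the translation of $(*)$ into ``suffix averages $\le m \le$ prefix averages'' to sandwich the bottom and top block values and collapse every block value to the single constant $c$. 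Your route avoids the paper's delicate splitting step entirely, is closer to the classical block-average characterization of isotonic solutions, and proves slightly more (every optimizer is constant, not merely that the constant is optimal); what it costs is the appeal to existence of a minimizer, which requires continuity of $S(\cdot,y)$ on the box --- harmless under the Schervish representation already invoked in the convexity lemma, but an assumption the paper's inductive argument never needs. One step you should spell out is the claim that the neighbour constraints are inactive: the clean way is to note that if the clamp of $\text{Avg}(A)$ to the feasible interval $[\max(v_{A^-},\underline{p}),\min(v_{A^+},\overline{p})]$ landed on a neighbouring block's value, then block $A$ would share that neighbour's value, contradicting maximality of the blocks; hence only the box can be active and $v_A=\max(\underline{p},\min(\overline{p},\text{Avg}(A)))$ in all cases (including the bottom and top blocks, where one side of the interval genuinely is the box). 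With that case analysis made explicit, the argument is complete.
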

\begin{proof}
     By induction. For $n=1$, the solution follows directly from the properties of the scoring rule $S$, as $S(\bullet, 1)$ is monotonic increasing and $S(\bullet, 0)$ is monotonic decreasing. Assume the result holds for any series of length $\leq n$, and prove for $n+1$.

Let $y_{1}, y_{2}, \dots, y_{n}, y_{n+1}$ be a series holding $(*)$. We will assume $y_{n+1} = 0$, as otherwise the series is entirely composed of $1$s, and the statement follows naturally. Let $k$ be the index of the maximal appearing sequence which holds $(*)$, meaning:

\[
k = \max_{i \leq n} \left(\{i \mid \arg\max_{U \in U(S_{1}^{i})} \text{Avg}(U) = [i]\}\right).
\]

We will show that both $A_{0} = \{y_{1}, y_{2}, \dots, y_{k}\}$ and $A_{1} = \{y_{k+1}, \dots, y_{n+1}\}$ hold $(*)$. The set $A_{0}$ holds $(*)$ by its selection. For $A_{1}$, assume it does not hold. Then there exists $k+1 < l \leq n+1$ such that:

\[
\arg\max_{U \in U(S_{k+1}^{n+1})} \text{Avg}(U) = S_{l}^{n+1},
\]

yielding the following inequalities:

\begin{enumerate}
    \item $\text{Avg}(S_{l}^{n+1}) \geq \text{Avg}(S_{k+1}^{n+1})$
    \item $\text{Avg}(S_{l}^{n+1}) \geq \text{Avg}(S_{k+1}^{l-1})$
    \item $\text{Avg}(S_{l}^{n+1}) \leq \text{Avg}(S_{1}^{n+1})$
    \item $\text{Avg}(S_{l}^{n+1}) \leq \text{Avg}(S_{1}^{l-1})$.
\end{enumerate}

Inequalities (3) and (4) hold because $\{y_{1}, \dots, y_{n+1}\}$ satisfies $(*)$. Now, since $l-1 > k$, there exists an index $1 < l' < l-1$ such that:

\[
S_{l'}^{l-1} = \arg\max_{U \in U(S_{1}^{l-1})} \text{Avg}(U),
\]

and in particular, this implies:

\[
(5) \quad \text{Avg}(S_{1}^{l-1}) \leq \text{Avg}(S_{l'}^{l-1}).
\]

If $l' > k$, it would contradict the fact that $S_{l}^{n+1}$ is the maximal upper set for $A_{1}$. From (4) and (5), we have:

\[
(6)\text{  } \text{Avg}(S_{l'}^{n+1}) = \lambda \text{Avg}(S_{l'}^{l-1}) + (1-\lambda) \text{Avg}(S_{l}^{n+1}) \geq \lambda \text{Avg}(S_{1}^{l-1}) + (1-\lambda) \text{Avg}(S_{l}^{n+1}) \geq \text{Avg}(S_{l}^{n+1}),
\]

thus $l' \leq k$. This allows us to rewrite (5) as:

\[
(1-w_{1})\text{Avg}(S_{l'}^{k}) + w_{1}\text{Avg}(S_{k+1}^{l-1}) \geq (1-w_{2})\text{Avg}(S_{1}^{k}) + w_{2}\text{Avg}(S_{k+1}^{l-1}) 
\]
where:

\[
w_{1} = \frac{\sum_{i=k+1}^{l-1} w_{i}}{\sum_{i=l'}^{l-1} w_{i}}, \quad w_{2} = \frac{\sum_{i=k+1}^{l-1} w_{i}}{\sum_{i=1}^{l-1} w_{i}}.
\]

Noting that $w_{1} \geq w_{2}$ and the fact that $S_{l'}^{k} \geq S_{k+1}^{l}$, we obtain from (6):

\[
(1-w_{1})\text{Avg}(S_{l'}^{k}) + (w_{1}-w_{2})\text{Avg}(S_{l'}^{k}) \geq (1-w_{2})\text{Avg}(S_{1}^{k}),
\]

which implies $\text{Avg}(S_{l'}^{k}) \geq \text{Avg}(S_{1}^{k})$, contradicting the assumption that $A_{0}$ holds $(*)$. Therefore, both $A_{0}$ and $A_{1}$ hold $(*)$, and we can apply our induction theorem to them.

marking $q_{1}, q_{2}$ for $\hat{g}|_{S_{1}^{k}}$ and $\hat{g}|_{S_{k+1}^{n+1}}$, respectively, we have:

\[
q_{2} =
\begin{cases}
\text{Avg}(S_{k+1}^{n+1}) & \text{if } q_{1} \leq \text{Avg}(S_{k+1}^{n+1}) \leq \overline{p}, \\
q_{1} & \text{if } \text{Avg}(S_{k+1}^{n+1}) \leq q_{1} \leq \overline{p}, \\
\overline{p} & \text{if } q_{1} \leq \overline{p} \leq \text{Avg}(S_{k+1}^{n+1}).
\end{cases}
\]

Using the convexity lemma for proper scoring rules, and since $\text{Avg}(S_{k+1}^{n+1}) \leq \text{Avg}(S_{1}^{k})$ as $\{y_{1}, \dots, y_{n+1}\}$ holds $(*)$, we get that if $q_{2} > q_{1}$, we can further decrease our score by increasing $q_{1}$, leading to the conclusion that in the optimal solution $q_{1} = q_{2} = q^{*}$. Now:

\[
\max_{q} \sum_{i=1}^{n+1} S(q, y_{i})w_{i} =
\max_{q} \sum_{i=1}^{n+1} \frac{S(q, y_{i})w_{i}}{\sum_{j=1}^{n+1} w_{j}} =
\max_{q} \frac{\sum_{j=1}^{n+1} w_{j} \cdot y_{j}}{\sum_{j=1}^{n+1} w_{j}} S(q, 1) + \frac{\sum_{j=1}^{n+1} w_{j} \cdot (1-y_{j})}{\sum_{j=1}^{n+1} w_{j}} S(q, 0).
\]

Let $\tilde{p} = \frac{\sum_{j=1}^{n+1} w_{j} \cdot y_{j}}{\sum_{j=1}^{n+1} w_{j}}$. Then:

\[
\max_{q} {\tilde{p}S(q, 1) + (1-\tilde{p})S(q, 0)} =
\sup_{q} S^{*}(q, \tilde{p}) =
\max(\underline{p}, \min(\overline{p}, \tilde{p})),
\]

leading to our desired claim.
\end{proof}
From the convexity lemma and induction base cases, and with our two earlier observations,  \cref{thm:iso-proper} proof becomes straightforward.
\begin{proof}
    Let $I$ be a partition of $\{1, 2, \dots, n\}$ achieved by the described solution. We notice that every $A \in I$ holds the lemma property, as for every group, the maximization set for the given lower set is the same, including the last element in the group. Knowing that:

\[
\min_{g} \sum_{i=1}^{n} S(g(x_{i}), y_{i})w_{i} \geq \sum_{A \in I} \min_{g} \sum_{i \in A} S(g(x_{i}), y_{i})w_{i},
\]

subject to:

\[
g(x_{i}) \leq g(x_{j}) \text{ when } x_{i} \lesssim x_{j},
\]

and given the lemma, we know that the solution for the latter problem is:

\[
\forall A \in I, \forall i \in A: \hat{g}(x_{i}) = \text{Avg}(A).
\]

Given the isotonic nature of this solution, it is also feasible for the first minimization problem, proving our desired result.

\end{proof}


\end{document}